\def\eqref#1{equation~\ref{#1}}
\def\1{\bm{1}}
\DeclareMathAlphabet{\mathsfit}{\encodingdefault}{\sfdefault}{m}{sl}
\SetMathAlphabet{\mathsfit}{bold}{\encodingdefault}{\sfdefault}{bx}{n}
\newcommand{\R}{\mathbb{R}}
\newcommand\numberthis{\addtocounter{equation}{1}\tag{\theequation}}
\theoremstyle{plain}
\newtheorem{thm}{Theorem}[section]
\newtheorem{lem}[thm]{Lemma}
\theoremstyle{definition}
\newtheorem{defn}{Definition}[section]
\theoremstyle{remark}
\newcommand{\squishlist}{
	\begin{list}{$\bullet$}
		{
			\setlength{\itemsep}{0pt}
			\setlength{\parsep}{3pt}
			\setlength{\topsep}{3pt}
			\setlength{\partopsep}{0pt}
			\setlength{\leftmargin}{1em}
			\setlength{\labelwidth}{0.5em}
			\setlength{\labelsep}{0.5em} } }
\newcommand{\squishend}{
\end{list}  }
\newcommand{\squishlistnum}{
	\begin{enumerate}
		{
			\setlength{\itemsep}{0pt}
			\setlength{\parsep}{3pt}
			\setlength{\topsep}{3pt}
			\setlength{\partopsep}{0pt}
			\setlength{\leftmargin}{1.5em}
			\setlength{\labelwidth}{1em}
			\setlength{\labelsep}{0.5em} } }
\newcommand{\squishendnum}{
\end{enumerate}  }
\definecolor{caseycolor}{RGB}{78,90,255}
\newcommand{\bx}{\mathbf{x}}
\newcommand{\by}{\mathbf{y}}
\newcommand{\bz}{\mathbf{z}}
\newcommand{\bt}{\mathbf{t}}
\newcommand{\calX}{\mathcal{X}}
\newcommand{\calT}{\mathcal{T}}
\newcommand{\calP}{\mathcal{P}}
\newcommand{\calA}{\mathcal{A}}
\newcommand{\calG}{\mathcal{G}}
\newcommand{\calY}{\mathcal{Y}}
\theoremstyle{plain}
\newtheorem{lemma}{Lemma} 
\newtheorem{prope}{Property}
\theoremstyle{definition}
\newcommand{\DP}{\textsf{DP}~}
\newcommand{\ldp}{\textsf{LDP}~}
\newcommand{\DO}{\textsf{DO}}
\newcommand{\name}{$d_\sigma$}
\title{Privacy Implications of Shuffling}
\author{%
  Casey Meehan \\
  UC San Diego\\
  \texttt{cmeehan@eng.ucsd.edu} \\
  % examples of more authors
   \And
   Amrita Roy Chowdhury \\
   University of Wisconsin-Madison \\
   \texttt{roychowdhur2@wisc.edu} \\
   \AND
   Kamalika Chaudhuri \\
   UC San Diego\\
   \texttt{kamalika@eng.ucsd.edu} \\
   \And 
   Somesh Jha \\
   University of Wisconsin-Madison \\
   jha@cs.wisc.edu
}
\begin{document}
\maketitle

\begin{abstract}
\ldp deployments are vulnerable to inference attacks as an adversary can link the noisy responses to their identity and subsequently, auxiliary information using the \textit{order} of the data. An alternative model, shuffle \textsf{DP}, prevents this by shuffling the noisy responses uniformly at random.  However, this limits the data learnability -- only symmetric functions (input order agnostic) can be learned. In this paper, we strike a balance and show that systematic shuffling of the noisy responses can thwart specific inference attacks while retaining some meaningful data learnability. To this end, we propose a novel privacy guarantee, \name-privacy, that captures the privacy of the order of a data sequence. \name-privacy allows tuning the granularity at which the ordinal information is maintained, which formalizes the degree the resistance to inference attacks trading it off with data learnability.  Additionally, we propose a novel shuffling mechanism that can achieve \name-privacy and demonstrate the practicality of our mechanism via evaluation on real-world datasets. 
\end{abstract}

% keywords can be removed
% \keywords{Data privacy \and Second keyword \and More}

% \vspace{-0.4cm} 
\section{Introduction}
\label{sec:intro}\vspace{-0.25cm}
Differential Privacy (\textsf{DP}) and its local variant (\textsf{LDP}) are the most commonly accepted notions of data privacy. \ldp has the significant advantage of not requiring a trusted centralized aggregator, and has become a popular model for commercial deployments, such as those of Microsoft~\citep{Microsoft}, Apple~\citep{Apple}, and Google~\citep{Rappor1,Rappor2,Prochlo}. Its formal guarantee asserts that an adversary cannot infer the value of an individual's private input by observing the noisy output. However in practice, a vast amount of \textit{public auxiliary information}, such as address, social media connections, %(in terms of friends/followers), 
court records, property records, %\citep{home},
income and birth dates \citep{birth}, is available for every individual. An adversary, with access to such auxiliary information, \emph{can} learn about an individual's private data from several \emph{other} participants' noisy responses. We illustrate this as follows.%\vspace{-0.1cm}Its formal guarantee asserts that an adversary does not learn much about a participant's private input after observing the mechanism's noisy representation of it. However, an adversary \emph{can} draw reliable inferences about a participant's private input after observing several \emph{other} participants' noisy responses. Practically speaking, this is due to the fact that \ldp does not formally hide the identity of data owners: e.g. an \ldp response from a participant's browser can still be linked to that participant's device or account and thereby to their identity. With the preponderance of public auxiliary information available today (e.g. address, social media connections, court records, property records \nocite{home}, income and birth rates \citep{birth}), an adversary can track down other participants who (positively or negatively) correlate with them and leverage those responses to make a reliable inference of their underlying private input. We illustrate this as follows: 

% However in practice, a vast amount of \textit{public auxiliary information}, such as address, social media connections, %(in terms of friends/followers), 
% court records, property records \cite{home}, income and birth dates \citep{birth}, is available for every individual. An adversary with access to such auxiliary information \emph{can} learn about an individual's private data from several \emph{other} participants' noisy responses. We illustrate this as follows.%\vspace{-0.1cm}

\begin{tcolorbox}\vspace{-0.25cm}
\textbf{Problem.} An analyst runs a medical survey in Alice's community to investigate how the prevalence of a highly contagious disease changes from neighborhood to neighborhood. Community members report a binary value indicating whether they have the disease.  \vspace{-0.2cm}
\end{tcolorbox}\vspace{-0.1cm}
Next, consider the following two data reporting strategies. \vspace{-0.1cm}
\begin{tcolorbox}\vspace{-0.2cm} \textbf{Strategy $\mathbf{1}$.} Each data owner passes their data through an appropriate randomizer (that flips the input bit with some probability) in their local devices and reports the noisy output to the untrusted data analyst. 
\vspace{-0.2cm}
\end{tcolorbox}\vspace{-0.1cm}
\begin{tcolorbox}\vspace{-0.2cm} \textbf{Strategy $\mathbf{2}$.}  The noisy responses from the local devices of each of the data owners %\footnote{The individual  responses are appropriately anonymized by removing all identifiable information such as IP address of packets.} 
are collected by an intermediary trusted shuffler which dissociates the device IDs (metadata) from the responses and  uniformly randomly shuffles them before sending them to the analyst.\vspace{-0.2cm}
\end{tcolorbox}\vspace{-0.1cm}
\textbf{Strategy $\mathbf{1}$} corresponds to the standard \ldp deployment model (for example, Apple and Microsoft's deployments). Here \textit{the order of the noisy responses is informative of the identity of the data owners} -- the noisy response at index $1$ corresponds to the first data owner and so on. Thus, the noisy responses can be directly linked with its associated device/account ID and subsequently, auxiliary information. This puts Alice's data under the threat of inference attacks.
For instance, an adversary\footnote{The analyst and the adversary could be same, we refer to them separately for the ease of understanding.} may know the home addresses of the participants and use this to identify the responses of all the individuals from Alice's household.  
Being highly infectious, all or most of them 
 %\vspace{-0.4cm} 
\begin{figure*}[t]
\begin{minipage}{0.2\linewidth}
\begin{subfigure}[b]{\linewidth}
\centering
    \includegraphics[width=0.9\columnwidth]{./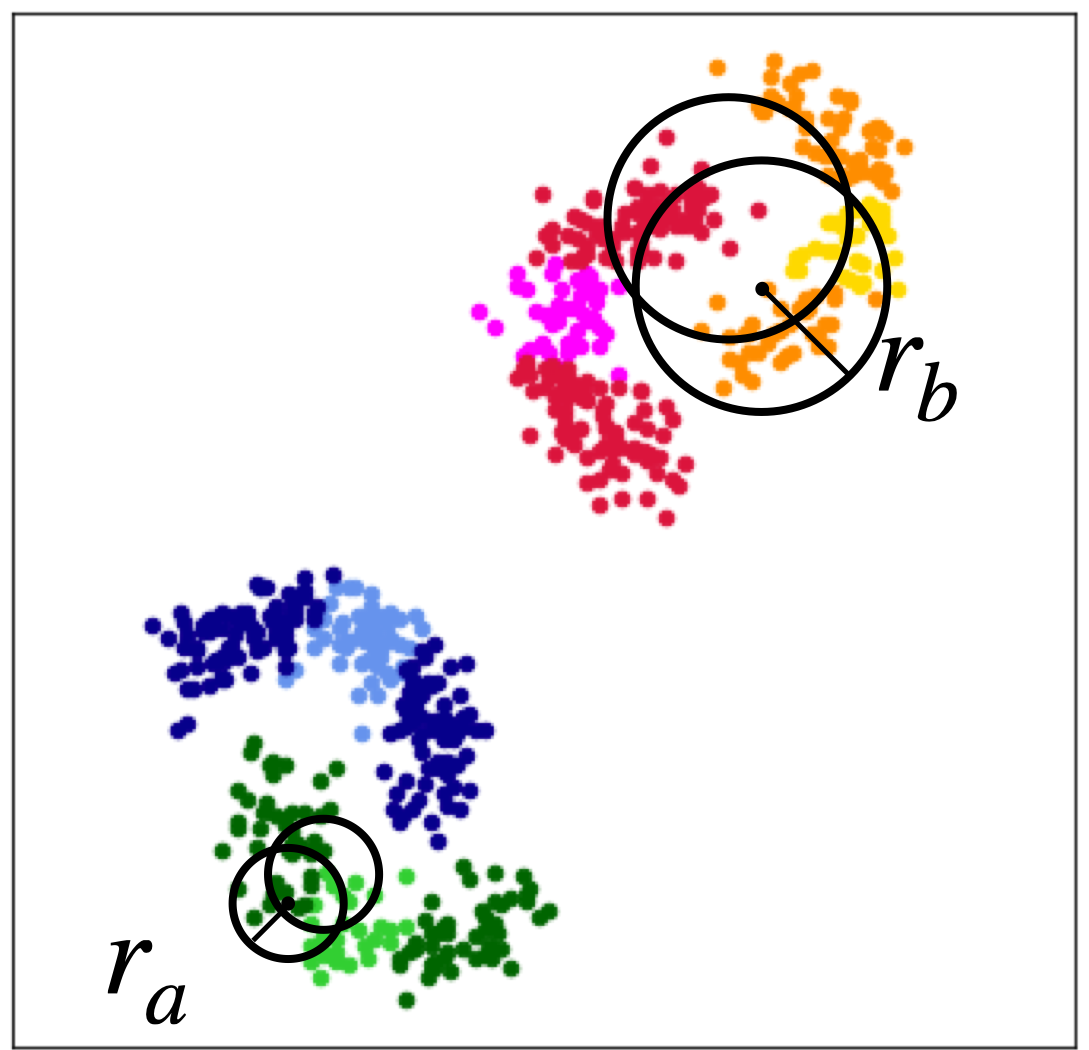}
        \caption{Original Data}
        \label{fig:data}
    \end{subfigure}
    \end{minipage}
    \begin{minipage}{0.8\linewidth}
    \begin{subfigure}[b]{0.25\linewidth}\centering
    \includegraphics[width=0.67\columnwidth]{./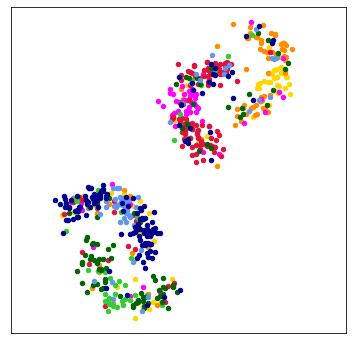}\vspace{-0.25cm}
        \caption{\ldp}
        \label{fig:ldp}
    \end{subfigure}%%
    \begin{subfigure}[b]{0.25\linewidth}\centering
    \includegraphics[width=0.67\columnwidth]{./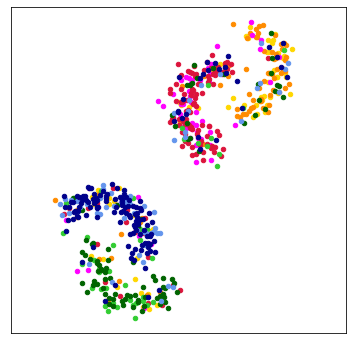}\vspace{-0.25cm}
        \caption{Our scheme: $r_a$}
        \label{fig:r}\end{subfigure}%%
    \begin{subfigure}[b]{0.25\linewidth}\centering
    \includegraphics[width=0.67\columnwidth]{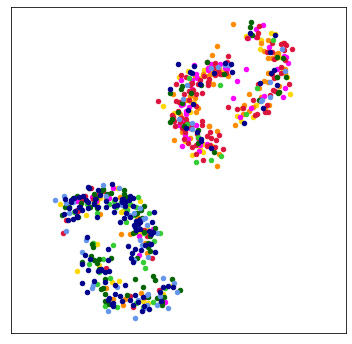}\vspace{-0.25cm}
        \caption{Our scheme: $r_b$}
        \label{fig:R}\end{subfigure}%%
      \begin{subfigure}[b]{0.25\linewidth}\centering
    \includegraphics[width=0.67\columnwidth]{./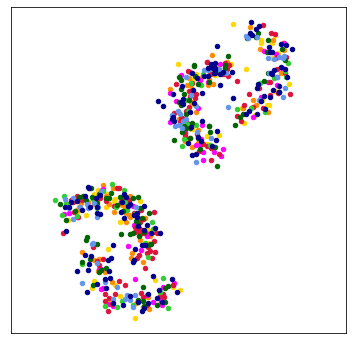}\vspace{-0.25cm}
        \caption{Uniform shuffle}
        \label{fig:uniform}
    \end{subfigure}\\ \vspace{-0.05cm}
    \begin{subfigure}[b]{0.25\linewidth}\centering
    \includegraphics[width=0.67\columnwidth]{./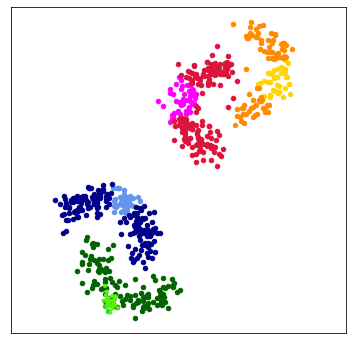}\vspace{-0.25cm}
        \caption{Attack: LDP}
        \label{fig:ldp:attack}
    \end{subfigure}%%
    \begin{subfigure}[b]{0.25\linewidth}\centering
  \includegraphics[width=0.67\columnwidth]{./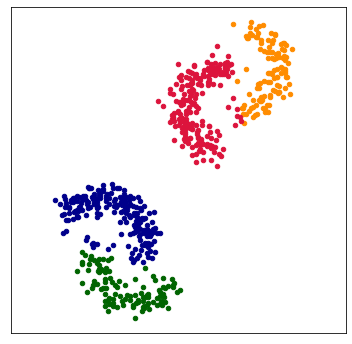}\vspace{-0.25cm}
       \caption{Attack: $r_a$}
        \label{fig:r:attack}\end{subfigure}%%
    \begin{subfigure}[b]{0.25\linewidth}\centering
   \includegraphics[width=0.67\columnwidth]{./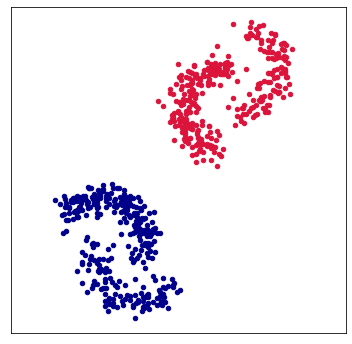}\vspace{-0.25cm}
        \caption{Attack: $r_b$}
        \label{fig:R:attack}\end{subfigure}%%
      \begin{subfigure}[b]{0.25\linewidth}\centering
   \includegraphics[width=0.67\columnwidth]{./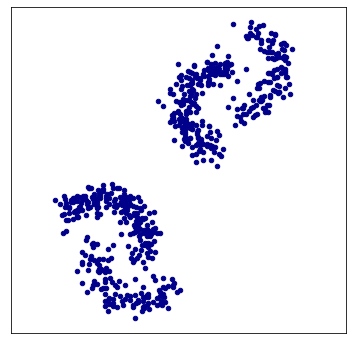}\vspace{-0.25cm}
        \caption{Attack: unif. shuff.}
        \label{fig:uniform:attack}
    \end{subfigure}
 \end{minipage}\vspace{-0.15cm}
    \caption{Demonstration of how our proposed scheme thwarts inference attacks at different granularities. Fig. \ref{fig:data}  depicts the original sensitive data (such as income bracket) with eight color-coded labels. The position of the points represents public information (such as home address) used to correlate them. There are three levels of granularity: warm vs. cool clusters, blue vs. green and red vs. orange crescents, and light vs. dark within each crescent. Fig. \ref{fig:ldp} depicts \scalebox{0.9}{$\epsilon = 2.55$} \textsf{LDP}. Fig. \ref{fig:r} and \ref{fig:R} correspond to our scheme, each with $\alpha = 1$ (privacy parameter, Def. \ref{def:privacy}). The former uses a smaller  distance threshold ($r_1$, used to delineate the granularity of grouping -- see Sec. \ref{sec:privacy:def}) that mostly shuffles in each crescent. The latter uses a larger distance threshold ($r_2$) that shuffles within each cluster. Figures in the bottom row  demonstrate an inference attack (uses Gaussian process correlation) %with a fixed length scale) 
    on all four cases. We see that  \ldp  reveals almost the entire dataset (Fig. \ref{fig:ldp:attack}) while uniform shuffling prevents all classification (\ref{fig:uniform:attack}). However, the granularity can be controlled with our scheme (Figs. \ref{fig:r:attack}, \ref{fig:R:attack}). \vspace{-2em}}
   \label{fig:demonstration}
%   \vspace{-0.15cm}
    \end{figure*}
  will have the same true value ($0$ or $1$). Hence, the adversary can reliably infer Alice's value by taking a simple majority vote of her and her household's noisy responses. Note that this does not violate the \ldp guarantee since the inputs are appropriately randomized when observed in isolation. Additionally, on account of being public, the auxiliary information is known to the adversary (and analyst) \emph{a priori} -- no mechanism can prevent their disclosure. For instance, any attempts to include Alice's address as an additional feature of the data and then report via \ldp is \emph{futile} --   the adversary would simply discard the reported noisy address and use the auxiliary information about the exact addresses to identify the responses of her household members. We call such threats \emph{inference attacks} -- recovering an individual's private input using all or a subset of other participants' noisy responses. It is well known that protecting against inference attacks that rely on underlying data correlations is beyond the purview of \DP  \citep{Pufferfish, sok}. 
  
\textbf{Strategy 2} corresponds to the recently introduced shuffle \DP model, such as Google's Prochlo \citep{Prochlo}.  Here, the noisy responses are completely anonymized -- the adversary cannot identify which \ldp responses correspond to Alice and her household.  Under such a model, only information that is completely order agnostic (i.e., symmetric functions that can be computed over just the \textit{bag} of values, such as aggregate statistics) can be extracted. Consequently, the analyst also fails to accomplish their original goal as all the underlying data correlation is destroyed.
 
Thus, we see that the two models of deployment for \ldp present a trade-off between vulnerability to inference attacks and scope of data learnability. In fact, as demonstrated in \cite{Kifer}, it is impossible to defend against \emph{all} inference attacks while simultaneously maintaining utility for learning. In the extreme case that the adversary knows \emph{everyone} in Alice's community has the same true value (but not which one), no mechanism can prevent revelation of Alice's datapoint short of destroying all utility of the dataset. This then begs the question: \textbf{\emph{Can we formally suppress \underline{specific} inference attacks targeting each data owner while maintaining some meaningful learnability of the private data?}} Referring back to our example, can we thwart attacks inferring Alice's data using specifically her households' responses and still allow the medical analyst to learn its target trends? Can we offer this to every data owner participating?
 
%In this paper, we strike a balance and we propose a generalized shuffle framework for deployment that can interpolate between the two extremes.
In this paper, we strike a balance and propose a generalized shuffle framework that meets the utility requirements of the above analyst while formally protecting data owners against inference attacks. 
Our solution is based on the key insight: \textit{the order of the data acts as the proxy for the identity of data owners} as illustrated above. The granularity at which the ordering is maintained formalizes resistance to inference attacks while retaining some meaningful learnability of the private data. Specifically, we guarantee each data owner that their data is shuffled together with a carefully chosen group of other data owners. Revisiting our example, consider uniformly shuffling the responses from Alice's household and her immediate neighbors. Now an adversary cannot use her household's responses to predict her value any better than they could with a random sample of responses from this group. 
In the same way that \ldp prevents reconstruction of her datapoint using specifically \emph{her} noisy response, this scheme prevents reconstruction of her datapoint using specifically \emph{her households'} responses. The real challenge is offering such guarantees \textit{equally} to \textit{every} data owner. Bob, Alice's neighbor, needs his households' responses shuffled in with his neighbors, as does Luis who is a neighbor of Bob but \textit{not} of Alice. Thus, we have $n$ data owners with $n$ distinct, overlapping groups. Our scheme supports arbitrary groupings (overlapping or not), introducing a diverse and tunable class of privacy/utility trade-offs which is not attainable with either \ldp or uniform shuffling alone.
%This disallows the trivial strategy of shuffling the noisy responses of each group uniformly. To this end, we propose shuffling the responses in a systematic manner that tunes the privacy guarantee, trading it off with data learnability. 
For the above example, our scheme can formally protect each data owner from inference attacks using specifically their household, while still learning how disease prevalence changes across the neighborhoods of Alice's community.\\
This work offers two key contributions to the machine learning privacy literature: 
 \vspace{-0.2cm}
\squishlist
    \item \textbf{Novel privacy guarantee.} We propose a novel privacy definition, \name-privacy that captures the privacy of the \textit{order} of a data sequence (Sec. \ref{sec:privacy:def}) and formalizes the degree of resistance against inference attacks (Sec. \ref{sec:privacy:implications}). \name-privacy allows assigning an arbitrary group, $G_i$, to each data owner, $\DO_i, i \in [n]$. For instance, the groups can represent individuals in the same age bracket, `friends' on social media, or individuals living in each other's vicinity (as in case of Alice in our example).  Recall that the order is informative of the data owner's identity. Intuitively,  \name-privacy protects $\DO_i$ from inference attacks that arise from knowing the \textit{identity} of the members of their group $G_i$ (Sec. \ref{sec:privacy:implications}). %\textcolor{blue}{By shuffling within each group, \name-privacy protects $\DO_i$ against inference attacks that utilize the data of any subset of the members of $G_i$, while still revealing the statistics of each group to the analyst.} 
    %The group assignment is based on a public auxiliary information  -- individuals of a single group are `similar' w.r.t the auxiliary information. 
    Additionally, this grouping determines a threshold of learnability -- any learning that is order agnostic within a group (disease prevalence in a neighborhood -- the data analyst's goal in our example) is utilitarian and allowed; whereas analysis that involves identifying the values of individuals within a group (disease prevalence within specific households -- the adversary's goal) is regarded as a privacy threat and protected against.
 See Fig. \ref{fig:demonstration} for a toy demonstration of how our guarantee allows \textit{tuning the granularity at which trends can be learned}. 
    
    \item \textbf{Novel shuffle framework.} We  propose a novel mechanism that shuffles the data systematically and achieves \name-privacy. This 
   provides a generalized shuffle framework that interpolates between no shuffling (\textsf{LDP}) and uniform random shuffling (shuffle model) in terms of protection against inference attacks and data learnability.
    %\textcolor{blue}{allows analysis of subsets of the reported data (such as within neighborhoods across a city) while 
    %Our experimental results (Sec. \ref{sec:eval}) demonstrates its efficacy against realistic inference attacks.  
\squishend 
%   \vspace{-0.4cm} 
%   \arc{TO-DOs \\1)Include discussion about viewing a sequence as <bag of val, order> 2) discussion about side information - why is it immutable 3) highlight privacy guarantee - users choose groups which should be overlapping for generality  }
\section{Related Work}\label{sec:related_work} 
%\vspace{-0.3cm} 
The shuffle model of \DP \citep{Bittau2017,shuffle2,shuffling1} differs from our scheme as follows. These works $(1)$ study \DP benefits of shuffling whereas we study the inferential privacy benefits, and $(2)$ only study uniformly random shuffling where ours generalizes this to tunable, non-uniform shuffling (see App. \ref{app:related}). 
% Where these works study the differential private benefits of shuffling (lower $\epsilon$), our work studies its inferential privacy benefits. Furthermore, those works only consider uniform random shuffling of the dataset, where our approach  allows configurable, non-uniform shuffling distributions that allow us to tune the tradeoff between inferential risk and the granularity of learnable trends.
% These works differ from our approach in two ways. First, they only study shuffling as a means to amplify local differentially private guarantees, effectively offering a lower $\epsilon$ when viewed from the alternative, central \DP model. Our results cater to local \emph{inferential} privacy (Sec. \ref{sec:ip}), limiting what can be inferred about one individual from other individuals' responses. Second, the shuffle model only studies uniform random shuffling of the dataset. In contrast, our approach allows configurable, non-uniform shuffling distributions that allow us to tune the tradeoff between inferential risk and the granularity of learnable trends. 
\\A steady line of work has studied inferential privacy  \citep{semantics, Kifer,  IP, Dalenius:1977, dwork2010on, sok}. %Kifer et al. \cite{Kifer} formally studied privacy degradation in the face of data correlations and later proposed a  privacy framework, Pufferfish \cite{Pufferfish, Song,Blowfish}, for analyzing inferential privacy. 
%Subsequently, several other privacy definitions have also been proposed for the inferential privacy setting \cite{DDP,BDP,correlated,correlated2,CWP}. 
Our work departs from those in that we focus on \emph{local} inferential privacy and do so via the new angle of shuffling. 
\\Older works such as $k$-anonymity \citep{kanon},  $l$-diversity \cite{ldiv}, Anatomy \citep{anatomy} and others \citep{older1, older2, older3, older4, older5} have studied the privacy risk of non-sensitive auxiliary information or `quasi identifiers'. These works $(1)$ focus on the setting of dataset release, whereas we focus on dataset collection, and $(2)$ do not offer each data owner formal inferential guarantees, whereas we do. %As such, QIs can be manipulated and controlled, whereas we place no restriction on the amount or type of auxiliary information accessible to the adversary, nor do we control it. 
%Additionally, our work offers each individual formal inferential guarantees against informed adversaries, whereas those works do not. 
The De Finetti attack \citep{definetti} shows how shuffling schemes are vulnerable to inference attacks that correlate records to recover the original permutation of sensitive attributes. A strict instance of our privacy guarantee can thwart such attacks (at the cost of no utility, App. \ref{app:de finetti}). 
% \vspace{-0.4cm}
\section{Background}\label{sec:background} 
% \vspace{-0.2cm}
\textbf{Notations.} \textbf{Boldface} (such as $\bx=\langle x_1, 
 \cdots, x_n\rangle$) denotes a data sequence (ordered list); normal font (such as $x_1$) denotes individual values and $\{\cdot\}$ represents a multiset or bag of values.\vspace{-0.3cm}
\subsection{Local Differential Privacy}\label{sec:ldp}
 \vspace{-0.2cm}The local model consists of a set of data owners and an untrusted data aggregator (analyst); each individual perturbs their data using a \ldp algorithm (randomizers) and sends it to the analyst. % which uses these noisy responses to glean information about the entire dataset.
The \ldp guarantee is formally defined as
\begin{defn}\vspace{-0.1cm}[Local Differential Privacy, \ldp \cite{Warner,Evfimievski:2003:LPB:773153.773174,Kasivi}]
 A randomized algorithm $\mathcal{M} : \mathcal{X} \rightarrow \mathcal{Y}$ is $\epsilon$-locally differentially private (or $\epsilon$-\ldp), if for any pair of private values $x, x' \in \mathcal{X}$ and any subset of output, %$\mathcal{W} \subseteq \mathcal{Y}$ we have $\mathrm{Pr}\big[\mathcal{M}(x) \in \mathcal{W}\big] \leq e^{\epsilon} \cdot \mathrm{Pr}\big[\mathcal{M}(x') \in \mathcal{W}  \big]$.
 %\end{defn}
  \vspace{0.1cm}
 \begin{gather}
 \mathrm{Pr}\big[\mathcal{M}(x) \in \mathcal{W}\big] \leq e^{\epsilon} \cdot \mathrm{Pr}\big[\mathcal{M}(x') \in \mathcal{W}  \big]
  \end{gather}
  \end{defn}
%   \vspace{-0.2cm}
The shuffle model is an extension of the local model where the data owners first randomize their inputs. Additionally, an intermediate trusted shuffler applies a \textit{uniformly random permutation} to all the noisy responses before the analyst can view them. The anonymity provided by the shuffler requires less noise than the local model for achieving the same
privacy.  \vspace{-0.2cm}
\subsection{Mallows Model} 
% \vspace{-0.2cm}
\label{sec:background:MM}
A permutation of a set $S$ is a bijection $S\mapsto S$. The set of permutations of $[n], n \in \mathbb{N}$ forms a symmetric group $\mathrm{S}_n$. As a shorthand, we use $\sigma(\bx)$ to denote applying permutation $\sigma \in \mathrm{S}_n$ to a data sequence $\bx$ of length $n$. Additionally, $\sigma(i), i\in [n], \sigma \in \mathrm{S}_n$ denotes the value at index $i$ in $\sigma$ and $\sigma^{-1}$ denotes its inverse. For example, if $\sigma=( 1 \: 3 \: 5\: 4\: 2)$ and $\bx=\langle 21, 33, 45, 65 , 67\rangle$, then $\sigma(\bx)=\langle 21, 45, 67, 65, 33\rangle$, $\sigma(2)=3, \sigma(3)=5$ and $\sigma^{-1}=(1 \: 5 \: 2 \: 4 \: 3)$.
\\Mallows model is a popular probabilistic model for permutations \citep{MM}.  %It is an exponential location model and is usually referred to as the Gaussian distribution for permutations. 
The mode of the distribution is given
by the reference permutation $\sigma_0$ -- the probability of a permutation increases as we move `closer' to $\sigma_0$ as measured by rank distance metrics, such as the Kendall's tau distance (Def. \ref{def:kendall}). The dispersion parameter $\theta$ controls how fast this increase happens.  %The formal definition of the Mallows model is as follows.
\begin{defn}
\label{def: mallows}
For a dispersion parameter $\theta$, a reference permutation $\sigma_o \in \mathrm{S}_n$, and a rank distance measure $\textswab{d}: \mathrm{S}_n \cross \mathrm{S}_n \mapsto \R $,   
$
\mathbb{P}_{\Theta,\textswab{d}}(\sigma:\sigma_0)=\frac{1}{\psi(\theta,\textswab{d})} e^{-\theta  \textswab{d}(\sigma, \sigma_0)}$
is the Mallows model where $\psi(\theta,\textswab{d})=\sum_{\sigma \in \mathrm{S}_n} e^{-\theta \textswab{d}(\sigma,\sigma_0)}$ is a normalization term and  $\sigma \in \mathrm{S}_n$.
\end{defn}

\begin{wrapfigure}{r}{0.4\linewidth}
    \vspace{-1.5cm}
    % \hspace{-1cm}
    \centering
    \includegraphics[height=2.8cm]{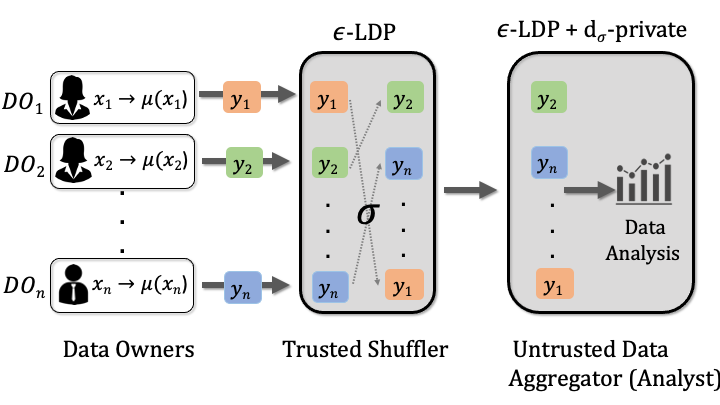} 
    \vspace{-1em}
    \caption{\small{Trusted shuffler mediates on $\by$}} 
    \vspace{-2.5em}
    \label{fig:problemsetting}
\end{wrapfigure}
\vspace{-0.5cm}
\section{Data Privacy and Shuffling}
% \vspace{-0.2cm}
In this section, we present \name-privacy and a shuffling mechanism capable of achieving the \name-privacy guarantee. 
%First, we describe the problem setting. Next, we present our novel privacy definition,  \name-privacy, followed by a semantic understanding of its privacy implications. A utility metric for shuffling mechanisms is presented next. Finally, we introduce a novel shuffling  mechanism capable of achieving the \name-privacy guarantee. 
% \vspace{-0.3cm}
\subsection{Problem Setting} \vspace{-0.1cm}

In our problem setting, we have $n$ data owners $\DO_i, i \in [n]$ each with a private input $x_i \in \mathcal{X}$ (Fig. \ref{fig:problemsetting}).  
 The data owners first randomize their inputs via a $\epsilon$-\ldp mechanism to generate $y_i=\mathcal{M}(x_i)$. 
 %I think we can cut out anything about the adv having side information until the experiments? It's just distracting reviewers? 
 %We consider an informed adversary with public auxiliary information $\mathbf{t}=\langle t_1, \cdots, t_n \rangle, t_i \in \mathcal{T}$ about each individual. 
 Additionally, just like in the shuffle model, we have a trusted shuffler. It mediates upon the noisy responses $\mathbf{y}=\langle y_1,\cdots,y_n \rangle$ 
 %and systematically shuffles them based on $\bt$ (since $\bt$ is public, it is also accessible to the shuffler) 
 to obtain the final output sequence $\bf{z}=\mathcal{A}(\bf{y})$ ($\mathcal{A}$ corresponds to Alg. 1) 
 which is sent to the untrusted data analyst. The shuffler can be implemented via trusted execution environments (TEE) just like Google's Prochlo. 
 %The underlying data correlations in $\mathbf{t}$ is modeled as a prior distribution $\calP$ on $\bf{x}$. 
 Next, we formally discuss the notion of order and its implications. 
\begin{defn}(Order) The order of a sequence $\mathbf{x}=\langle x_1,\cdots, x_n\rangle$ refers to the indices of its set of values $\{x_i\}$ and is represented by permutations from $\mathrm{S}_n$.\vspace{-0.2cm}\end{defn} 
% \begin{figure}
%     \centering
%     \begin{subfigure}[b]{0.49\textwidth}
%          \centering
%          \includegraphics[height = 3.5cm]{shuffle_image.png}
%          \caption{Trust model (similar to shuffle model)}
%          \label{fig:problemsetting}
%      \end{subfigure}
%      \begin{subfigure}[b]{0.49\textwidth}
%          \centering
%          \includegraphics[height = 3.5cm]{graph.png}
%          \caption{An example social media connectivity graph $\bt_{e.g}$}% that acts as the public auxiliary information.}
%          \label{fig:example}
%      \end{subfigure}
%      \caption{how data is privately collected (a) in the face of auxiliary information (b) that can be leveraged to correlate \ldp responses $\langle y_1, \dots, y_n \rangle$}
% \end{figure}

When the noisy response sequence $\mathbf{y}=\langle y_1, \cdots, y_n\rangle$ is represented by the identity permutation $\sigma_{I}=(1 \: 2 \: \cdots \: n)$, the value at index $1$ corresponds to $\DO_1$ and so on. Standard \ldp releases the identity permutation w.p. 1. The output of the shuffler, $\bf{z}$, is some permutation of the sequence $\bf{y}$, i.e.,
% \vspace{-0.5em}
\begin{align*}
\mathbf{z}=\sigma(\by)=
\langle y_{\sigma(1)},\cdots,y_{\sigma(n)}\rangle
%\vspace{-1em}
\end{align*}
where $\sigma$ is determined via $\calA(\cdot)$. For example, for $\sigma=(4 \: 5\: 2 \:3 \: 1)$, we have $\mathbf{z}=\langle y_4, y_5, y_2, y_3, y_1\rangle$ which means that the value at index $1$ ($\DO_1$) now corresponds to that of $\DO_4$ and so on.
%\arc{define analyst ' goal concretely here} \vspace{-0.3cm}

%\textcolor{blue}{The shuffler's distribution over permutations $\sigma$ satisfies \name-privacy by thoroughly shuffling within each data owner's group $G_i$ (formalized in the following section). The analyst, meanwhile, wishes to indefinitely query the statistics of \ldp values at the level of a group or union of groups. To enhance their utility, we optimize the mechanism to preserve statistics within groups (i.e. minimize the extent to which \ldp values shuffle between groups). More on this in Section \ref{sec:mechanism}. In this way, every choice of grouping results in a unique privacy/utility tradeoff. }
%   \vspace{-0.1cm}
\subsection{Definition of \name-privacy}\label{sec:privacy:def}
%  \vspace{-0.3cm}
% \arc{Make groups more general}
Inferential risk captures the threat of an adversary who infers $\DO_i$'s private $x_i$ using all or a subset of other data owners' released $y_j$'s. Since we cannot prevent all such attacks and maintain utility, our aim is to formally limit \emph{which data owners} can be leveraged in inferring $\DO_i$'s private $x_i$. To make this precise, each $\DO_i$ may choose a corresponding group, $G_i \subseteq [n]$, of data owners.
\begin{wrapfigure}{r}{0.3\linewidth}
    \centering
    \vspace{-1.5em}
    \includegraphics[height=2cm]{./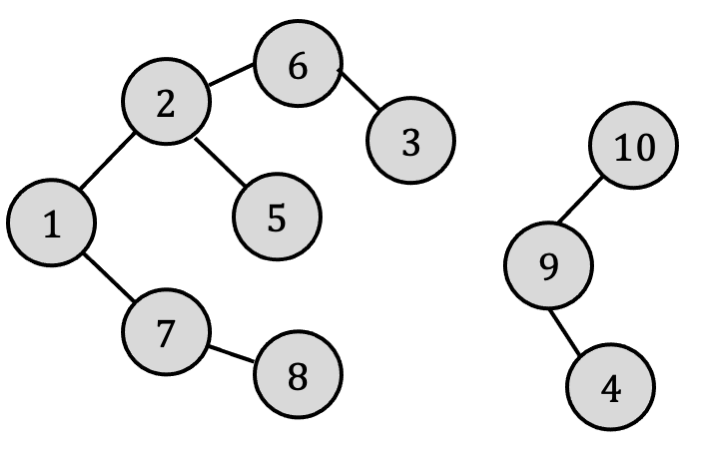}
    \vspace{-1.25em}
    \caption{An example social media connectivity graph $\bt_{e.g}$}
    \vspace{-1em}
    \label{fig:example}
\end{wrapfigure}
 \name-privacy guarantees that $y_j$ values originating from a data owner's group $G_i$ are shuffled together. In doing so, the \ldp values corresponding to subsets of $\DO_i$'s group $I \subset G_i$ cannot be reliably identified, and thus cannot be singled out to make inferences about $\DO_i$'s $x_i$. If Alice's group includes her whole neighborhood, \ldp data originating from her household cannot be singled out to recover her private $x_i$. %We formalize this guarantee in Sec. \ref{sec:privacy:implications}. 
\\Any choice of grouping $\calG = \{G_1, G_2, \dots, G_n\}$ can be accommodated under \name-privacy. Each data owner may choose a group large enough to hide anyone they feel sufficient risk from.  We outline two systematic approaches to assigning groups as follows: %If one feels inferential risk from their close friends, perhaps their group ought to include all of their first-order social media connections. If one feels inferential risk from their close colleagues, perhaps their group should include their entire company. In turn, the analyst may still access aggregate statistics roughly at the group level. 
   \vspace{-0.1cm}\squishlist    \vspace{-0.1cm}
\item Let $\mathbf{t}=\langle t_1, \cdots, t_n \rangle, t_i \in \mathcal{T}$ denote some public auxiliary information about each individual. $\DO_i$'s group, $G_i$, could consist of all those $\DO_j$'s who are similar to $\DO_i$ w.r.t. the public auxiliary information $t_i, t_j$ according to some distance measure $d:\calT \times \calT \rightarrow \R$. Here, we define `similar' as being under a threshold\footnote{We could also have different thresholds, $r_i$, for every data owner, $\DO_i$.} $r \in \R$ such that $G_i = \{j \in [n] \big| d(t_i,t_j) \leq r\},     \forall i \in [n]$.
For example, $d(\cdot)$ can be Euclidean distance if $\calT$ corresponds to geographical locations, thwarting inference attacks leveraging one's household or immediate neighbors.
If $\calT$ represents a social media connectivity graph, $d(\cdot)$ can measure the path length between two nodes, thwarting inference attacks using specifically one's close friends. For the example social media connectivity graph depicted in Fig. \ref{fig:example}, assuming distance metric path length and $r=2$, the groups are defined as  $G_1=\{1,7,8,2,5,6\}, G_2=\{2,1,7,5,6,3\}$ and so on. 

\item  Alternatively, the data owners might opt for a group of a specific size $r < n$. Collecting private data from a social media network, we may set $r = 50$, where each $G_i$ is encouraged to include the $50$ data owners $\DO_i$ interacts with most frequently. 
\squishend 
   \vspace{-0.3cm}
Intuitively, \name-privacy protects $\DO_i$ against inference attacks that leverages correlations at a finer granularity than $G_i$. In other words, under \name-privacy, one subset of $k$ data owners $\subset G_i$ (e.g. household) is no more useful for targeting $x_i$ than any other subset of $k$ data owners $\subset G_i$ (e.g. some combination of neighbors). 
This leads to the following key insight for the formal privacy definition. 

\textbf{Key Insight.} Formally, our privacy goal is to prevent the leakage of ordinal information from within a group. We achieve this by  systematically \textit{bounding the dependence of the mechanism's output on the relative ordering (of data values corresponding to the data owners) within each group}. \\First, we introduce the notion of neighboring permutations. 

%By (non-uniformly) shuffling within each group $G_i \in \calG$, we prevent an adversary from learning whether a set of $k$ \ldp values from $G_i$ correspond to one subset within $G_i$ or another. An adversary cannot distinguish whether these $k$ values came from $\DO_i$'s close friends vs. their distant relatives or from their close neighbors vs. residents on the other side of town. However, an analyst can still observe how the distribution of \ldp values changes across neighborhoods or social circles. 
\begin{defn}   (Neighboring Permutations) Given a group assignment $\mathcal{G}$,  two permutations \scalebox{0.9}{$\sigma, \sigma' \in \mathrm{S}_n$}  are defined to be neighboring w.r.t. a group $G_i \in \calG$ (denoted as \scalebox{0.9}{$\sigma\hspace{-0.1cm} \approx_{G_i}\hspace{-0.1cm} \sigma'$}) if  $\sigma(j) = \sigma'(j) \ \forall j \notin G_i$.
\end{defn} \vspace{-0.25cm}
% \begin{gather} 
% \vspace{-0.2cm}
% \sigma(j) = \sigma'(j) \ \forall j \notin G_i 
% \vspace{-0.4cm}
% \end{gather} 
% \end{defn}
 %\vspace{-0.2cm}
Neighboring permutations differ only in the indices of its corresponding group $G_i$.
 For example, \scalebox{0.9}{$\sigma=(\underline{1} \:\underline{ 2} \: 4 \: 5 \: \underline{7} \: \underline{6} \: \underline{10} \: \underline{3} \: 8 \:9)$} and \scalebox{0.9}{$\sigma'=(\underline{7} \:\underline{3} \: 4 \: 5 \: \underline{6} \: \underline{2} \: \underline{1} \: \underline{10} \: 8 \: 9 )$} are neighboring w.r.t \scalebox{0.9}{$G_1$} (Fig. \ref{fig:example}) since they differ only in \scalebox{0.9}{$\sigma(1), \sigma(2), \sigma(5), \sigma(6), \sigma(7)$} and \scalebox{0.9}{$\sigma(8)$}. We denote the set of all neighboring permutations as %\vspace{-0.2cm}
    \vspace{0.1cm}\begin{gather}      \mathrm{N}_{\calG}=\{(\sigma,\sigma')|\sigma \approx_{G_i} \sigma', \exists G_i \in \calG \}     \vspace{0.2cm}\end{gather}    \vspace{0.2cm}
Now, we formally define \name-privacy as follows.
 \vspace{-0.1cm}\begin{defn}[\name-privacy] For a given group assignment $\calG$ on a set of $n$ entities and a privacy parameter $\alpha \in \R_{\geq0}$, a randomized  mechanism $\calA: \mathcal{Y}^n \mapsto \mathcal{V} $ is $(\alpha,\mathcal{G})$-\name~private if for all $\mathbf{y} \in \mathcal{Y}^n$ and neighboring permutations $\sigma, \sigma' \in \mathrm{N}_\calG$ and any subset of output $O\subseteq \mathcal{V}$, we have
\vspace{0.1cm} 
\begin{gather*} 
    %\vspace{-0.5cm}
    \mathrm{Pr}[\calA\big(\sigma(\mathbf{y})\big) \in O] \leq e^\alpha \cdot \mathrm{Pr}\big[\calA\big(\sigma'(\mathbf{y})\big) \in O \big] \numberthis \label{eq:privacy} 
\end{gather*}   
%where $\bz=\pi(\by)=\langle y_{\pi(1)},\cdots, y_{\pi(n)}\rangle, \pi \in \mathrm{S}_n$
 $\sigma(\mathbf{y})$ and $\sigma'(\mathbf{y})$  are defined to be \textit{neighboring sequences}. 
\label{def:privacy}\end{defn} \vspace{-0.2cm}
 \name-privacy states that, for any group $G_i$,  the mechanism is (almost) agnostic of the order of the data within the group.  Even after observing the output, an adversary cannot learn about the relative ordering of the data within any group. Thus, two neighboring sequences are indistinguishable to an adversary. %For example, for any data sequence $\mathbf{y}\in \mathcal{Y}^{10}$, $\sigma=(1 \: 2 \: 4 \: 5 \: 7 \: 6 \: 10 \: 3 \: 8 \:9)$ and $\sigma'=(7 \:3 \: 4 \: 5 \: 6 \: 2 \: 1 \: 10 \: 8 \: 9 )$, $\sigma(\mathbf{y})$ and $\sigma'(\mathbf{y})$ are indistinguishable to an adversary ($\sigma\approx_{G_1}\sigma'$ for Fig. \ref{fig:example}). 
 %In other words, a \name-private mechanism is (almost) order-agnostic for any group in $\mathcal{G}$. 
An important property of \name-privacy is that post-processing computations does not degrade privacy. Additionally, when applied multiple times, the privacy guarantee degrades gracefully. Both the properties are analogous to \DP and are presented in App. \ref{app:post-processing}. %Interestingly, \ldp mechanisms achieve a weak degree of \name-privacy. 
 \begin{comment}
    
\begin{lem} 
    An $\epsilon$-\ldp mechanism is $(k\epsilon, \calG)$-\name~ private for any group assignment $\calG$ such that $
        k \geq \max_{G_i \in \calG} |G_i|
$ (proof in App. \ref{app:post-processing}).\label{lemma:LDP} \vspace{-0.2cm}
\end{lem} 
 \end{comment}

\textbf{Note.} Any data sequence $\mathbf{x}=\langle x_1,\cdots, x
_n\rangle$ can be viewed as a two-tuple,  $\big(\{x\}, \sigma\big)$, where $\{x\}$ denotes the \textit{bag} of values and $\sigma \in S_n$ denotes the corresponding indices of the values which represents the \textit{order} of the data.
 The $\epsilon$-LDP protects the bag of data values, $\{x\}$, while $d_\sigma$-privacy protects the order, $\sigma$. Thus, the two privacy guarantees cater to orthogonal parts of a data sequence (see Thm. \ref{thm: decision theoretic} ). Also, $\alpha=\infty~(0), r = 0~(n)$ represents the standard $\ldp$(shuffle \textsf{DP}) setting.

%The proof of the above lemma is presented in App. \ref{app}. 
%So, if a mechanism satisfies $(\epsilon = 2)$-\ldp, then it also satisfies $(10,\calG)$ \name-privacy for any group assignment $\calG$ whose largest group contains $5$ individuals.

%  \vspace{-1em}
\subsection{Privacy Implications}
\label{sec:privacy:implications}
% \vspace{-0.2cm} 
% In this section, we describe the implications of the \name-privacy guarantee in our setting. As discussed above, \name-privacy  delineates the \textit{granularity at which the underlying data correlation can be leveraged} by the adversary. Specifically, the group assignment $\mathcal{G}$ delineates a threshold of learnability as follows
%We now turn to \name-privacy's semantic guarantees: what can/cannot be learned from the released sequence $\bz$? 
The group assignment $\mathcal{G}$ delineates a threshold of learnability which determines the privacy/utility tradeoff as follows.
% As with \DP, the \name-privacy definition alone does not communicate a meaningful notion of privacy. For this, we introduce two semantic guarantees --- limiting what an adversary may do and learn --- that result from satisfying \name-privacy. Each of these is a precise way of stating the same concept: \name-privacy prevents adversaries from reconstructing any $x_i$ using the any specific subset of sanitized $y_j$ values in $\DO_i$'s group, $G_i$. 

%\name-privacy offers a form of local inferential privacy: informed Bayesian adversaries learn very little about $\DO_i$'s private value $x_i$ from the shuffled \sequence $\bz$. 

% \vspace{-0.3cm}
\squishlist
    \item \textbf{Learning allowed (Analyst's goal)}. 
    % Anything that can be learned about $\DO_i$ from (1) the correlations at the granularity of $G_i$, and (2) individuals outside $\DO_i$ is allowed -- this encodes the utility of the data from the analyst's perspective. The former allows learning of information which can be extracted from just the \textit{bag} of the corresponding (noisy) data values, denoted as  $\{y_{G_i}\}$.  The latter allows learning of information from $\by_{\overline{G}_i}$, the (ordered and noisy) data sequence for all data owners outside $G_i$. The rationale for allowing this is that individuals outside $G_i$ are not similar to $\DO_i$ (w.r.t auxiliary information $\bt$) and hence, not very informative about $\DO_i$ (not a potential privacy threat).
  \name-privacy can answer queries that are order agnostic within groups, such as aggregate statistics of a group. In Alice's case, the analyst can estimate the disease prevalence in her neighborhood. 
    \item  \textbf{Learning disallowed (Adversary's goal)}. 
    %Utilizing correlation within the group $G_i$ (i.e., for a given set of values $\{y_{G_i}\}$, additional information extractable from the order of $\by_{G_i}$) to learn about $\DO_i$ is disallowed. This encodes the privacy threat from the adversary's perspective.
    Adversaries cannot identify (noisy) values of individuals  within any group. While they may learn the disease prevalence in Alice's neighborhood, they cannot determine the prevalence within her household and use that to recover her value $x_i$.
\squishend  
% \vspace{-0.2cm}

%%%%%%%%%%%%%%%%%%%%%%%%%%%%
%BAYESIAN BACKGROUND
%%%%%%%%%%%%%%%%%%%%%%%%%%%%

To make this precise, we first formalize the privacy implications of the \name~guarantee in the standard Bayesian framework, typically used for studying inferential privacy. Next, we formalize the privacy provided by the combination of \ldp and \name~guarantees by way of a decision theoretic adversary. %against $(1)$ a Bayesian adversary trying to infer $\DO_i$'s true value, $x_i$ and $(2)$ a decision theoretic adversary who wants to identify the $z_i$ values corresponding to a given subset of $k$ data owners, such as the $k$ members of Alice's household. \\ 
\\

\textbf{Bayesian Adversary.} Consider a Bayesian adversary with any prior $\calP$ on the joint distribution of noisy responses, $\Pr_\calP[\by]$, which models their beliefs on the correlation between the participants (such as the correlation between Alice and her households' disease status). Their goal is to infer $\DO_i$'s private input $x_i$. As with early \DP works \citep{dwork_early}, we consider an \emph{informed} adversary. Here, the adversary %With \textsf{DP}, informed adversaries know the private input $x_j$ of every data owner $\DO_j$ except $x_i$. 
 knows %$(1)$ the (unordered) bag of noisy values $\{y_{G_i}\}$ in $i$'s group, and $(2)$ the (ordered) sequence of noisy values $\mathbf{y}_{\overline{G}_i}$ outside $i$'s group. 
 $(1)$ the sequence (assignment) of noisy values outside $G_i$, $\by_{\overline{G}_i}$, and $(2)$ the (unordered) bag of noisy values in $G_i$, $\{y_{G_i}\}$. \name-privacy bounds the prior-posterior odds gap on $x_i$ for such as informed adversary as follows:  
\begin{thm}
\label{thm: semantic guarantee}
For a given group assignment $\calG$ on a set of $n$ data owners, if a shuffling mechanism $\calA:\calY^n\mapsto \calY^n$ is $(\alpha,\calG)$-\name private, then for each data owner $\DO_i, i \in [n]$, %\vspace{-0.1cm}
\begin{align*}
   \max_{\substack{i\in [n]\\ a,b \in \calX}} \bigg|\log \frac{\Pr_\calP [x_i = a | \bz, \{y_{G_i}\},\by_{\overline{G}_i}]}{\Pr_\calP [x_i = b | \bz, \{y_{G_i}\},\by_{\overline{G}_i}]} - \log \frac{\Pr_\calP [x_i = a | \{y_{G_i}\},\by_{\overline{G}_i}]}{\Pr_\calP [x_i = b | \{y_{G_i}\},\by_{\overline{G}_i}]} \bigg| \leq \alpha  %\vspace{-0.5cm}
\end{align*}
for a prior distribution $\calP$, where \scalebox{0.9}{$\bz=\calA(\by)$} and \scalebox{0.9}{$\by_{\overline{G}_i}$} is the noisy sequence for data owners outside \scalebox{0.9}{$G_i$}. %(proof in App. \ref{app:thm:semantic}). 
\end{thm}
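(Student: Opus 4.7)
The plan is to reduce the inferential privacy loss to the mechanism-level $d_\sigma$-privacy guarantee via a mixture argument. Fix $i\in[n]$ and values $a,b\in\calX$. The conditioning event $\{x_i=c,\{y_{G_i}\},\by_{\overline{G}_i}\}$ pins down the noisy sequence $\by$ except for which value in the multiset $\{y_{G_i}\}$ is assigned to which index in $G_i$. Let $\calY^{\star}$ denote the (finite) set of complete sequences $\by$ compatible with this conditioning. By construction, any two $\by,\by'\in\calY^{\star}$ are related by a permutation that fixes every index outside $G_i$, so picking any reference element $\by_0\in\calY^{\star}$ yields $\calY^{\star}=\{\sigma(\by_0):\sigma\approx_{G_i}\mathrm{id}\}$; in other words, the entire conditioning class consists of mutually neighboring sequences in the sense of Definition~\ref{def:privacy}.

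Next I would write each conditional output probability as a mixture
$$\Pr_\calP[\bz\mid x_i=c,\{y_{G_i}\},\by_{\overline{G}_i}]=\sum_{\by\in\calY^{\star}}\Pr_\calP[\by\mid x_i=c,\{y_{G_i}\},\by_{\overline{G}_i}]\cdot\Pr[\calA(\by)=\bz],$$
for $c\in\{a,b\}$. The mixing weights depend on $c$ and on the prior $\calP$ (through the LDP mechanism and the encoded correlations), but the kernel $\by\mapsto\Pr[\calA(\by)=\bz]$ does not. Because every pair $\by,\by'\in\calY^{\star}$ is neighboring, applying $(\alpha,\calG)$-$d_\sigma$-privacy in both directions yields $\Pr[\calA(\by)=\bz]\in[M,e^{\alpha}M]$ for every $\by\in\calY^{\star}$, where $M=\min_{\by\in\calY^{\star}}\Pr[\calA(\by)=\bz]$.

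Since both $\Pr[\bz\mid x_i=a,\ldots]$ and $\Pr[\bz\mid x_i=b,\ldots]$ are convex combinations of values drawn from the common interval $[M,e^{\alpha}M]$, they must themselves lie in $[M,e^{\alpha}M]$. Hence their ratio, in either direction, is at most $e^{\alpha}$; taking the absolute value of the log and maximizing over $i$ and $a,b\in\calX$ yields the claimed bound.

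The main subtlety is tightening the mixture step from the naive $e^{2\alpha}$ bound one would get by comparing both conditionals to a single fixed reference $\Pr[\calA(\by_0)=\bz]$, down to the advertised $e^{\alpha}$. The sharper bound crucially uses the observation that both mixtures are supported on the \emph{same} interval $[M,e^{\alpha}M]$, regardless of how the weights $w_a,w_b$ redistribute mass across $\calY^{\star}$. A minor edge case to address is $M=0$: then $d_\sigma$-privacy forces $\Pr[\calA(\by)=\bz]=0$ for every $\by\in\calY^{\star}$, so both conditionals vanish and the bound holds vacuously under the convention $0/0\le e^{\alpha}$.
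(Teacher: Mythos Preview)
Your proposal is correct and follows essentially the same route as the paper: both proofs write each conditional $\Pr[\bz\mid x_i=c,\{y_{G_i}\},\by_{\overline{G}_i}]$ as a convex combination of the kernel values $\Pr[\calA(\by)=\bz]$ over the set of sequences compatible with $\{y_{G_i}\}$ and $\by_{\overline{G}_i}$, observe that all such sequences are mutually neighboring w.r.t.\ $G_i$, and then bound the ratio of the two mixtures by the max ratio of kernel values, which is at most $e^\alpha$ by $d_\sigma$-privacy. Your explicit $[M,e^\alpha M]$ interval argument and your handling of the $M=0$ edge case are slightly more careful than the paper's presentation, but the underlying idea is identical.
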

\vspace{-1em}
% The above privacy loss variable differs slightly from that of Def. \ref{def:ip}, since the informed adversary already knows $\{y_{G_i}\}$ and $\by_{\overline{G}_i}$. %, much like a \DP informed adversary knowing every other datapoint $\bx_{-i}$. 
 %Equivalently, this bounds the prior-posterior odds gap on $x_i$: 
%\vspace{-0.2em}
See App \ref{app:bayesian proof} for the proof and further discussion on the semantic meaning of the above guarantee. \\

\newcommand{\Aunif}{\calA_{\text{unif}}}
\newcommand{\Ashuff}{\calA_{\text{shuff}}}
\newcommand{\Pzo}{P_{0 \rightarrow 1}}
\newcommand{\Poz}{P_{1 \rightarrow 0}} 

\textbf{Decision Theoretic Adversary.} Here, we analyse the privacy provided by the combination of \ldp and \name~guarantees.
%The guarantee above formalizes this by comparing what a Bayesian adversary learns about $x_i$ from $\bz$ to what they learn from the bag of $\ldp$ values inside $\DO_i$'s group and the sequence of \ldp values outside it. 
%We make this concrete by showing that no adversary can reliably find which released %For instance, in combination with a low $\epsilon$, \name-privacy guarantees that no adversary can reliably pick out which $z_i$ values originated from Alice's household, and thus cannot specifically leverage those to make inferences on her. 
Consider a decision theoretic adversary who aims to identify the noisy responses, $\{z_I\}$, that originated from a specific subset of data owners, $I \subset G_i$ (such as the members of Alice's household). %This re-identification attack is a key step in carrying out the inference attack  %observes the output sequence $\bz$ and chooses which indices in $\bz$ correspond to a subset of data owners $I \subset G_i$. 
We denote the adversary by a (possibly randomized) function mapping from the output $\bz$ sequence to a set of $k$ indices, $\mathcal{D}_{Adv}: \calY^n \rightarrow [n]^k$, where $k = |I|$. These $k$ indices, $H \in [n]^k$, represent the elements of $\bz$ that $\mathcal{D}_{Adv}$ believes originated from the data owners in $I$. $\mathcal{D}_{Adv}$ wins if $> k/2$ of the chosen indices indeed originated from $I$, i.e,  $|\sigma(H) \cap I| > k/2 $, where $z_i = y_{\sigma(i)}$ and $\sigma(H) = \{\sigma(i) : i \in H\}$. $\mathcal{D}_{Adv}$ loses if most of $H$ did not originate from $I$, i.e.,  $|\sigma(H) \cap I| \leq k/2 $.
% \footnote{$k \ll r$; the adversary must recover the \ldp values of the entire subgroup $I \subset G_i$ to make inferences on $x_i$.} 
We choose the above adversary because this re-identification is a key step in carrying out inference attacks -- in failing to reliably re-identify the noisy values originating from $I$, one cannot make inferences on $x_i$ specifically from the subset $I \subset G_i$. 

%We assume that $k < \frac{r}{2}$, where $r$ is the size of Alice's group, $r = |G_i|$. 

% $A_I$ wins if at least $l$ of the $k$ indices it selects in $\bz$, $H$, indeed originated from $l$ of the $k$ data owners in $I$. $A_I$ loses if less than $l$ of the $k$ indices it selects originated from data owners in $I$. We assume that $l < \frac{k}{2}$ and that $k < \frac{r}{2}$, where $r$ is the size of the group, $r = |G_i|$. 

% This notion that the \ldp $y_i$ values of a specific subset of $G_i$ cannot be leveraged is made even more concrete by considering 

% Consider the original notation where the mechanism is not split into bag of values and order. We can also provide a guarantee against any adversary trying to identify the indices of $\bz$ originating from data owners $I \subset G_i$ (non differentially). 

% Formally, consider an adversary who tries to identify the \ldp datapoints originating from subset $I \subset G_i$, $A_I:\calZ^{n} \rightarrow [n]^{k}$, where $|I| = k$ and $|G_i| = r > k$. Upon observing any partially-shuffled sequence $\bz$, $A_I$ returns $H\subset [n]$, a set of $k$ indices in $\bz$ which it believes originated from data owners $I$. We lower bound the error rate of $A_I$: 

\begin{thm}
\label{thm: decision theoretic}
   For $\mathcal{A}(\mathcal{M}(\bx))=\bz$ where $\mathcal{M}(\cdot)$ is $\epsilon$-\ldp and $\mathcal{A}(\cdot)$ is $\alpha$ - \name private, we have  
 \begin{align*}
     \Pr[\mathcal{D}_{Adv} \text{ loses}] \geq \big\lfloor \frac{r-k}{k} \big\rfloor e^{-(2k\epsilon+\alpha)} \cdot \Pr[\mathcal{D}_{Adv} \text{ wins}]
 \end{align*}
 for any input subgroup $I \subset G_i, r = |G_i|$ and  $k < r/2$. 
\end{thm}
% \vspace{-0.3cm}
The adversary's ability to re-identify the $\{z_I\}$ values comes partially from the \textit{bag of values } (quantified by $\epsilon$) and partially from the \textit{order} (quantified by $\alpha$). We highlight two implications of this fact. 
% \vspace{-0.4cm}
\squishlist
    \item When $\epsilon$ is small ($\ll 1$), an adversary's ability to re-identify the noisy values $\{z_I\}$ originating from $I$ may very well be dominated by $\alpha$. For instance, if $\epsilon = 0.2$ and $k = 5$, the adversary's advantage is dominated by $\alpha$ for any $\alpha > 2$. When using $\ldp$ alone (no shuffling), $\alpha = \infty$ and the adversary can exactly recover which values came from Alice's household. As such, even a moderate $\alpha$ value (obtained via \name-privacy) significantly reduces the ability to re-identify the values. 
    \item When the loss is dominated by $\epsilon$ ($2k \epsilon \gg \alpha$), the above expression allows us to disentangle the \textit{source of privacy loss}. In this regime, adversaries get most of their advantage from the bag of values released, not from the order of the release. That is, even if $\alpha = 0$ (uniform random shuffling), participants still suffer a large risk of re-identification simply due to the noisy values being reported. Thus, no shuffling mechanism can prevent re-identification in this regime. %One could imagine that if the group size $r$ is small and $\epsilon$ is large, no shuffling mechanism can prevent re-identification if the adversary has a strong prior on the user's values. 
\squishend 
%\vspace{-0.2cm}
\textbf{Discussion.} In spirit, \DP does not guarantee protection against recovering $\DO_i$'s private $x_i$ value. It guarantees that -- had a user not participated (or equivalently submitted a false value $x_i'$) -- the adversary would have about the same ability to learn their true value, potentially from the responses of other data owners. In other words, the choice to participate is unlikely to be responsible for the disclosure of $x_i$. Similarly, \name-privacy does not prevent disclosure of $x_i$. By requiring indistinguishability of neighboring permutations, it guarantees that -- had the data owners of any group $G_i$ completely swapped identities -- the adversary would have about the same ability to learn $x_i$. So most likely, Alice's household is not uniquely responsible for a disclosure of her $x_i$: had her household swapped identities with any of her neighbors, the adversary would probably draw the same conclusion on $x_i$. 
Or, as detailed in Thm.\ref{thm: decision theoretic}, an adversary cannot reliably resolve which $\{z\}$ values originated from Alice's household, so they cannot draw conclusions based on her household's responses. 
In a nutshell,    
% \vspace{-0.4cm}
\squishlist   
% \vspace{-0.3cm}
    \item Inference attacks can recover a data owner $\DO_i$'s private data $x_i$ from the responses of other data owners. The order of the data acts as the proxy for the data owner's identity which can aid an adversary in corralling the subset of other data owners who correlate with $\DO_i$ (required to make a reliable inference of $x_i$). 
    \item  \DP alleviates concerns that \underline{$\DO_i$'s choice to share data} ($y_i$) will result in disclosure of $x_i$, and \name-privacy alleviates concerns that $\DO_i$'s \underline{group's ($G_i$) choice to share their identity} will result in disclosure of $x_i$.
    % \vspace{-0.2cm}
\squishend
\subsection{\name-private Shuffling Mechanism}
\label{sec:mechanism}
%\vspace{-0.2cm}

\begin{wrapfigure}{R}{0.425\textwidth}
    \vspace{-1em}
    \IncMargin{1em}
    \setlength{\textfloatsep}{-1pt}
    \begin{algorithm}[H]
    \caption{\name-private Shuffling Mech.}
    \setstretch{1.25}
    \KwIn{\ldp sequence \scalebox{0.9}{$\by=\langle y_1, \cdots, y_n \rangle$}\;
    \hspace{3em}Public aux. info. $\bt=\langle t_1, \cdots t_n \rangle$\;
    \hspace{3em}Dist. threshold $r$; Priv. param. $\alpha$\;
    }
    %\hspace{1cm} 
    \KwOut{$\bz$ - Shuffled output sequence\;}
    \vspace{0.1em}
    \nl $\calG=$ \scalebox{0.9}{$ComputeGroupAssignment$} $(\bt,r)$\;
    % \hspace{0.7cm}\textcolor{blue}{$\rhd$} Assign groups for each data owner using Eq.~\eqref{eq:group2}\vspace{0.05cm}
    \nl Construct graph $\mathbb{G}$ with \\
     \hspace{1em} a) vertices \scalebox{0.9}{$V=\{1,2,\cdots,n\}$}\\ 
     \hspace{1em} b) edges \scalebox{0.9}{$E = \{(i,j): j \in G_i, G_i \in \calG\}$} \\
    %  \Statex\hfill \textcolor{blue}{$\rhd$} Translate the group assignment $\calG$ to a graph $\mathbb{G}$ 
    \nl \scalebox{0.95}{$root=\arg \max_{i \in [n]} |G_i|$}\;
    %  \Statex\hfill \textcolor{blue}{$\rhd$} \scalebox{0.95}{$root$} corresponds to the node with the largest group \vspace{0.05cm}
    \nl \scalebox{0.95}{$\sigma_0=\textsf{BFS}(\mathbb{G},root)$}\;
    %  \Statex\hfill \textcolor{blue}{$\rhd$} Compute reference permutation via a BFS traversal on $\mathbb{G}$ 
    \nl $\Delta$= \scalebox{0.9}{$ComputeSensitivity$}$(\sigma_0,\calG)\;$
      
    % \hfill\textcolor{blue}{$\rhd$} Using Prop. \ref{prop:1} 
    \nl \scalebox{0.9}{$\theta=\alpha/\Delta$}\; 
    % \hfill\textcolor{blue}{$\rhd$} Compute the dispersion parameter 
    \nl \scalebox{0.9}{$\hat{\sigma} \sim \mathbb{P}_{\theta,\textswab{d}}(\sigma_0) $ }\;
    %  \hfill\textcolor{blue}{$\rhd$} Sampling from the Mallows model
    \nl $\sigma^*=\sigma_0^{-1}\hat{\sigma}$\; 
    \nl $\bz=\langle y_{\sigma^*(1)}, \cdots y_{\sigma^*(n)}\rangle$\; 
    % \hfill\textcolor{blue}{$\rhd$} Shuffle the output   
    \nl Return $\bz$\;
    \label{algo:main}
    \end{algorithm}
    \vspace{-2em}
\end{wrapfigure}
%\vspace{2cm}

We now describe our novel shuffling mechanism that can achieve \name-privacy. In a nutshell, our mechanism samples a permutation from a suitable Mallows model and shuffles the data sequence accordingly. We can characterize the \name-privacy guarantee of our mechanism in the same way as that of the \DP guarantee of classic mechanisms \citep{Dwork} -- with variance and sensitivity. Intuitively, a larger dispersion parameter $\theta \in \R$ (Def. \ref{def: mallows}) reduces randomness over permutations, increasing utility and increasing (worsening) the privacy parameter $\alpha$. The maximum value of $\theta$ for a given $\alpha$ guarantee depends on the sensitivity of the rank distance measure $\textswab{d}(\cdot)$ over all neighboring permutations $N_\calG$. Formally, we define the sensitivity as \\
\resizebox{0.95\linewidth}{!}{
\begin{minipage}{\linewidth}
\begin{align*}
    \Delta(\sigma_0 : \textswab{d}, \calG) =
    \max_{(\sigma, \sigma') \in N_\calG} |\textswab{d}(\sigma_0 \sigma, \sigma_0) - \textswab{d}(\sigma_0 \sigma', \sigma_0)|~, 
\end{align*}
\end{minipage}
}
%\vspace{-0.2cm}
% \begin{prope}
% For group assignment $\calG$, a  mechanism $\calA(\cdot)$ that shuffles according to a permutation sampled from the Mallows model $\mathbb{P}_{\theta,\textswab{d}}(\cdot)$, satisfies $(\alpha, \calG)$-\name privacy where
% \vspace{-0.1cm}
% \begin{gather}
% \vspace{-0.2cm}
%  \hspace{-0.5cm}\Delta(\sigma_0 : \textswab{d}, \calG) = \max_{(\sigma, \sigma') \in N_\calG} |\textswab{d}(\sigma, \sigma_0) - \textswab{d}(\sigma', \sigma_0)|\\
%     \alpha 
%     = \theta \cdot \Delta(\sigma_0 : \textswab{d}, \calG)
% \vspace{-0.2cm} 
% \end{gather} 
% We refer to $\Delta(\sigma_0 : \textswab{d}, \calG) $ as the sensitivity of the rank-distance measure $\textswab{d}(\cdot)$ (details in App. \ref{app:prop}).
% \label{prop:1}
% \end{prope}
% \vspace{-0.2cm}
  the maximum change in distance $\textswab{d}(\cdot)$ from the reference permutation $\sigma_0$ for any pair of neighboring permutations $(\sigma,\sigma') \in N_\calG$ permuted by $\sigma_0$. The privacy parameter of the mechanism is then proportional to its sensitivity \scalebox{1}{$\alpha = \theta \cdot \Delta(\sigma_0 : \textswab{d}, \calG)$}. 
  
  Given $\mathcal{G}$ and a reference permutation $\sigma_0$, the sensitivity of a rank distance measure $\textswab{d}(\cdot)$ depends on the \emph{width}, $\omega_{\calG}^{\sigma}$, which measures how `spread apart' the members of any group of $\mathcal{G}$ are in $\sigma_0$:\vspace{-0.2cm}
 \begin{align*}
     \omega_{G_i}^{\sigma}&= \max_{(j,k) \in G_i \times G_i} \Big| \sigma^{-1}(j) - \sigma^{-1}(k) \Big|, i \in [n];  \\
    \omega_{\calG}^{\sigma} &= \max_{G_i \in \calG} \omega_{G_i}^{\sigma}
     \vspace{-1em}
 \end{align*}
% \begin{defn}(Width) 
% For a permutation $\sigma$, the width of a group assignment, $\omega_{\calG}^{\sigma}$, is defined as the maximum separation in $\sigma$ between any two members of a group in $\calG$. 
%  \begin{gather*}
%  %\vspace{-0.3cm}
%  \hspace{-0.5cm}\omega_{G_i}^{\sigma}= \max_{(j,k) \in G_i \times G_i} \Big| \sigma^{-1}(j) - \sigma^{-1}(k) \Big|, i \in [n];  \hspace{0.5cm}
%     \omega_{\calG}^{\sigma} = \max_{G_i \in \calG} \omega_{G_i}^{\sigma}%\vspace{-0.3cm}
% \end{gather*}
% %\vspace{-0.3cm}
% \label{def:width}
% \end{defn}
% \vspace{-0.4cm}
% $\omega_{G_i}^{\sigma}$ measures how `spread apart' the members of $G_i$ in permutation $\sigma$ are. 
For example, for \scalebox{0.9}{$\sigma=(1\:3\:7\:8\:6\:4\:5\:2\:9\:10)$} and \scalebox{0.9}{$G_1=\{1,7,8,2,5,6\}$}, \scalebox{0.9}{$\omega_{G_1}^{\sigma}=|\sigma^{-1}(1)-\sigma^{-1}(2)|=7$}. The sensitivity is an increasing function of the width. For instance, for Kendall's \scalebox{0.9}{$\tau$} distance \scalebox{0.9}{$\textswab{d}_\tau(\cdot )$} we have \scalebox{0.9}{$\Delta(\sigma_0 : \textswab{d}_\tau, \calG)
    =\omega_{\calG}^{\sigma_0}(\omega_{\calG}^{\sigma_0} + 1)/2$}. \\If a reference permutation clusters the members of each group closely together (low width), then the groups are more likely to permute within themselves. This has two benefits. First, for the same $\theta$ ($\theta$ is an indicator of utility as it determines the dispersion of the sampled permutation), a lower value of width gives lower $\alpha$ (better privacy).  Second, if a group is likely to shuffle within itself, it will have better \scalebox{0.9}{$(\eta, \delta)$}-preservation -- a novel utility metric, we propose, for a shuffling mechanism. Intuitively, a mechanism is $(\eta,\delta)$-preserving w.r.t a subset of indices \scalebox{0.9}{$S \subset [n]$} if at least \scalebox{0.9}{$\eta\%$}  of its indices are shuffled within itself with probability \scalebox{0.9}{$(1-\delta)$}. The rationale behind this metric is that it captures the utility of the learning allowed by \name-privacy -- if \scalebox{0.9}{$S$} is equal to some group \scalebox{0.9}{$G \in \calG$}, high \scalebox{0.9}{$(\eta, \delta)$}-preservation allows overall statistics of \scalebox{0.9}{$G$} to be captured since $\eta\%$ of the correct data values remain preserved.   We present the formal discussion in App. \ref{app:utility}. 

Unfortunately, minimizing $\omega_\calG^\sigma$ is an NP-hard problem (Thm. \ref{thm:NP} in App. \ref{app:NP}). Instead, we estimate the optimal $\sigma_0$ using the following heuristic\footnote{The heuristics only affect $\sigma_0$ (and utility). Once $\sigma_0$ is fixed, $\Delta$ is computed exactly as discussed above.} approach based on a graph breadth first search. \\ 

\textbf{Algorithm Description.}
Alg. 1 above proceeds as follows. We first compute the group assignment, $\calG$, based on the public auxiliary information and desired threshold $r$ following discussion in Sec. \ref{sec:privacy:def} (Step 1). Then we construct $\sigma_0$ with a breadth first search (BFS) graph traversal. 
\\We translate $\calG$ into an undirected graph \scalebox{0.9}{$(V,E)$}, where the vertices are indices \scalebox{0.9}{$V = [n]$} and two indices \scalebox{0.9}{$i,j$} are connected by an edge if they are both in some group (Step 2). Next, \scalebox{0.9}{$\sigma_0$} is computed via a breadth first search traversal (Step 4) --  if the \scalebox{0.9}{$k$}-th node in the traversal is \scalebox{0.9}{$i$}, then \scalebox{0.9}{$\sigma_0(k) = i$}. The rationale is that neighbors of \scalebox{0.9}{$i$} (members of \scalebox{0.9}{$G_i$}) would be traversed in close succession. Hence, a neighboring node \scalebox{0.9}{$j$} is likely to be traversed at some step \scalebox{0.9}{$h$} near \scalebox{0.9}{$k$} which means \scalebox{0.9}{$|\sigma_0^{-1}(i) - \sigma_0^{-1}(j)| = |h - k|$} would be small (resulting in low width). Additionally, starting from the node with the highest degree (Steps 3-4) which corresponds to the largest group in $\calG$ (lower bound for $\omega_{\calG}^{\sigma}$ for any  $\sigma$) helps to curtail the maximum width in $\sigma_0$.
% \vspace{-0.25cm}
% \squishlist 
% \item We translate $\calG$ into an undirected graph $(V,E)$, where the vertices are indices $V = [n]$ and two indices $i,j$ are connected by an edge if they are both in some group (Step 2). Next, $\sigma_0$ is computed via a breadth first search traversal (Step 4) --  if the $k$-th node in the traversal is $i$, then $\sigma_0(k) = i$. The rationale is that neighbors of $i$ (members of $G_i$) would be traversed in close succession. Hence, a neighboring node $j$ is likely to be traversed at some step $h$ near $k$ which means $|\sigma_0^{-1}(i) - \sigma_0^{-1}(j)| = |h - k|$ would be small (resulting in low width).  \vspace{-0.03cm} \item  We start from the node with the highest degree (Steps 3-4) which corresponds to the largest group in $\calG$ (lower bound for $\omega_{\calG}^{\sigma}$ for any  $\sigma$). This is a good heuristic since it curtails the spread out of largest group.  \vspace{-0.25cm}
% \squishend

This is followed by the computation of the dispersion parameter, \scalebox{0.9}{$\theta$}, for our Mallows model (Steps 5-6). 
Next, we sample a permutation from the Mallows model (Step 7) \scalebox{0.9}{$\hat{\sigma} \sim  \mathbb{P}_{\theta}(\sigma:\sigma_0) $} and we apply the inverse reference permutation to it, \scalebox{0.9}{$\sigma^* = \sigma_0^{-1} \hat{\sigma}$} to obtain the desired permutation for shuffling. Recall that $\hat{\sigma}$ is (most likely) close to $\sigma_0$, which is unrelated to the original order of the data. \scalebox{0.9}{$\sigma_0^{-1}$} therefore brings \scalebox{0.9}{$\sigma^*$} back to a shuffled version of the original sequence (identity permutation $\sigma_I$). Note that since Alg. 1 is publicly known, the adversary/analyst knows $\sigma_0$. Hence, even in the absence of this step from our algorithm, the adversary/analyst could perform this anyway. Finally, we permute $\by$ according to $\sigma^*$ and output the result \scalebox{0.9}{$\bz = \hat{\sigma}(\by)$} (Steps 9-10).  
%\vspace{-0.03cm} 
\begin{thm} Alg. 1 is $(\alpha,\calG)$-\name~private where $\alpha = \theta \cdot \Delta(\sigma_0 : \textswab{d}, \calG)$.
% \vspace{-0.25cm} 
\label{thm:privacy} 
\end{thm}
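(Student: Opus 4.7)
The plan is to reduce the privacy statement to a pointwise Mallows-ratio bound. Under the composition convention $\tau(\sigma(\by))=(\sigma\tau)(\by)$ fixed in the excerpt, Alg.~1 on input $\sigma(\by)$ outputs the sequence $(\sigma\sigma_0^{-1}\hat\sigma)(\by)$. For any $O\subseteq\mathcal{Y}^n$, define $\Pi_O := \{\pi\in\mathrm{S}_n : \pi(\by)\in O\}$. Then
\[
\mathrm{Pr}[\calA(\sigma(\by))\in O] = \mathrm{Pr}[\sigma\sigma_0^{-1}\hat\sigma \in \Pi_O] = \mathrm{Pr}[\hat\sigma \in \sigma_0\sigma^{-1}\Pi_O],
\]
and analogously with $\sigma'$. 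Since $\eta\mapsto\sigma_0\sigma^{-1}\eta$ is a bijection of $\mathrm{S}_n$, the two Mallows sums are indexed by the same $\Pi_O$, so it suffices to prove the pointwise inequality $\mathrm{Pr}[\hat\sigma=\sigma_0\sigma^{-1}\eta] \leq e^\alpha\,\mathrm{Pr}[\hat\sigma=\sigma_0\sigma'^{-1}\eta]$ for every $\eta$ and then sum.

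Plugging in the Mallows form (the normalizer $\psi$ cancels), the pointwise inequality reduces to $|\textswab{d}(\sigma_0\sigma^{-1}\eta,\sigma_0) - \textswab{d}(\sigma_0\sigma'^{-1}\eta,\sigma_0)| \leq \Delta$. I would combine two standard ingredients. First, the reverse triangle inequality (since $\textswab{d}$ is a metric on $\mathrm{S}_n$) bounds this quantity by $\textswab{d}(\sigma_0\sigma^{-1}\eta, \sigma_0\sigma'^{-1}\eta)$. Second, right-invariance of the rank distance (a standard property of Kendall's $\tau$, the distance for which the excerpt gives a closed-form sensitivity) lets me right-multiply both arguments by $\eta^{-1}\sigma'$, collapsing this to $\textswab{d}(\sigma_0\sigma^{-1}\sigma', \sigma_0)$. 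The arbitrary $\eta$ has now been eliminated.

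Finally, $\sigma\approx_{G_i}\sigma'$ means $\sigma^{-1}\sigma'$ fixes every index outside $G_i$, so the pair $(\mathrm{id},\sigma^{-1}\sigma')$ itself lies in $\mathrm{N}_\calG$. Feeding it into the sensitivity definition yields
\[
\textswab{d}(\sigma_0\sigma^{-1}\sigma',\sigma_0) = \big|\textswab{d}(\sigma_0,\sigma_0) - \textswab{d}(\sigma_0(\sigma^{-1}\sigma'),\sigma_0)\big| \leq \Delta,
\]
closing the chain. Hence the Mallows density ratio is at most $e^{\theta\Delta}=e^\alpha$ pointwise, and summing over $\eta\in\Pi_O$ transfers this bound to the set probabilities, establishing Def.~\ref{def:privacy}.

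The hard part is the $\eta$-elimination step: the output-side permutation $\eta$ is arbitrary and unrelated to the group structure $\calG$, while the sensitivity definition only constrains neighboring pairs in $\mathrm{N}_\calG$. Right-invariance of $\textswab{d}$ is exactly what bridges that gap, converting the $\eta$-dependent distance $\textswab{d}(\sigma_0\sigma^{-1}\eta,\sigma_0\sigma'^{-1}\eta)$ into the $G_i$-supported quantity $\textswab{d}(\sigma_0\sigma^{-1}\sigma',\sigma_0)$ that the sensitivity already covers. Without right-invariance, a general rank distance could make the intermediate quantity depend on $\eta$ and exceed $\Delta$; the theorem is therefore implicitly tied to the right-invariant Kendall-$\tau$ family for which the paper's closed-form sensitivity is stated.
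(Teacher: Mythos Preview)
Your argument is correct for right-invariant rank metrics such as Kendall's $\tau$, but it follows a genuinely different route from the paper's Property~\ref{prop:1}. You reindex both probabilities by the same output set $\Pi_O$, giving matched pairs $(\sigma^{-1}\eta,\sigma'^{-1}\eta)$ that are \emph{not} in $\mathrm{N}_\calG$; you then use the reverse triangle inequality plus right-invariance to transport the comparison to the pair $(\mathrm{id},\sigma^{-1}\sigma')\in \mathrm{N}_\calG$, where the sensitivity definition applies. The paper instead works output-by-output: for a fixed $\bz$ it defines $\Sigma_1,\Sigma_2$ as the sets of permutations mapping $\sigma(\by),\sigma'(\by)$ to $\bz$, and proves a matching lemma showing that every $\sigma_1'\in\Sigma_1$ can be paired with some $\sigma_2'\in\Sigma_2$ with $\sigma_1'\approx_{G_i}\sigma_2'$. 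Because each matched pair is already in $\mathrm{N}_\calG$, the sensitivity bound $|\textswab{d}(\sigma_0\sigma_1',\sigma_0)-\textswab{d}(\sigma_0\sigma_2',\sigma_0)|\le\Delta$ applies directly, term by term, without any appeal to the triangle inequality or right-invariance. Consequently the paper's proof holds for an arbitrary rank-distance function $\textswab{d}$ (the theorem is stated for general $\textswab{d}$), whereas your route needs $\textswab{d}$ to be a right-invariant metric. Your closing remark that the theorem is ``implicitly tied to the right-invariant Kendall-$\tau$ family'' is therefore too pessimistic: the restriction is an artifact of the $\eta$-elimination detour, not of the result itself, and the paper's combinatorial matching avoids it.
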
 
The proof is in App. \ref{app:thm:privacy}.
Note that Alg.  1 provides the same level of privacy \scalebox{0.9}{$(\alpha)$} for any two group assignment \scalebox{0.9}{$\calG, \calG'$} as long as they have the same sensitivity, i.e, \scalebox{0.9}{$\Delta(\sigma_0 : \textswab{d}_\tau, \calG)=\Delta(\sigma_0 : \textswab{d}_\tau, \calG')$}. This leads to the following theorem which generalizes the privacy guarantee for any group assignment. 

\begin{thm} Alg. 1 satisfies 
$(\alpha',\calG')$-\name privacy for any group assignment $\calG'$ with $ \alpha'=\alpha\frac{\Delta(\sigma_0 : \textswab{d}, \calG')}{\Delta(\sigma_0 : \textswab{d}, \calG)}$ (proof in App. \ref{app:thm:generalized}.) %For instance, for Kendall $\tau$'s distance we have $\alpha'=\alpha\frac{\omega^{\sigma_0}_{\calG'}(\omega^{\sigma_0}_{\calG'}-1)}{\omega^{\sigma_0}_{\calG}(\omega^{\sigma_0}_{\calG}-1)}$. 
%  \vspace{-0.2cm}
\label{thm:generalized:privacy}
\end{thm}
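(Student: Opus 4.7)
The plan is to observe that Algorithm 1's output distribution, viewed as a function of the input sequence $\by$, is determined entirely by the fixed (data-independent) reference permutation $\sigma_0$ and the dispersion parameter $\theta$. Neither of these objects references the group assignment $\calG$ used to analyze privacy: $\sigma_0$ was constructed from the BFS on $\calG$, but once chosen it is simply a fixed permutation, and $\theta$ was derived from $\alpha$ and $\Delta(\sigma_0 : \textswab{d}, \calG)$, but the sampling density $\mathbb{P}_{\theta, \textswab{d}}(\,\cdot\, : \sigma_0)$ does not ``know'' about $\calG$. Hence, to establish $(\alpha', \calG')$-privacy, it suffices to rerun the proof of Theorem~\ref{thm:privacy} with $\calG'$ substituted for $\calG$ everywhere, keeping $\theta$ and $\sigma_0$ frozen at the values chosen by Algorithm 1.

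Concretely, I would fix an arbitrary $\by \in \calY^n$, neighboring permutations $(\sigma, \sigma') \in N_{\calG'}$, and an output event $O \subseteq \calY^n$. Following the structure of Theorem~\ref{thm:privacy}, the event $\calA(\sigma(\by)) \in O$ reduces, via the change of variables $\sigma^* = \sigma_0^{-1} \hat{\sigma}$, to a statement about which sampled $\hat{\sigma}$ values from the Mallows distribution produce outputs in $O$. Comparing the two sides $\Pr[\calA(\sigma(\by)) \in O]$ and $\Pr[\calA(\sigma'(\by)) \in O]$ then reduces to comparing Mallows densities at permutations whose distances from $\sigma_0$ differ by at most $\max_{(\sigma,\sigma') \in N_{\calG'}} |\textswab{d}(\sigma_0 \sigma, \sigma_0) - \textswab{d}(\sigma_0 \sigma', \sigma_0)| = \Delta(\sigma_0 : \textswab{d}, \calG')$. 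By definition of the Mallows model, the ratio of these densities is bounded above by $e^{\theta \cdot \Delta(\sigma_0 : \textswab{d}, \calG')}$.

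Finally, substituting the calibration $\theta = \alpha / \Delta(\sigma_0 : \textswab{d}, \calG)$ from Step 6 of Algorithm 1 into this bound yields
\[
\Pr[\calA(\sigma(\by)) \in O] \leq \exp\!\left(\alpha \cdot \frac{\Delta(\sigma_0 : \textswab{d}, \calG')}{\Delta(\sigma_0 : \textswab{d}, \calG)}\right) \cdot \Pr[\calA(\sigma'(\by)) \in O],
\]
which is exactly the $(\alpha', \calG')$-privacy guarantee with the claimed $\alpha'$. The only subtlety worth being careful about is the interaction between the change of variables through $\sigma_0^{-1}$ and the notion of neighboring permutations under $\calG'$: since $\sigma_0$ is fixed and data-independent, the mapping $\hat{\sigma} \mapsto \sigma_0^{-1}\hat{\sigma}$ is a bijection on $\mathrm{S}_n$ whose Jacobian plays no role, so the density-ratio calculation proceeds identically to the $\calG$ case. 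This is the only place where one could trip up, but it is handled the same way as in Theorem~\ref{thm:privacy} itself, so no genuinely new obstacle appears.
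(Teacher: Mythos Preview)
Your proposal is correct and follows essentially the same approach as the paper: both observe that the Mallows mechanism with fixed $\sigma_0$ and $\theta$ is agnostic to which group assignment is used in the analysis, so Property~\ref{prop:1} (equivalently Theorem~\ref{thm:privacy}) applies verbatim with $\calG'$ in place of $\calG$, yielding $\alpha' = \theta \cdot \Delta(\sigma_0:\textswab{d},\calG')$, into which one substitutes $\theta = \alpha / \Delta(\sigma_0:\textswab{d},\calG)$. The paper's own proof is a two-line invocation of Property~\ref{prop:1} without re-deriving the density-ratio bound you spell out, but the logic is identical.
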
 
% Next, we present a utility theorem for Alg. 1 that formalizes the $(\eta,\delta)$-preservation for Hamming distance $\textswab{d}_{H}(\cdot)$  (we chose $\textswab{d}_{H}(\cdot)$ for the ease of numerical computation).
% \begin{thm} For a given set $S \subset [n]$ and Hamming distance metric,  $\textswab{d}_H(\cdot)$,   Alg. \ref{algo:main} is $(\eta,\delta)$-preserving for $\delta=\frac{1}{\psi(\theta, \textswab{d}_H)}\sum_{h=2k+1}^{n} (e^{-\theta\cdot h} \cdot c_h)$ where \scalebox{0.9}{$k=\lceil(1-\eta)\cdot |S|\rceil$} and $c_h$ is the number of permutations with hamming distance $h$ from the reference permutation that do not preserve \scalebox{0.9}{$\eta\%$} of $S$ (exact formula and proof in App. \ref{app:utility}).
% \label{thm:utility}
% \end{thm}
% \vspace{-0.25cm}

% The time complexity of Alg. 1 is dominated by \scalebox{0.9}{$\calG$}'s computation. Regardless of the method for selecting the reference permutation, the time complexity for the shuffling mechanism is at least \scalebox{0.9}{$\Omega(\sum_i|G_i|)$} (minimum computation for listing \scalebox{0.9}{$\calG$}). Note that Alg. \ref{algo:main}'s computation of the reference permutation takes \scalebox{0.9}{$O(|V|+|E|)=O(\sum_i|G_i|)$}. Another observation is that 

%The proof is in  App. \ref{app:thm:generalized}. %A utility theorem for Alg. 1 that formalizes the $(\eta,\delta)$- preservation for Hamming distance $\textswab{d}_{H}(\cdot)$  (we chose $\textswab{d}_{H}(\cdot)$ for the ease of numerical computation) is in App. \ref{app:utility:formal}. 

\textbf{Note.} Producing $\sigma^*$ is completely data ($\by$) independent. It only requires access to the public auxiliary information $\bt$. Hence, Steps $1-6$ can be performed in a pre-processing phase and do not contribute to the actual running time. See App. \ref{app:alg:illustration} for an illustration of Alg. 1 and runtime analysis. \nocite{RIM}
\newcommand{\calib}{\texttt{Cal}}
\vspace{-1.2em}
\section{Evaluation}\label{sec:eval}
\vspace{-1.5em}
\begin{figure*}[ht]
   \begin{subfigure}[b]{0.25\linewidth}
       \centering
       \includegraphics[width=0.95\linewidth]{./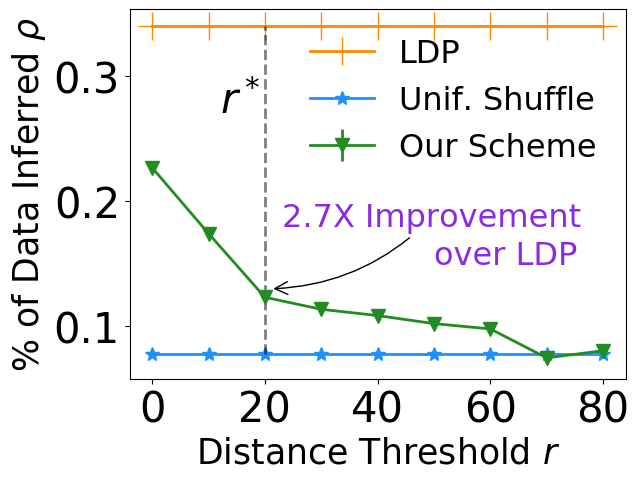}
    %  \vspace{-0.15cm}
       \caption{\textit{PUDF}: Attack}
       \label{fig:Texas:attack}
    \end{subfigure}%%
    % \begin{subfigure}[b]{0.24\linewidth}
    %     \centering
    %     \includegraphics[width=\linewidth]{./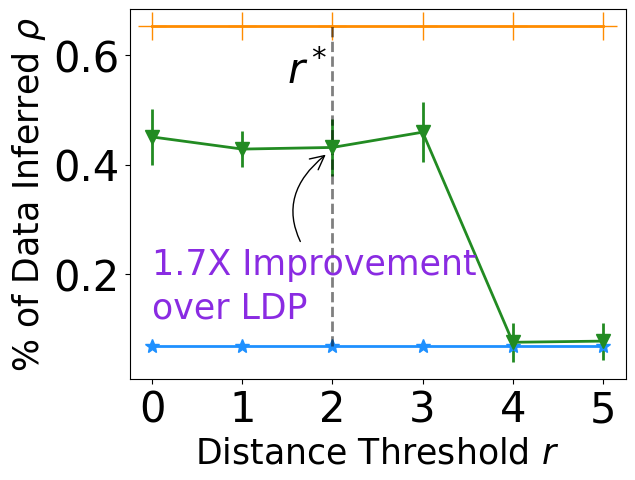}
    %     %\vspace{-0.15cm}
    %     \caption{\textit{Adult}: Attack }%($r$)}
    %     \label{fig:Adult:attack}
    % \end{subfigure}%%
    \begin{subfigure}[b]{0.25\linewidth}
        \centering
        \includegraphics[width=0.95\linewidth]{./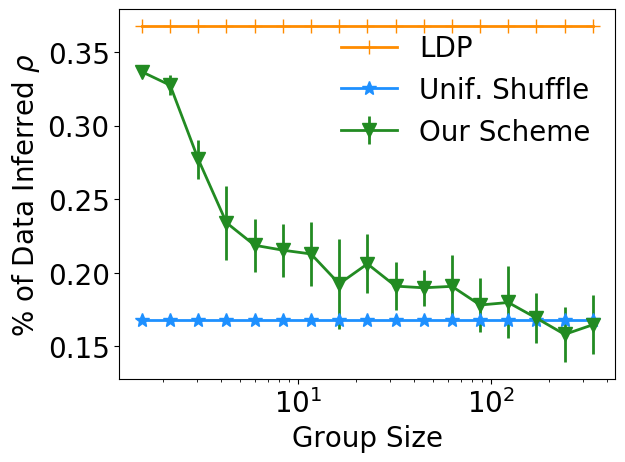}
        \caption{\textit{Twitch}: Attack}
        \label{fig:Twitch:attack}
    \end{subfigure}%%
    %   \begin{subfigure}[b]{0.24\linewidth}
    %     \centering
    %     \includegraphics[width=\linewidth]{./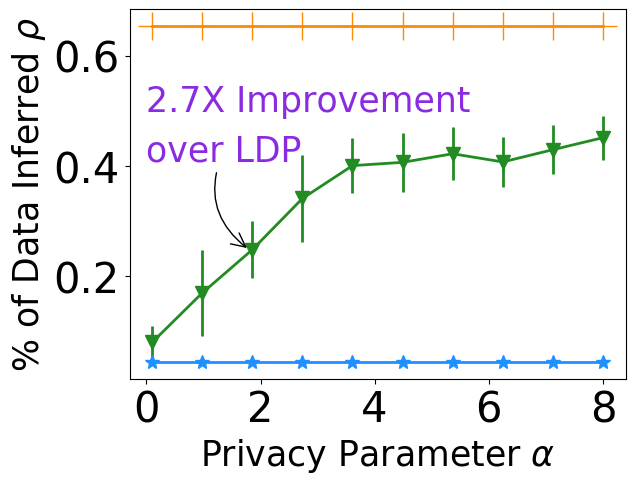}
    %     %   \vspace{-0.15cm} 
    %     \caption{\textit{Adult}: Attack ($\alpha$)}
    %     \label{fig:Adult:alpha}
    % \end{subfigure}\\
    \begin{subfigure}[b]{0.25\linewidth}
        \centering
        \includegraphics[width=0.95\linewidth]{./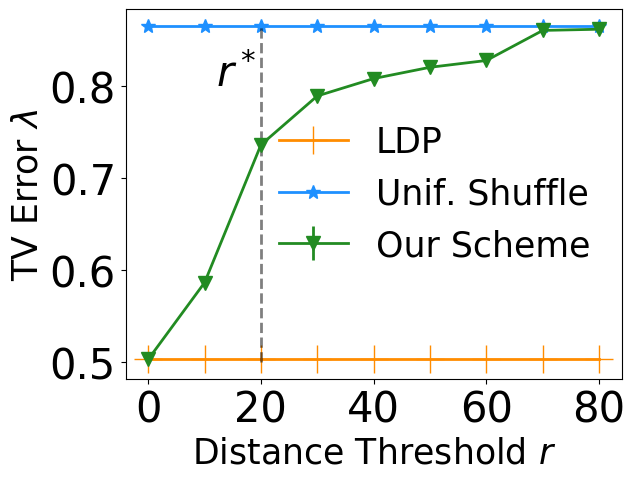}
        %\vspace{-0.15cm}
        \caption{\textit{PUDF}: Learnability}
        \label{fig:Texas:utility}
    \end{subfigure}%%
    % \begin{subfigure}[b]{0.24\linewidth}
    %     \centering
    %     \includegraphics[width=\linewidth]{./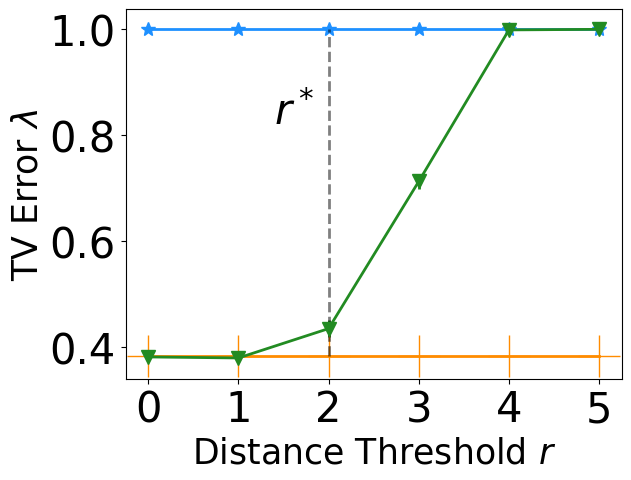}
    %     %\vspace{-0.15cm}
    %     \caption{\textit{Adult}: Learnability}
    %     \label{fig:Adult:utility}
    % \end{subfigure}
    \begin{subfigure}[b]{0.25\linewidth}
        \centering
        \includegraphics[width=0.95\linewidth]{./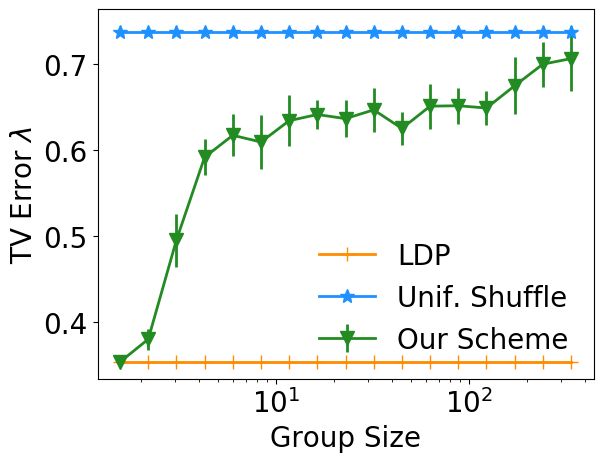}
        \caption{\textit{Twitch}: Learnability}
        \label{fig:Twitch:utility}
    \end{subfigure}
    % \begin{subfigure}[b]{0.24\linewidth}
    %     \centering
    %     \includegraphics[width = \linewidth]{./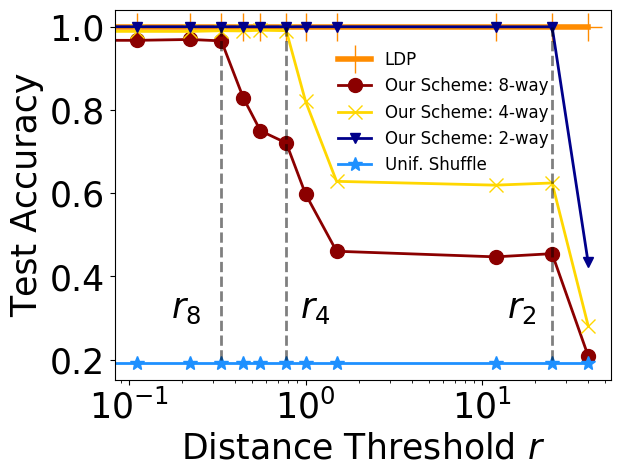}
    %     %\vspace{-0.15cm}
    %     \caption{\textit{Syn}: Learnability}% that acts as the public auxiliary information.}
    %     \label{fig:Syn:utility}
    % \end{subfigure}
    \vspace{-0.2cm}
   \caption{
   Our scheme interpolates between standard LDP (orange line) and uniform shuffling (blue line) in both privacy and data learnability. All plots increase group size along x-axis (except (d)). 
   (a) $\rightarrow$ (b): The fraction of participants vulnerable to an inferential attack.  
   %(d): Attack success with varying $\alpha$ for a fixed $r$. 
   (c) $\rightarrow$ (d): The accuracy of a calibration model trained on $\bz$ predicting the distribution of \ldp outputs at any point $t \in \calT$, such as the distribution of medical insurance types used specifically in the Houston area (not possible when uniformly shuffling across Texas). 
   %(h): Test accuracy of a classifier trained on $\bz$ for the synthetic dataset in Fig. \ref{fig:demonstration}. 
   }
   \label{fig:results}
   \vspace{-0.3cm}
\end{figure*}

The previous sections describe how our shuffling framework interpolates between standard \ldp and uniform random shuffling. We now experimentally evaluate this asking the following two questions -- 

\textbf{Q1.} Does the Alg. 1 mechanism protect against realistic inference attacks? \\\textbf{Q2.} How well can Alg. 1 tune a model's ability to learn trends within the shuffled data, i.e., tune \emph{data learnability}?

We evaluate on four datasets.  We are not aware of any prior work that provides comparable local inferential privacy. Hence, we baseline our mechanism with the two extremes: standard \textsf{LDP} and uniform random shuffling.  For concreteness, we detail our procedure with the \textit{PUDF} dataset~\citep{PUDF} \href{https://www.dshs.state.tx.us/THCIC/Hospitals/Download.shtm}{(license)}, which comprises $n \approx 29$k psychiatric patient records from Texas. Each data owner's sensitive value $x_i$ is their medical payment method, which is reflective of socioeconomic class (such as medicaid or charity). Public auxiliary information $t \in \calT$ is the hospital's geolocation. Such information is used for understanding how payment methods (and payment amounts) vary from town to town for insurances in practice \citep{insurance}. Uniform shuffling across Texas precludes such analyses. Standard \ldp risks inference attacks, since patients attending hospitals in the same neighborhood have similar socioeconomic standing and use similar payment methods, allowing an adversary to correlate their noisy $y_i$'s. To trade these off, we apply Alg. 1 with $d(\cdot)$ being distance (km) between hospitals, $\alpha = 4$ and Kendall's $\tau$ rank distance measure for permutations. %For each $r$, we recompute the grouping $\calG$, and produce a new shuffled $\bz$. 

Our inference attack predicts $\DO_i$'s $x_i$ by taking a majority vote of the $z_j$ values of the $25$ data owners within $r^*$ of $t_i$ and who are most similar to $\DO_i$ w.r.t some additional privileged auxiliary information $t^p_j \in \calT_p$. For PUDF, this includes the $25$ data owners who attended hospitals that are within $r^*$ km of $\DO_i$'s hospital, and are most similar in payment amount $t^p_j$. Using an \scalebox{1}{$\epsilon = 2.5$} randomized response mechanism, we resample the \ldp sequence $\by$ 50 times, and apply Alg. 1's chosen permutation to each, producing 50 $\bz$'s. We then mount the majority vote attack on each $x_i$ for each $\bz$. If the attack on a given $x_i$ is successful across $\geq 90\%$ of these \ldp trials, we mark that data owner as vulnerable -- although they randomize with \textsf{LDP}, there is a $\geq 90\%$ chance that a simple inference attack can recover their true value. We record the fraction of vulnerable data owners as $\rho$. %(an adversary could very feasibly know which data owners are vulnerable a priori). 
We report 1-standard deviation error bars over 10 trials.   

Additionally, we evaluate \emph{data learnability} --  how well the underlying statistics of the dataset are preserved across $\calT$. %how well a model trained on $\bz$ can infer the distribution of $x_i$'s at any point $t \in \calT$. 
For \textit{PUDF}, this means training a model on the shuffled $\bz$ to predict the distribution of payment methods used near, for instance, $t_i=$ Houston for $\DO_i$. For this, we train a calibrated model, \scalebox{1}{$\calib:\calT \rightarrow \mathcal{D}_x$}, on the shuffled outputs where \scalebox{1}{$\mathcal{D}_x$} is the set of all distributions on the domain of sensitive attributes \scalebox{1}{$\calX$}. We implement $\calib$ as a gradient boosted decision tree (GBDT) model \citep{gradientboosting} calibrated with Platt scaling \citep{calibration}. For each location $t_i$, we treat the empirical distribution of $x_i$ values within $r^*$ as the ground truth distribution at $t_i$, denoted by \scalebox{1}{$\mathcal{E}(t_i) \in \mathcal{D}_x$}. Then, for each $t_i$, we measure the Total Variation error between the predicted and ground truth distributions \scalebox{1}{$\text{TV}\big( \mathcal{E}(t_i), \calib_r(t_i) \big)$}. We then report $\lambda(r)$ -- the average TV error for distributions predicted at each $t_i \in \bt$ normalized by the TV error of naively guessing the uniform distribution at each $t_i$. With standard \textsf{LDP}, this task can be performed relatively well at the risk of inference attacks. With uniformly shuffled data, it is impossible to make geographically localized predictions unless the distribution of payment methods is identical in every Texas locale.

We additionally perform the above experiments on the following three datasets 
\squishlist \vspace{-0.25cm} 
% \item \textit{PUDF}.  The payment method (which can be reflective of social class) is considered private $\calX = \{$ medicare, medicaid, charity, private$\}$. $\calT$ is the hospital zipcode and $\calT_P$ is the charge amount. In fact, this correlation is used for medical insurances in practice \cite{insurance}. \vspace{-0.05cm}
\item \textit{\href{http://snap.stanford.edu/data/twitch-social-networks.html}{Twitch}}~\citep{twitch}. This dataset, gathered from the \emph{Twitch} social media platform, includes a graph of $\approx 9K$ edges (mutual friendships) along with node features. The user's history of explicit language is private $\calX = \{0,1\}$. $\calT$ is a user's mutual friendships, i.e. $t_i$ is the $i$'th row of the graph's adjacency matrix. We do not have any $\calT_P$ here and select the 25 neighbors randomly. 
\item \textit{Syn}. This is a synthetic dataset of size $20K$ which can be classified at three granularities -- 8-way, 4-way and 2-way  (Fig. \ref{fig:data} shows a scaled down version of the dataset). The eight color labels are private $\calX = [8]$; the 2D-positions are public $\calT = \R^2$. For learnability, we measure the accuracy of $8$-way, $4$-way and $2$-way GBDT models trained on $\bz$ on an equal sized test set at each $r$.
\item \textit{\href{https://archive.ics.uci.edu/ml/datasets/Adult}{Adult}}~\citep{adult}. This dataset is derived from the 1994 Census and has $\approx33$K records.  Whether $\DO_i$'s annual income is $\geq 50$k is considered private, $\calX = \{\geq 50k, <50k\}$. $\calT = [17, 90]$ is age and $\calT_P$ is the  individual's marriage status. Due to lack of space figures are in App. \ref{app:adult experiments}. %\textcolor{blue}{Trends observed on Adult reinforce those of the three datasets included here, see  for figures.}
% \vspace{-0.05cm}
% \vspace{-0.1cm}
\squishend 

%\vspace{-0.3cm}
\subsection{Experimental Results}
%The plots of Figure \ref{fig:results} indicate that our scheme successfully interpolates between standard \ldp and uniform shuffling. We return to our two primary questions: 
\textbf{Q1.} Our formal guarantee on the inferential privacy loss (Thm. \ref{thm: semantic guarantee}) is described w.r.t to a `strong' adversary (with access to  \scalebox{1}{$\{y_{G_i}\},\by_{\overline{G}_i}$}). Here, we test how well does our proposed scheme (Alg. 1) protect against inference attacks on real-world datasets without any such assumptions. Additionally, to make our attack more realistic, the adversary has access to extra privileged auxiliary information $\calT_P$ which is \textit{not used} by Alg. \ref{algo:main}.  Fig. \ref{fig:Texas:attack}$\rightarrow$ \ref{fig:Twitch:attack} show  that our scheme significantly reduces the attack efficacy. For instance, $\rho$ is reduced by \scalebox{1}{$2.7X$} at the attack distance threshold $r^*$ for \textit{PUDF}.  \begin{wrapfigure}{r}{0.3\linewidth}
     \vspace{-0.4cm}
    \hspace{-0.8cm}
    \centering
    \includegraphics[height=3cm]{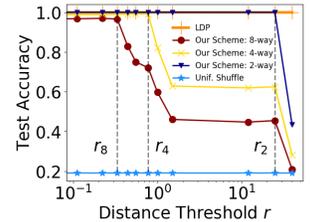} 
    \vspace{-0.3cm}
    \caption{\textit{Syn}: Learnability}% that acts as the public auxiliary information.}
    \label{fig:Syn:utility}
    \vspace{-1.5em}
\end{wrapfigure} Additionally, $\rho$ for our scheme varies from that of \textsf{LDP}\footnote{Our scheme gives lower $\rho$ than \ldp at \scalebox{1}{$r=0$} because the resulting groups are non-singletons. For instance, for PUDF, $G_i$ includes all individuals with the same zipcode as $\DO_i$.} (minimum privacy)   to uniform shuffle (maximum privacy) with increasing $r$ (equivalently group size as in Fig. \ref{fig:Twitch:attack}) thereby spanning the entire privacy spectrum. As expected, $\rho$ decreases with decreasing privacy parameter $\alpha$ (Fig. \ref{fig:Adult:alpha}).
\\\textbf{Q2.} Fig.\ref{fig:Texas:utility} \scalebox{1}{$\rightarrow$} \ref{fig:Twitch:utility} show that $\lambda$ varies from that of \ldp (maximum learnability) to that of uniform shuffle (minimum learnability) with increasing $r$ (equivalently, group size), thereby providing tunability.  Interestingly, for \textit{Adult} our scheme reduces $\rho$ by \scalebox{1}{$1.7X$} at the same $\lambda$ as that of \ldp for \scalebox{1}{$r=1$} (Fig. \ref{fig:Adult:utility}). Fig. \ref{fig:Syn:utility} shows that the distance threshold $r$ defines the granularity at which the data can be classified. \ldp allows 8-way classification while uniform shuffling allows none. The granularity of classification can be tuned by our scheme -- $r_8$, $r_4$ and $r_2$ mark the thresholds for $8$-way, $4$-way and $2$-way classifications, respectively. %Experiments on evaluation of \scalebox{1}{$(\eta,\delta)$}-preservation are in  App. \ref{app:extraresults}.

% \vspace{-0.5cm}
\section{Conclusion}
%\vspace{-0.5cm}
We have proposed a new privacy definition, \name-privacy that casts new light on the inferential privacy benefits of shuffling and a novel shuffling mechanism to achieve the same. %Additionally, we propose a generalized shuffle framework that interpolates between \ldp and uniform shuffling in terms of the protection afforeded against inference attacks and data learnability.

\newpage

% \section{Ethis Statement}
% It is the aim of this paper to formalize the enhanced privacy guarantees offered by shuffling, to provide intuition of what those formal guarantees semantically offer to data owners, and to demonstrate an algorithm + experiments which offer these guarantees while meeting analyst utility requirements. We feel that all of these aims as well as the public datasets used are ethical. 

% \section{Reproducibility Statement}
% The majority of this paper formalizes a novel perspective on the privacy guarantees achieved by shuffling (i.e. randomizing the order of the data as opposed to the values). Detailed proofs as well as intuitive discussions are provided in the Appendix. All datasets are public. A \texttt{.zip} file demonstrating code of each experiment has been uploaded as supplementary material. 

\bibliography{refs}
\bibliographystyle{iclr2022_conference.bst}

\appendix

% \twocolumn
% [\textbf{Local Inferential Privacy through Data Shuffling -- Supplementary Material }]

\section{Appendix}\label{app}
\subsection{Background Cntd.}\label{app:background}

\subsection{Local Inferential Privacy}  \vspace{-0.2cm}
%introduce pufferfish inferntial log loss 
%In this section, we introduce some context for inferential privacy in the \ldp setting. 
%Inferential privacy captures the privacy loss in the face of an informed adversary in a Bayesian framework. 
Local inferential privacy captures what information a Bayesian adversary \cite{Pufferfish}, with some prior, can learn in the \ldp setting. 
Specifically, it measures the largest possible ratio between the adversary's posterior and prior beliefs about an individual’s data after observing a mechanism's output .%\footnote{This quantity is identical to the \ldp parameter of the mechanism when\textit{individuals’ data are independent}\cite{sok,}.}.
\begin{defn}(Local Inferential Privacy Loss \cite{Pufferfish}) Let $\bx=\langle x_1, \cdots, x_n\rangle$ and let $\by=\langle y_1, \cdots, y_n \rangle$ denote the input (private) and output sequences (observable to the adversary) in the \ldp setting. Additionally, the adversary's auxiliary knowledge is modeled by a prior distribution $\mathcal{P}$ on $\mathbf{x}$. The inferential privacy loss for the input sequence $\mathbf{x}$ is given by
% \vspace{0cm} 
\begin{equation}
% \vspace{-0.1cm}
\small \mathbb{L}_{\calP}(\mathbf{x})=\max_{\substack{i\in [n]\\ a,b \in \calX}}\Bigg(\log\frac{\mathrm{Pr}_{\calP}[\mathbf{y}|x_i=a]}{\mathrm{Pr}_{\calP}[\mathbf{y}|x_i=b]}\Bigg)
=\max_{\substack{i\in [n]\\ a,b \in \calX}}\Bigg (	\bigg| \log \frac{\mathrm{Pr}_{\calP}[x_i = a | \bf{y} ]}{\mathrm{Pr}_{\calP}[x_i = b | \bf{y}]}
	- \log \frac{\mathrm{Pr}_{\calP}[x_i = a]}{\mathrm{Pr}_{\calP}[x_i = b]} \bigg|\Bigg)
\end{equation}
\label{def:ip}
\vspace{-1em}
\end{defn}
% Using Bayes' theorem, we have\vspace{-0.2cm}
% \begin{gather*}\small\vspace{-0.4cm} \mathbb{L}_{\calP}(\mathbf{x})=\max_{\substack{i\in [n]\\ a,b \in \calX}}\Bigg (	\bigg| \log \frac{\mathrm{Pr}_{\calP}[x_i = a | \bf{y} ]}{\mathrm{Pr}_{\calP}[x_i = b | \bf{y}]}
% 	- \log \frac{\mathrm{Pr}_{\calP}[x_i = a]}{\mathrm{Pr}_{\calP}[x_i = b]} \bigg|\Bigg)\vspace{-0.7cm}\end{gather*}
Bounding  $\mathbb{L}_{\calP}(\mathbf{x})$  would imply
 that the adversary's belief about the value of any $x_i$ does not change by much even after observing the output sequence $\bf{y}$. This means that an informed adversary does not learn much about the individual $i$'s private input upon observation of the entire private dataset $\by$.

Here we define two rank distance measures \begin{defn}[Kendall's $\tau$ Distance] For any two permutations, $\sigma, \pi \in \mathrm{S}_n$, the Kendall's $\tau$
distance $\textswab{d}_{\tau}(\sigma, \pi)$ counts the number of pairwise disagreements between $\sigma$ and $\pi$, i.e., the
number of item pairs that have a relative order in one permutation and a different order in
the other. Formally, \begin{gather*}\textswab{d}_{\tau}(\sigma,\pi)=\Big| \ \big\{(i,j) : i < j,  \big[\sigma(i) > \sigma(j) \wedge \pi(i) < \pi(j) \big]\\\hspace{2cm}
		\vee \big[\sigma(i) < \sigma(j) \wedge \pi(i) > \pi(j)\big] \big\} \ \Big|\numberthis\label{eq:kendalltau}\end{gather*}  \label{def:kendall} \end{defn}
%Equivalently, $d_{\tau}(\sigma, \pi)$ is defined as the number of adjacent swaps to convert$\sigma^{-1}$into $\pi^{-1}$.
For example, if $\sigma=(1 \:\: 2 \: \: 3 \: \: 4 \:  \: 5 \: \: 6 \: \: 7 \: \: 8 \: 9 \: 10)$ and  $\pi=(1\:2\:3\:\underline{6} \: 5 \: \underline{4}\:7\:8\:9\:10)$, then $\textswab{d}_{\tau}(\sigma,\pi)=3$.

 Next, Hamming distance measure is defined as follows.
 
\begin{defn}[Hamming Distance] 
For any two permutations, $\sigma, \pi \in \mathrm{S}_n$, the Hamming distance $\textswab{d}_{H}(\sigma, \pi)$ counts the number of positions in which the two permutations disagree. Formally, 
\begin{align*}
    \textswab{d}_H(\sigma, \pi)
    &= \Big| \big\{ i \in [n] : \sigma(i) \neq \pi(i) \big\} \Big| 
\end{align*}
Repeating the above example, if $\sigma=(1 \:\: 2 \: \: 3 \: \: 4 \:  \: 5 \: \: 6 \: \: 7 \: \: 8 \: 9 \: 10)$ and  $\pi=(1 \: 2 \: 3 \: \underline{6} \: 5 \: \underline{4} \: 7 \: 8 \: 9 \: 10)$, then $\textswab{d}_{H}(\sigma,\pi)=2$.
\end{defn}

\subsection{\name-privacy and the De Finetti attack}
\label{app:de finetti}
We now show that a strict instance of \name privacy is sufficient for thwarting any de Finetti attack \cite{definetti} on individuals. The de Finetti attack involves a Bayesian adversary, who, assuming some degree of correlation between data owners, attempts to recover the true permutation from the shuffled data. As written, the de Finetti attack assumes the sequence of sensitive attributes and side information $(x_1, t_1), \dots, (x_n, t_n)$ are \emph{exchangeable}: any ordering of them is equally likely. By the de Finetti theorem, this implies that they are i.i.d. conditioned on some latent measure $\theta$. To balance privacy with utility, the $\bx$ sequence is non-uniformly randomly shuffled w.r.t. the $\bt$ sequence producing a shuffled sequence $\bz$, which the adversary observes. Conditioning on $\bz$ the adversary updates their posterior on $\theta$ (i.e. posterior on a model predicting $x_i | t_i$), and thereby their posterior predictive on the true $\bx$. The definition of privacy in \cite{definetti} holds that the adversary's posterior beliefs are close to their prior beliefs by some metric on distributions in $\calX$, $\delta(\cdot, \cdot)$: 
\begin{align*}
    \delta\Big( \Pr[x_i], \Pr[x_i | \bz] \Big) \leq \alpha 
\end{align*}

We now translate the de Finetti attack to our setting. First, to align notation with the rest of the paper we provide privacy to the sequence of $\ldp$ values $\by$ since we shuffle those instead of the $\bx$ values as in \cite{definetti}. We use max divergence (multiplicative bound on events used in \DP) for $\delta$: 
\begin{align*}
    \Pr[y_i \in O] &\leq e^\alpha \Pr[y_i \in O | \bz] \\
    \Pr[y_i \in O | \bz] &\leq e^\alpha \Pr[y_i \in O]
\end{align*}
which, for compactness, we write as 
\begin{align}
    \Pr[y_i \in O] \approx_\alpha \Pr[y_i \in O | \bz] \quad. 
    \label{eq:definetti privacy}
\end{align}
We restrict ourselves to shuffling mechanisms, where we only randomize the order of sensitive values. By learning the unordered values $\{y\}$ alone, an adversary may have arbitrarily large updates to its posterior (e.g. if all values are identical), breaking the privacy requirement above. With this in mind, we assume the adversary already knows the unordered sequence of values $\{y\}$ (which they will learn anyway), and has a prior on permutations $\sigma$ allocating values from that sequence to individuals. We then generalize the de Finetti problem to an adversary with an \emph{arbitrary} prior on the true permutation $\sigma$, and observes a randomize permutation $\sigma'$ from the shuffling mechanism. We require that the adversary's prior belief that $\sigma(i) = j$ is close to their posterior belief for all $i,j \in [n]$: 
\begin{align}
    \Pr[\sigma \in \Sigma_{i,j} ] \approx_\alpha \Pr[\sigma \in \Sigma_{i,j} | \sigma'] \quad \forall i,j \in [n], \forall \sigma' \in S_n \quad ,
    \label{eq:definetti privacy II}
\end{align}
where $\Sigma_{i,j} = \{\sigma \in S_n : \sigma(i) = j\}$, the set of permutations assigning element $j$ to $\DO_i$. Conditioning on any unordered sequence $\{y\}$ with all unique values, the above condition is necessary to satisfy Eq. \eqref{eq:definetti privacy} for events of the form $O = \{y_i = a\}$, since $\{y_i = a\} = \{\Sigma_{i,j}\}$ for some $j \in [n]$. For any $\{y\}$ with repeat values, it is sufficient since $\Pr[y_i = a]$ is the sum of probabilities of disjoint events of the form $\Pr[\sigma \in \Sigma_{i,k}]$ for various $k \in [n]$ values. 

We now show that a strict instance of \name-privacy satisfies Eq. \eqref{eq:definetti privacy II}. Let $\widehat{\calG}$ be any group assignment such that at least one $G_i \in \widehat{\calG}$ includes all data owners, $G_i = \{1, 2, \dots, n\}$. 

\begin{prope}
A $(\widehat{\calG}, \alpha)$-\name-private shuffling mechanism $\sigma' \sim \calA$ satisfies  
\begin{align*}
    \Pr[\sigma \in \Sigma_{i,j} ] \approx_\alpha \Pr[\sigma \in \Sigma_{i,j} | \sigma']
\end{align*}
for all $i,j \in [n]$ and all priors on permutations $\Pr[\sigma]$. 
\end{prope}

\begin{proof}

\begin{lemma}
\label{lem:definetti equivalent}
    For any prior $\Pr[\sigma]$, Eq. \eqref{eq:definetti privacy II} is equivalent to the condition
    \begin{align}
        \frac{\sum_{\hat{\sigma} \in \overline{\Sigma}_{i,j}} \Pr[\hat{\sigma}] \Pr[\sigma' | \hat{\sigma}] }{
        \sum_{\hat{\sigma} \in {\Sigma_{i,j}}} \Pr[\hat{\sigma}] \Pr[\sigma' | \hat{\sigma}] } 
        \approx_\alpha 
        \frac{\sum_{\hat{\sigma} \in \overline{\Sigma}_{i,j}} \Pr[\hat{\sigma}] }{
        \sum_{\hat{\sigma} \in {\Sigma_{i,j}}} \Pr[\hat{\sigma}] }
        \label{eq:definetti privacy III}
    \end{align}
    where the set $\overline{\Sigma}_{i,j}$ is the complement of ${\Sigma}_{i,j}$. 
\end{lemma}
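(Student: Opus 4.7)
The proof plan is to apply Bayes' rule and the law of total probability to transform the posterior probability $\Pr[\sigma \in \Sigma_{i,j}\mid \sigma']$ appearing in Eq.~(5) into a ratio of weighted sums over the two complementary sets $\Sigma_{i,j}$ and $\overline{\Sigma}_{i,j}$, and then observe that Eq.~(6) is precisely a statement about the corresponding posterior/prior odds.

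First, by Bayes' rule and the law of total probability, I would write
\begin{align*}
\Pr[\sigma \in \Sigma_{i,j} \mid \sigma']
= \frac{\Pr[\sigma' \mid \sigma \in \Sigma_{i,j}]\,\Pr[\sigma \in \Sigma_{i,j}]}{\Pr[\sigma']}
= \frac{\sum_{\hat{\sigma} \in \Sigma_{i,j}} \Pr[\hat{\sigma}]\,\Pr[\sigma' \mid \hat{\sigma}]}{\Pr[\sigma']},
\end{align*}
with the analogous identity for $\overline{\Sigma}_{i,j}$. Dividing the two expressions cancels the common normalizer $\Pr[\sigma']$ and yields
\begin{align*}
\frac{\Pr[\sigma \in \overline{\Sigma}_{i,j} \mid \sigma']}{\Pr[\sigma \in \Sigma_{i,j} \mid \sigma']}
= \frac{\sum_{\hat{\sigma} \in \overline{\Sigma}_{i,j}} \Pr[\hat{\sigma}]\,\Pr[\sigma' \mid \hat{\sigma}]}{\sum_{\hat{\sigma} \in \Sigma_{i,j}} \Pr[\hat{\sigma}]\,\Pr[\sigma' \mid \hat{\sigma}]}.
\end{align*}
The prior odds $\Pr[\overline{\Sigma}_{i,j}]/\Pr[\Sigma_{i,j}]$ is trivially the right-hand side of Eq.~(6). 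So Eq.~(6) is exactly the statement that the posterior odds are multiplicatively $e^{\pm\alpha}$-close to the prior odds.

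Next, I would argue the equivalence of the odds-ratio form with Eq.~(5). For the forward direction ($(5) \Rightarrow (6)$), the family $\{\Sigma_{i,k}\}_{k \in [n]}$ partitions $S_n$, so summing the $\approx_\alpha$ relation over $k \neq j$ gives $\Pr[\overline{\Sigma}_{i,j}] \approx_\alpha \Pr[\overline{\Sigma}_{i,j} \mid \sigma']$; the ratio of two such $\approx_\alpha$ relations yields the bound on the odds in Eq.~(6). For the converse ($(6) \Rightarrow (5)$), I would invert the odds via $p = 1/(1 + \mathrm{odds})$ and observe that a multiplicative bound on the odds translates into a multiplicative bound on each $\Pr[\Sigma_{i,j} \mid \sigma']$ once we use the normalization $\sum_k \Pr[\Sigma_{i,k} \mid \sigma'] = 1$ jointly across all $j$.

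The main obstacle is the converse direction: transferring a multiplicative bound on a ratio back to multiplicative bounds on the two individual probabilities with the same constant $\alpha$ requires care, since a naive argument based on just inverting a single odds relation loses a factor of two. I expect the cleanest resolution to exploit the partition structure of $\{\Sigma_{i,k}\}_k$: because Eq.~(6) holds uniformly over all $j \in [n]$ (and thus over all coordinates of the partition), the normalization constraint pins down each $\Pr[\Sigma_{i,j} \mid \sigma']$ from the prior together with the odds ratios, recovering Eq.~(5) with the same $\alpha$.
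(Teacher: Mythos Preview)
Your Bayes-rule rewriting that identifies the left side of Eq.~(6) as the posterior odds $\Pr[\overline{\Sigma}_{i,j}\mid\sigma']/\Pr[\Sigma_{i,j}\mid\sigma']$ and the right side as the prior odds is exactly the content of the paper's argument. Where you diverge is in how you argue the equivalence itself. The paper never invokes the partition $\{\Sigma_{i,k}\}_{k}$: it works with one fixed $(i,j)$ and presents a chain of algebraic rewritings of the single statement $\Pr[\sigma\in\Sigma_{i,j}]\approx_\alpha\Pr[\sigma\in\Sigma_{i,j}\mid\sigma']$ --- apply Bayes, cancel the common factor $\Pr[\sigma\in\Sigma_{i,j}]$, split the normalizer $\Pr[\sigma']$ into its $\Sigma_{i,j}$ and $\overline{\Sigma}_{i,j}$ parts, subtract the $\Sigma_{i,j}$ part from both sides, and divide --- landing directly on Eq.~(6).

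Your partition-based route for $(5)\Rightarrow(6)$ has a gap you do not flag: summing the $\approx_\alpha$ bounds over $k\neq j$ does give $\Pr[\overline{\Sigma}_{i,j}]\approx_\alpha\Pr[\overline{\Sigma}_{i,j}\mid\sigma']$, but then dividing this by the corresponding bound for $\Sigma_{i,j}$ only yields a $2\alpha$ bound on the odds ratio, not $\alpha$. You correctly anticipate this kind of loss in the converse direction but overlook it here. The paper's single-$(i,j)$ algebraic chain avoids the partition entirely; to be fair, its subtraction step (passing from $a+b\approx_\alpha a/p$ to $b\approx_\alpha a(1-p)/p$) is itself delicate, since subtracting a common positive quantity from both sides of an $\approx_\alpha$ relation does not in general preserve the constant, so neither argument is fully tight with the same $\alpha$. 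If you want to follow the paper, drop the partition and instead manipulate the single statement for $(i,j)$ directly.
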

Under grouping $\hat{\calG}$, every permutation $\sigma_a \in {\Sigma}_{i,j}$ neighbors every permutation $\sigma_b \in \overline{\Sigma}_{i,j}$, $\sigma_a \approx_{\hat{\calG}} \sigma_b$, for any $i,j$. By the definition of \name-privacy, we have that for any observed permutation $\sigma'$ output by the mechanism: 
\begin{align*}
    \Pr[\sigma' | \sigma = \sigma_a] \approx_\alpha \Pr[\sigma' | \sigma = \sigma_b]
    \quad \forall \sigma_a \in {\Sigma}_{i,j}, \sigma_b \in \overline{\Sigma}_{i,j}, \sigma' \in S_n 
    \quad .
\end{align*}
This implies Eq. \ref{eq:definetti privacy III}. Thus, $(\widehat{\calG}, \alpha)$-\name-privacy implies Eq. \ref{eq:definetti privacy III}, which implies Eq. \ref{eq:definetti privacy II}, thus proving the property. 
\end{proof}

Using Lemma \ref{lem:definetti equivalent}, we may also show that this strict instance of \name-privacy is \emph{necessary} to block all de Finetti attacks: 

\begin{prope}
A $(\widehat{\calG}, \alpha)$-\name-private shuffling mechanism $\sigma' \sim \calA$ is necessary to satisfy 
\begin{align*}
    \Pr[\sigma \in \Sigma_{i,j} ] \approx_\alpha \Pr[\sigma \in \Sigma_{i,j} | \sigma']
\end{align*}
for all $i,j \in [n]$ and all priors on permutations $\Pr[\sigma]$. 
\end{prope}

\begin{proof}
If our mechanism $\calA$ is not $(\widehat{\calG}, \alpha)$-\name-private, then for some pair of true (input) permutations $\sigma_a \neq \sigma_b$ and some released permutation $\sigma' \sim \calA$, we have that 
\begin{align*}
    \Pr[\sigma' | \sigma_b] \geq e^\alpha \Pr[\sigma' | \sigma_a]\quad. 
\end{align*}
Under $\hat{\calG}$, all permutations neighbor each other, so $\sigma_a \approx_{\hat{\calG}} \sigma_b$. Since $\sigma_a \neq \sigma_b$, then for some $i,j \in [n]$, $\sigma_a \in \Sigma_{i,j}$ and $\sigma_b \in \overline{\Sigma}_{i,j}$: one of the two permutations assigns some $j$ to some $\DO_i$ and the other does not. Given this, we may construct a bimodal prior on the true $\sigma$ that assigns half its probability mass to $\sigma_a$ and the rest to $\sigma_b$, 
\begin{align*}
    \Pr[\sigma_a] = \Pr[\sigma_b] = \frac{1}{2} \quad .
\end{align*}
Therefore, for released permutation $\sigma'$, the RHS of Eq. \ref{eq:definetti privacy III} is 1, and the LHS is 
\begin{align*}
    \frac{\sum_{\hat{\sigma} \in \overline{\Sigma}_{i,j}} \Pr[\hat{\sigma}] \Pr[\sigma' | \hat{\sigma}] }{
        \sum_{\hat{\sigma} \in {\Sigma_{i,j}}} \Pr[\hat{\sigma}] \Pr[\sigma' | \hat{\sigma}] }
        &= \frac{\nicefrac{1}{2} \Pr[\sigma' | \sigma_b]}{\nicefrac{1}{2} \Pr[\sigma' | \sigma_a]} \\
        &\geq e^\alpha \quad , 
\end{align*}
violating Eq. \ref{eq:definetti privacy III}, thus violating Eq. \ref{eq:definetti privacy II}, and failing to prevent de Finetti attacks against this bimodal prior. 
\end{proof}

Ultimately, unless we satisfy \name-privacy shuffling the entire dataset, there exists some prior on the true permutation $\Pr[\sigma]$ such that after observing the shuffled $\bz$ permuted by $\sigma'$, the adversary's posterior belief on one permutation is larger than their prior belief by a factor $\geq e^\alpha$. If we suppose that the set of values $\{y\}$ are all distinct, this means that for some $a \in \{y\}$, the adversary's belief that $y_i = a$ is signficantly larger after observing $\bz$ than it was before. 

Now to prove Lemma \ref{lem:definetti equivalent}: 
\begin{proof}
\begin{align*}
    \Pr[\sigma \in \Sigma_{i,j} ] 
    &\approx_\alpha \Pr[\sigma \in \Sigma_{i,j} | \sigma'] \\
    \Pr[\sigma \in \Sigma_{i,j} ]
    &\approx_\alpha \frac{\Pr[\sigma' | \sigma \in \Sigma_{i,j}] \Pr[\sigma \in \Sigma_{i,j} ]}{\sum_{\hat{\sigma} \in S_n} \Pr[\hat{\sigma}] \Pr[\sigma' | \hat{\sigma}]} \\
    \sum_{\hat{\sigma} \in S_n} \Pr[\hat{\sigma}] \Pr[\sigma' | \hat{\sigma}] 
    &\approx_\alpha \Pr[\sigma' | \sigma \in \Sigma_{i,j}] \\
    \sum_{\hat{\sigma} \in S_n} \Pr[\hat{\sigma}] \Pr[\sigma' | \hat{\sigma}] 
    &\approx_\alpha \Pr[\sigma \in \Sigma_{i,j}]^{-1} \sum_{\hat{\sigma} \in \Sigma_{i,j}} \Pr[\hat{\sigma}] \Pr[\sigma' | \hat{\sigma}] \\
    \sum_{\hat{\sigma} \in \Sigma_{i,j}} \Pr[\hat{\sigma}] \Pr[\sigma' | \hat{\sigma}] +
    \sum_{\hat{\sigma} \in \overline{\Sigma}_{i,j}} \Pr[\hat{\sigma}] \Pr[\sigma' | \hat{\sigma}] 
    &\approx_\alpha \Pr[\sigma \in \Sigma_{i,j}]^{-1} \sum_{\hat{\sigma} \in \Sigma_{i,j}} \Pr[\hat{\sigma}] \Pr[\sigma' | \hat{\sigma}] \\
    \sum_{\hat{\sigma} \in \overline{\Sigma}_{i,j}} \Pr[\hat{\sigma}] \Pr[\sigma' | \hat{\sigma}] 
    &\approx_\alpha \sum_{\hat{\sigma} \in \Sigma_{i,j}} \Pr[\hat{\sigma}] \Pr[\sigma' | \hat{\sigma}] 
    \Big( \frac{1}{\Pr[\sigma \in \Sigma_{i,j}]} - 1 \Big)  \\
    \frac{\sum_{\hat{\sigma} \in \overline{\Sigma}_{i,j}} \Pr[\hat{\sigma}] \Pr[\sigma' | \hat{\sigma}] }{
    \sum_{\hat{\sigma} \in {\Sigma_{i,j}}} \Pr[\hat{\sigma}] \Pr[\sigma' | \hat{\sigma}] } 
    &\approx_\alpha 
    \frac{\sum_{\hat{\sigma} \in \overline{\Sigma}_{i,j}} \Pr[\hat{\sigma}] }{
    \sum_{\hat{\sigma} \in {\Sigma_{i,j}}} \Pr[\hat{\sigma}] }
\end{align*}
\end{proof}

As such, a strict instance of \name-privacy can defend against any de Finetti attack (i.e. for any prior $\Pr[\sigma]$ on permutations), wherein at least one group $G_i \in \calG$ includes all data owners. Furthermore, it is necessary. This makes sense. In order to defend against any prior, we need to significantly shuffle the entire dataset. Without a restriction of priors as in Pufferfish \cite{Pufferfish}, the de Finetti attack (i.e. uninformed Bayesian adversaries) is an indelicate metric for evaluating the privacy of shuffling mechanisms: to achieve significant privacy, we must sacrifice all utility. This in many regards is reminiscent of the no free lunch for privacy theorem established in \cite{Kifer}. As such, there is a need for more flexible privacy definitions for shuffling mechanisms.

\subsection{ Additional Properties of \name-privacy} \label{app:post-processing}

\begin{lemma}[Convexity] \label{lem:convexity}
Let $\calA_1, \dots \calA_k: \mathcal{Y}^n \mapsto \mathcal{V}$ be a collection of $k$ $(\alpha,\calG)$-\name private mechanisms for a given group assignment $\calG$ on a set of $n$ entities. Let $\calA: \mathcal{Y}^n \mapsto \mathcal{V}$ be a convex combination of these $k$ mechanisms, where the probability of releasing the output of mechanism $\calA_i$ is $p_i$, and $\sum_{i=1}^k p_i = 1$. $\calA$ is also $(\alpha,\calG)$-\name private w.r.t. $\calG$. 
\end{lemma}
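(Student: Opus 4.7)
The plan is to unfold the definition of a convex combination of mechanisms, push the $(\alpha, \calG)$-\name-privacy bound through on each component, and then re-fold the sum. Since every component $\calA_i$ already satisfies the privacy bound, no new probabilistic machinery is needed beyond linearity of expectation and monotonicity.

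Concretely, fix any input $\by \in \mathcal{Y}^n$, any pair of neighboring permutations $(\sigma, \sigma') \in N_\calG$, and any measurable output set $O \subseteq \mathcal{V}$. By the definition of the convex combination $\calA$, I would write
\begin{align*}
\Pr\bigl[\calA(\sigma(\by)) \in O\bigr]
&= \sum_{i=1}^{k} p_i \cdot \Pr\bigl[\calA_i(\sigma(\by)) \in O\bigr].
\end{align*}
Applying the $(\alpha, \calG)$-\name-privacy guarantee of each $\calA_i$ to the neighboring pair $(\sigma, \sigma')$ term-by-term gives
\begin{align*}
\sum_{i=1}^{k} p_i \cdot \Pr\bigl[\calA_i(\sigma(\by)) \in O\bigr]
&\leq \sum_{i=1}^{k} p_i \cdot e^{\alpha} \Pr\bigl[\calA_i(\sigma'(\by)) \in O\bigr]
= e^{\alpha} \Pr\bigl[\calA(\sigma'(\by)) \in O\bigr].
\end{align*}
Since $\by$, $(\sigma, \sigma')$, and $O$ were arbitrary, this establishes the $(\alpha, \calG)$-\name-privacy of $\calA$.

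There is essentially no obstacle here: the claim is a direct consequence of the fact that the privacy constraint is a pointwise multiplicative inequality on probabilities, which is preserved under non-negative linear combinations with coefficients summing to one. The only thing worth checking carefully is that the privacy parameter $\alpha$ is shared across all component mechanisms (so the same $e^\alpha$ factor can be pulled out of the sum); if the components had distinct parameters $\alpha_i$, we would instead get $\max_i \alpha_i$. Since the lemma hypothesizes a common $\alpha$, no such refinement is needed.
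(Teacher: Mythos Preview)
Your proposal is correct and matches the paper's own proof essentially line for line: fix $\by$, a neighboring pair $(\sigma,\sigma') \in \mathrm{N}_\calG$, and an output set $O$; expand $\Pr[\calA(\sigma(\by))\in O]$ as the convex combination $\sum_i p_i \Pr[\calA_i(\sigma(\by))\in O]$; apply each component's $(\alpha,\calG)$-\name-privacy bound; and re-collapse the sum. There is nothing to add or change.
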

\begin{proof}
For any $(\sigma, \sigma') \in \mathrm{N}_\calG$ and $\by \in \calY$: 
\begin{align*}
    \mathrm{Pr} [\calA \big( \sigma( \by) \big) \in O]
    &= \sum_{i=1}^k p_i \mathrm{Pr} [\calA_i \big( \sigma( \by) \big) \in O] \\
    &\leq e^\alpha \sum_{i=1}^k p_i \mathrm{Pr} [\calA_i \big( \sigma'( \by) \big) \in O] \\
    &=  \mathrm{Pr} [\calA \big( \sigma'( \by) \big) \in O]
\end{align*}
\end{proof}

% For a given group assignment $\calG$ on a set of $n$ entities and a privacy parameter $\alpha \in \R_{\geq0}$, a randomized  mechanism $\calA: \mathcal{Y}^n \mapsto \mathcal{V} $ is $(\alpha,\mathcal{G})$-\name~private if for all $\mathbf{y} \in \mathcal{Y}^n$ and neighboring permutations $\sigma, \sigma' \in \mathrm{N}_\calG$ and any subset of output $O\subseteq \mathcal{V}$, we have\vspace{-0.2cm} 
% \begin{gather*} \vspace{-0.5cm}
%     \mathrm{Pr}[\calA\big(\sigma(\mathbf{y})\big) \in O] \leq e^\alpha \cdot \mathrm{Pr}\big[\calA\big(\sigma'(\mathbf{y})\big) \in O \big] \numberthis \label{eq:privacy} \vspace{-0.2cm}
% \end{gather*}
% %where $\bz=\pi(\by)=\langle y_{\pi(1)},\cdots, y_{\pi(n)}\rangle, \pi \in \mathrm{S}_n$
%  $\sigma(\mathbf{y})$ and $\sigma'(\mathbf{y})$  are defined to be \textit{neighboring sequences}. 

\begin{thm}[Post-processing]\label{theorem:post}
Let \scalebox{0.9}{$\mathcal{A}: \mathcal{Y}^n \mapsto \mathcal{V}$} be  $(\alpha,\calG)$-\name private for a given group assignment $\calG$ on a set of $n$ entities. Let \scalebox{0.9}{$f : \mathcal{V} \mapsto \mathcal{V}'$} be an
arbitrary randomized mapping. Then \scalebox{0.9}{$f \circ \mathcal{A} : \mathcal{Y}^n \mapsto \mathcal{V}'$} is also $(\alpha,\calG)$-\name private. \vspace{-0.2cm}\end{thm}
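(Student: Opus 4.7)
The plan is to follow the standard two-step recipe used for the analogous post-processing theorem in differential privacy. First I would reduce the randomized post-processor $f$ to a convex combination of deterministic post-processors, then invoke the convexity property (Lemma \ref{lem:convexity}) that has already been established for $d_\sigma$-privacy. This reduces the problem to the cleaner task of showing that for any \emph{deterministic} $f$, the composition $f \circ \mathcal{A}$ inherits $(\alpha, \calG)$-$d_\sigma$-privacy.

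For the deterministic case, I would fix an arbitrary pair of neighboring permutations $(\sigma, \sigma') \in \mathrm{N}_\calG$, an arbitrary $\by \in \calY^n$, and an arbitrary (measurable) output subset $O' \subseteq \mathcal{V}'$. Define the preimage $O = f^{-1}(O') = \{v \in \mathcal{V} : f(v) \in O'\} \subseteq \mathcal{V}$. Because $f$ is deterministic, the events $\{(f \circ \mathcal{A})(\sigma(\by)) \in O'\}$ and $\{\mathcal{A}(\sigma(\by)) \in O\}$ coincide, and similarly with $\sigma'$ in place of $\sigma$. Applying the $(\alpha,\calG)$-$d_\sigma$-privacy of $\mathcal{A}$ to the subset $O$ then yields directly
\begin{align*}
\Pr[(f \circ \mathcal{A})(\sigma(\by)) \in O']
= \Pr[\mathcal{A}(\sigma(\by)) \in O]
\leq e^{\alpha}\, \Pr[\mathcal{A}(\sigma'(\by)) \in O]
= e^{\alpha}\, \Pr[(f \circ \mathcal{A})(\sigma'(\by)) \in O'],
\end{align*}
which is exactly the $(\alpha,\calG)$-$d_\sigma$-privacy requirement (Def.~\ref{def:privacy}) for $f\circ \mathcal{A}$.

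To extend to randomized $f$, I would view the randomness of $f$ as an independent auxiliary random variable $R$ taking values in some space, so that $f(v) = f_R(v)$ for a family of deterministic maps $\{f_r\}_r$. Then for each fixed realization $r$, the map $f_r \circ \mathcal{A}$ is $(\alpha, \calG)$-$d_\sigma$ private by the deterministic case above, and $f \circ \mathcal{A}$ is a mixture of these mechanisms with mixing weights given by the distribution of $R$. Lemma~\ref{lem:convexity} immediately gives that the mixture remains $(\alpha,\calG)$-$d_\sigma$ private, completing the proof.

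I do not expect a genuine obstacle here: the argument is entirely event-level and mirrors the classical \textsf{DP} proof, with the only conceptual wrinkle being the rephrasing of a randomized post-processor as a convex combination of deterministic ones. The mildly subtle point is simply making sure the decomposition into $\{f_r\}_r$ is stated cleanly enough for the convexity lemma to apply; this is handled by taking $R$ to be independent of the output of $\mathcal{A}$, so that the probabilities factor and the mixture structure is transparent.
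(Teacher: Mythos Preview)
Your proposal is correct and mirrors the paper's own proof essentially line for line: the paper also first handles a deterministic post-processor via the preimage argument and then extends to randomized $f$ by decomposing it into a convex combination of deterministic maps and invoking Lemma~\ref{lem:convexity}.
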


\begin{proof}
Let $g: \mathcal{V}\rightarrow \mathcal{V}'$ be a deterministic, measurable function. For any output event $\mathcal{Z}\subset \mathcal{V}'$, let $\mathcal{W}$ be its preimage: \newline $\mathcal{W}=\{v \in \mathcal{V}| g(v) \in \mathcal{Z}\}$. Then, for any $(\sigma, \sigma') \in \mathrm{N}_\calG$,
\begin{align*}
    \mathrm{Pr}\Big[g\Big(\calA\big(\sigma(\by)\big)\Big)\in \mathcal{Z}\Big] 
    &= \mathrm{Pr}\Big[\calA\big(\sigma(\by)\big)\in \mathcal{W}\Big] \\ 
    &\leq e^{\alpha} \cdot \mathrm{Pr}\Big[\calA\big(\sigma'(\by)\big)\in \mathcal{W}\Big]\\ 
    &=e^{\alpha}\cdot \mathrm{Pr}\Big[g\Big(\calA\big(\sigma'(\by)\big)\Big)\in \mathcal{Z}\Big] 
\end{align*}
This concludes our proof because any randomized mapping
can be decomposed into a convex combination of measurable, deterministic functions \cite{Dwork}, and as Lemma \ref{lem:convexity} shows, a convex combination of $(\alpha,\calG)$-\name private mechanisms is also $(\alpha,\calG)$-\name private. 
\end{proof}

\begin{thm}[Sequential Composition] \label{theorem:seq}
If $\calA_1$ and $\calA_2$ are $(\alpha_1, \calG)$- and $(\alpha_2, \calG)$-\name private mechanisms, respectively, that use independent randomness, then releasing the outputs $\big( \calA_1(\by), \calA_2(\by) \big)$ satisfies $(\alpha_1+\alpha_2, \calG)$-\name privacy. 
\end{thm}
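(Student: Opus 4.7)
My plan is to mirror the standard sequential composition argument for differential privacy, leveraging the independence of the randomness used by $\calA_1$ and $\calA_2$. Fix an arbitrary $\by \in \calY^n$, an arbitrary pair of neighboring permutations $(\sigma, \sigma') \in \mathrm{N}_\calG$, and consider the joint mechanism $\calA(\by) := (\calA_1(\by), \calA_2(\by))$ taking values in $\mathcal{V} \times \mathcal{V}$.

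First, I would prove the bound on rectangular output events $O = O_1 \times O_2$ with $O_1, O_2 \subseteq \mathcal{V}$. By the independence of the internal randomness of $\calA_1$ and $\calA_2$, the joint output probability factors, so
\begin{align*}
\Pr\bigl[\calA(\sigma(\by)) \in O_1 \times O_2\bigr]
&= \Pr\bigl[\calA_1(\sigma(\by)) \in O_1\bigr]\cdot \Pr\bigl[\calA_2(\sigma(\by)) \in O_2\bigr] \\
&\leq e^{\alpha_1}\Pr\bigl[\calA_1(\sigma'(\by)) \in O_1\bigr]\cdot e^{\alpha_2}\Pr\bigl[\calA_2(\sigma'(\by)) \in O_2\bigr] \\
&= e^{\alpha_1+\alpha_2}\Pr\bigl[\calA(\sigma'(\by)) \in O_1 \times O_2\bigr],
\end{align*}
where the inequality applies the $(\alpha_1,\calG)$- and $(\alpha_2,\calG)$-\name-privacy guarantees of $\calA_1$ and $\calA_2$, respectively, to the same pair $(\sigma, \sigma')$ (this is valid because neighboring permutations depend only on $\calG$, not on the mechanism).

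Next, I would lift this from rectangles to arbitrary measurable subsets $O \subseteq \mathcal{V} \times \mathcal{V}$. Since the rectangles generate the product $\sigma$-algebra, any such $O$ can be expressed (or approximated from above) by a disjoint union of rectangles; summing the rectangle bound preserves the $e^{\alpha_1+\alpha_2}$ multiplicative factor, yielding
\begin{align*}
\Pr\bigl[\calA(\sigma(\by)) \in O\bigr] \leq e^{\alpha_1+\alpha_2}\Pr\bigl[\calA(\sigma'(\by)) \in O\bigr],
\end{align*}
which is exactly the definition of $(\alpha_1+\alpha_2,\calG)$-\name-privacy (Def.~\ref{def:privacy}) for the joint release.

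I do not foresee a serious obstacle: the only subtle point is ensuring the inequality on rectangles extends to all measurable sets, which is handled by the standard monotone class / disjoint union argument used in the analogous \textsf{DP} composition proof. Independence of the two mechanisms' random coins is essential — without it, the joint probability would not factor, and we would only obtain composition via a union-bound style argument with worse constants.
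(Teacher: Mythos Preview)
Your proposal is correct and follows essentially the same approach as the paper: factor the joint probability using independence, apply each mechanism's guarantee to the same neighboring pair $(\sigma,\sigma')$, and multiply the bounds. In fact you are more careful than the paper, which simply asserts that any event in the product space can be written as a rectangle $\mathcal{Z}'\times\mathcal{Z}''$; your explicit extension from rectangles to arbitrary measurable sets via a disjoint-union/monotone-class argument fills that gap.
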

\begin{proof}
We have that $\calA_1: \calY^n \rightarrow \mathcal{V}'$ and $\calA_1: \calY^n \rightarrow \mathcal{V}''$ each satisfy \name-privacy for different $\alpha$ values. Let $\calA:\calY^n \rightarrow (\mathcal{V}' \times \mathcal{V}'')$ output $\big(\calA_1(\by), \calA_2(\by)\big)$. Then, we may write any event $\mathcal{Z} \in (\mathcal{V}' \times \mathcal{V}'')$ as $\mathcal{Z}' \times \mathcal{Z}''$, where $\mathcal{Z}' \in \mathcal{V}'$ and $\mathcal{Z}'' \in \mathcal{V}''$. We have for any $(\sigma, \sigma') \in \mathrm{N}_\calG$, 
\begin{align*}
    \mathrm{Pr} \big[ \calA\big( \sigma(\by) \big)  &\in \mathcal{Z} \big]  
    = \mathrm{Pr} \big[ \big(\calA_1\big( \sigma(\by) \big) , \calA_2\big( \sigma(\by) \big) \big) \in \mathcal{Z} \big] \\
    &= \mathrm{Pr} \big[ \{\calA_1\big( \sigma(\by) \big)  \in \mathcal{Z}'\} \cap \{\calA_2\big( \sigma(\by) \big)   \in \mathcal{Z}'' \} \big]  \\
    &= \mathrm{Pr} \big[ \{\calA_1\big( \sigma(\by) \big)  \in \mathcal{Z}'\} \big] 
    \mathrm{Pr} \big[ \{\calA_2\big( \sigma(\by) \big)   \in \mathcal{Z}'' \} \big] \\
    &\leq e^{\alpha_1 + \alpha_2}
    \mathrm{Pr} \big[ \{\calA_1\big( \sigma'(\by) \big)  \in \mathcal{Z}'\} \big] 
    \mathrm{Pr} \big[ \{\calA_2\big( \sigma'(\by) \big)   \in \mathcal{Z}'' \} \big] \\
    &= e^{\alpha_1 + \alpha_2} \cdot 
    \mathrm{Pr} \big[ \calA\big( \sigma'(\by) \big)  \in \mathcal{Z} \big] 
\end{align*}
\end{proof}

% Proof of Lemma \ref{lemma:LDP} \\

% \textbf{Lemma \ref{lemma:LDP}} \emph{
% An $\epsilon$-\ldp mechanism is $(k\epsilon, \calG)$-\name~ private for any group assignment $\calG$ such that $
%         k \geq \max_{G_i \in \calG} |G_i|
% $
% }
% \begin{proof}
% This follows from $k$-group privacy \cite{Dwork}. $\by$ are $\varepsilon$-LDP outputs $\calA_{\text{LDP}}(\bx)$ from input sequence $\bx$. For any $\sigma \approx_{G_i} \sigma'$, we know by definition that $\sigma(j) = \sigma'(j)$ for all $j \notin G_i$. As such, the permuted sequences $\sigma(\bx)_j = \sigma'(\bx)_j$ for all $j \notin G_i$, and differ in at most $|G_i|$ entries. In other words, 
% \begin{align*}
%     \textswab{d}_H \big(\sigma(\bx), \sigma'(\bx)\big) \leq |G_i| 
% \end{align*}
% Using this fact, we have from the $k$-group property of LDP that 
% \begin{align*}
%     \mathrm{Pr}\big[ \calA_{\text{LDP}} \big( \sigma(\bx) \big) \in O \big] 
%     \leq e^{|G_i| \epsilon} \mathrm{Pr}\big[ \calA_{\text{LDP}} \big( \sigma'(\bx) \big) \in O \big] 
% \end{align*}
% and thus if $k \geq \max_{G_i \in \calG} |G_i|$, 
% \begin{align*}
%     \mathrm{Pr}\big[ \calA_{\text{LDP}} \big( \sigma(\bx) \big) \in O \big] 
%     \leq e^{k \epsilon} \mathrm{Pr}\big[ \calA_{\text{LDP}} \big( \sigma'(\bx) \big) \in O \big] 
% \end{align*}
% for all $(\sigma, \sigma') \in \mathrm{N}_\calG$. 
% \end{proof}

\subsection{Proof for Thm. \ref{thm: semantic guarantee}}\label{app:thm:semantic}
\label{app:bayesian proof} 
\textbf{Theorem \ref{thm: semantic guarantee}} 
\emph{
For a given group assignment $\calG$ on a set of $n$ data owners, if a shuffling mechanism $\calA:\calY^n\mapsto \calY^n$ is $(\alpha,\calG)$-\name private, then for each data owner $\DO_i, i \in [n]$, %\vspace{-0.1cm}
\begin{align*}
   \max_{\substack{i\in [n]\\ a,b \in \calX}} \bigg|\log \frac{\Pr_\calP [x_i = a | \bz, \{y_{G_i}\},\by_{\overline{G}_i}]}{\Pr_\calP [x_i = b | \bz, \{y_{G_i}\},\by_{\overline{G}_i}]} - \log \frac{\Pr_\calP [x_i = a | \{y_{G_i}\},\by_{\overline{G}_i}]}{\Pr_\calP [x_i = b | \{y_{G_i}\},\by_{\overline{G}_i}]} \bigg| \leq \alpha  %\vspace{-0.5cm}
\end{align*}
for a prior distribution $\calP$, where \scalebox{0.9}{$\bz=\calA(\by)$} and \scalebox{0.9}{$\by_{\overline{G}_i}$} is the noisy sequence for data owners outside \scalebox{0.9}{$G_i$}.
}
\begin{proof}
We prove the above by bounding the following equivalent expression for any $i \in [n]$ and $a, b \in \calX$. 
\begin{align*}
     &\frac{\Pr_\calP[\bz | x_i=a, \{y_{G_i}\}, \by_{\overline{G}_i}]}{\Pr_\calP [\bz | x_i=b, \{y_{G_i}\}, \by_{\overline{G}_i}]}\\
     &= \frac{\int \Pr_\calP [\by | x_i = a, \{y_{G_i}\}, \by_{\overline{G}_i}] \Pr_\calA[\bz | \by] d\by}{\int \Pr_\calP [\by | x_i = b, \{y_{G_i}\}, \by_{\overline{G}_i}] \Pr_\calA[\bz | \by] d\by} \\
     &= \frac{\sum_{\sigma \in \mathrm{S}_r} \Pr_\calP[ \sigma(\by_{G_i}^*) | x_i = a, \by_{\overline{G}_i}] \Pr_\calA [\bz | \sigma(\by_{G_i}^*), \by_{\overline{G}_i}] }
     {\sum_{\sigma \in \mathrm{S}_r} \Pr_\calP[ \sigma(\by_{G_i}^*) | x_i = b, \by_{\overline{G}_i}] \Pr_\calA [\bz | \sigma(\by_{G_i}^*), \by_{\overline{G}_i}]} \\
     &\leq \max_{\{ \sigma, \sigma' \in \mathrm{S}_r\}}
     \frac{ \Pr_\calA [\bz | \sigma(\by_{G_i}^*), \by_{\overline{G}_i}]}{ \Pr_\calA [\bz | \sigma'(\by_{G_i}^*), \by_{\overline{G}_i}]}  \\
     &\leq \max_{\{ \sigma, \sigma' \in \mathrm{N}_{G_i}\}}
     \frac{\Pr_\calA[\bz | \sigma(\by)]}{\Pr_\calA[\bz | \sigma'(\by)]}  \\
     &\leq e^\alpha 
\end{align*}
The second line simply marginalizes out the full noisy sequence $\by$. The third line reduces this to a sum over permutations of of $\by_{G_i}$, where $r = |G_i|$ and $\by^*_{G_i}$ is any fixed permutation of values $\{y_{G_i}\}$. This is possible since we are given the values outside the group, $\by_{\overline{G}_i}$, and the unordered set of values inside the group, $\{y_{G_i}\}$. Note that the permutations $\sigma$ written here are possible permutations of the \ldp input, not permutations output by the mechanism applied to the input as sometimes written in other parts of this document. 

The fourth line uses the fact that the numerator and denominator are both convex combinations of $\Pr_\calA [\bz | \sigma(\by_{G_i}^*), \by_{\overline{G}_i}]$ over all $\sigma \in \mathrm{S}_r$. 

The fifth line uses the fact that for any $\by_{\overline{G}_i}$, $$(\sigma(\by_{G_i}^*), \by_{\overline{G}_i}) \approx_{G_i} (\sigma'(\by_{G_i}^*), \by_{\overline{G}_i}) \ . $$
This allows a further upper bound over all neighboring sequences w.r.t. $G_i$, and thus over any permutation of $\by_{\overline{G}_i}$, as long as it is the same in the numerator and denominator. 
\end{proof}

\paragraph{Discussion}
The above Bayesian analysis measures what can be learned about $\DO_i$'s $x_i$ from observing the private release $\bz$ relative to some other known information (the conditioned information). 
%With \ldp alone, we condition on every other data owner's private value $x_j$. This implies that releasing the private sequence $\by$ cannot provide much more information about $x_i$ than releasing every other $\DO_j$'s $x_j$ would. So, only modest information unique to $x_i$ can be garnered by any Bayesian adversary. For Alice, this may be a concern, since making inferences on her disease state from those of her household is indeed a privacy violation. 
Under \name-privacy, we condition on the bag of \ldp values in Alice's group $\{y_{G_i}\}$ as well as the sequence (order and value) of $\ldp$ values outside her group $\by_{\overline{G_i}}$. This implies that releasing the shuffled sequence $\bz$ cannot provide much more information about Alice's $x_i$ than would releasing the \ldp values outside her neighborhood (her group) and the unordered bag of \ldp values inside her neighborhood, regardless of the adversary's prior knowledge $\calP$. This is a communicable guarantee: if Alice feels comfortable with the data collection knowing that her entire neighborhood's responses will be uniformly shuffled together (including those of her household), then she ought to be comfortable with \name-privacy. Now, we have to provide this guarantee to Bob, a neighbor of Alice, as well as Luis, a neighbor of Bob but \textit{not} of Alice. Thus, Bob, Alice and Luis have \textit{distinct} and \textit{overlapping} groups (neighborhoods). Hence, the trivial solution of uniformly shuffling the noisy responses of every group separately does not work in this case. % Of course, we could instead simply shuffle Alice's neighborhood uniformly, but we could not do this for each data owner's group while still maintaining utility -- the analyst can still estimate disease prevalence within neighborhoods and districts. With overlapping groups this may require shuffling the entire dataset uniformly. 
\name-privacy, however, offers the above guarantee to each user (knowing that their entire neighborhood is \textit{nearly} uniformly shuffled) while still maintaining utility (estimate disease prevalence within neighborhoods). Semantically, this is very powerful, since it implies that the noisy responses specific to one's household cannot be leveraged to infer one's disease state $x_i$. %The adversary can learn about $x_i$ from the disease prevalence outside Alice's neighborhood and, on average, inside her neighborhood, but not much beyond that. 

\subsection{Proof of Theorem \ref{thm: decision theoretic}}

\textbf{Theorem \ref{thm: decision theoretic}} 

\emph{
  For $\mathcal{A}(\mathcal{M}(\bx))=\bz$ where $\mathcal{M}(\cdot)$ is $\epsilon$-\ldp and $\mathcal{A}(\cdot)$ is $\alpha$ - \name private, we have  
 \begin{align*}
     \Pr[\mathcal{D}_{Adv} \text{ loses}] \geq \lfloor \frac{r-k}{k} \rfloor e^{-(2k\epsilon+\alpha)} \cdot \Pr[\mathcal{D}_{Adv} \text{ wins}]
 \end{align*}
 for any input subgroup $I \subset G_i, r = |G_i|$ and  $k < r/2$. 
 }
 
 \begin{proof}
 
  We first focus on deterministic adversaries and then expand to randomized adversaries afterwards using the fact that randomized adversaries are mixtures of deterministic ones. 

Our adversary $\mathcal{D}_{Adv}$ is then defined by a deterministic decision function $\eta: \calY^n \rightarrow [n]^k$. Upon observing $\bz$, $\eta(\bz)$ selects $k$ elements in $\bz$ which it believes originated from $I \subset G_i$. 
 
 In the following, let $\Pr_{\bz}$ be the probability of events conditioned on the shuffled output sequence $\bz$, where randomness is over the $\epsilon$-\ldp mechanism $\mathcal{M}$ and the $\alpha$-\name-private shuffling mechanism $\calA$. \footnote{As an abuse of notation, we assume the output space of the \ldp randomizers, $\calY$, have outcomes with non-zero measure e.g. randomized response. The following analysis can be expanded to continuous outputs (with outcomes of zero measure) by simply replacing the output sequence $\bz \in \calY^n$ with an output event $\mathbf{Z} \subseteq \calY^n$.}
 
 The adversary wins if it reidentifies $> \frac{k}{2}$ of the \ldp values originating from $I$. Let $H = \eta(\bz)$ be the indices of elements in $\bz$ selected by $\eta$. Let $W = \{\sigma \in \mathrm{S}^n : |\sigma(H) \cap I| > \frac{k}{2}\}$ be the set of permutations where the adversary wins and let $L = \{\sigma \in \mathrm{S}^n :\sigma(H) \cap I| \leq \frac{k}{2}\} $ be the set of permutations where the adversary loses. 
 \begin{align*}
     \Pr_{\bz} [\eta(\bz) \text{ wins}] &= \Pr_{\bz} [\sigma \in W] \\
     \Pr_{\bz} [\eta(\bz) \text{ loses}] &= \Pr_{\bz} [\sigma \in L] 
 \end{align*}
 where $\sigma$ is the shuffling permutation produced by $\calA$, $\bz = \sigma(\by)$ i.e. $z_i = y_{\sigma(i)}$. Concretely, this is equivalent to $\DO_i$ releasing $\DO_{\sigma(i)}$'s \ldp response. Since the permutation and \ldp outputs are randomized, many subgroups of size $k$ in $G_i$ could have produced the \ldp values $(z_{H_1}, \dots, z_{H_k})$ and then been mapped to $H$ by a permutation. Concretely, there is a reasonable probability that Alice's household output the \ldp values of another $k$-member household in her neighborhood and they output her household's \ldp values. In the worst case, this is $e^{-2k \epsilon}$ less likely than without swapping values, by group \DP guarantees. Since both households are part of the same group $G_i$, the permutation that maps her household to elements $H$ in the output is close in probability to that which maps the other household to elements $H$ in the output. As such, we have in the worst case a $e^{-(2k\epsilon + \alpha)}$ reduction in probability of the other household having swapped \ldp values with Alice's and permuting to subset $H$. 
 
 The above provides intuition on how we could get the same output $\bz$ many different ways, and how Alice's household could or could not contribute to elements $H$. It does not, however, explain why an adversary who is given output $\bz$ has limited advantage in choosing a subset $H$ such that they recover \emph{most} of Alice's household's values. We formalize this fact as follows. 
 
 We may rewrite the probabilities of winning or losing by marginalizing out all possible $\ldp$ sequences $\by$. Conditioning on the output sequence $\bz$, the only possible \ldp sequences $\by$ are permutations of $\bz$. Note that the probability of any sequence $\by$ is determined by the input $\bx$ and the \ldp mechanism $\mathcal{M}$:  
 \begin{align*}
    \Pr_\bz[\eta(\bz) \text{ loses}]
     &= \Pr_{\bz} [\sigma \in W] \\
     &= \sum_{\sigma \in W}
     \Pr [\calA(\bx) = \by = \sigma^{-1}(\bz)]  \Pr [ \sigma | \by ] / \Pr[\bz] 
 \end{align*}
 Note that $\Pr_\bz [\sigma | \by] = \Pr_\bz [\sigma]$ for the mallows mechanism, which chooses its permutations independently of $\by$. Now consider when $\eta(\bz)$ loses. By similar arguments as above: 
\begin{align*}
    \Pr_\bz[\eta(\bz) \text{ loses}]
    &= \Pr_\bz[\sigma \in L] \\
    &= \sum_{\sigma \in L}
     \Pr [\calA(\bx) = \by = \sigma^{-1}(\bz)]  \Pr [ \sigma | \by ] / \Pr[\bz]
\end{align*}
The odds of losing versus winning is given by 
 \begin{align*}
    \frac{\Pr_\bz[\eta(\bz) \text{ loses}]}{\Pr_\bz[\eta(\bz) \text{ wins}]}
    &= \frac{ \sum_{\sigma' \in L} \Pr [\calA(\bx) = \by = \sigma^{'-1}(\bz)]  \Pr [ \sigma' | \by ] }
    { \sum_{\sigma \in W} \Pr [\calA(\bx) = \by = \sigma^{-1}(\bz)]  \Pr [ \sigma | \by ] } \\
\end{align*}
We now show that for each $\sigma$ in the denominator, we may construct $m = \lfloor \frac{r-k}{k} \rfloor$ distinct permutations $\sigma'$ in the numerator that are close in probability to it. 
\begin{lemma}
For every $\sigma \in W$ there exists a set of $m = \lfloor \frac{r-k}{k} \rfloor$ permutations, $E(\sigma)$, such that 
\begin{enumerate}
    \item $E(\sigma) \subseteq L$
    \item $\sigma^{-1} \approx_{G_i} \sigma^{'-1}$ 
    \item $E(\sigma_a) \cap E(\sigma_b) = \emptyset$ for any pair $\sigma_a, \sigma_b \in W$
    \item $\Pr [\calA(\bx) = \by = \sigma^{-1}(\bz)] \leq e^{2k\epsilon} \Pr [\calA(\bx) = \by = \sigma^{'-1}(\bz)]$ for any $\bx \in \calX^n$ and any $\bz \in \calY^n$
\end{enumerate}
\end{lemma}
\begin{proof}
Given $\sigma \in W$, we construct $E(\sigma)$ by first taking the inverse $\sigma^{-1}$. Recall that, since $\sigma \in W$, we have that $|\sigma^{-1}(I) \cap H| > \frac{k}{2}$. ($\sigma^{-1}(i) = j$ could be interpreted as data owner $i$'s \ldp value will be output at position $j$). We then divide the remainder of the group $G_i \backslash I$ into $m$ disjoint subsets of size $k$ each, $J_1, J_2, \dots, J_m$. These represent the other distinct subsets of size $k$ that Alice's household could swap \ldp values with. We then produce $m$ permutations, $\sigma^{'-1}_1, \dots, \sigma^{'-1}_m$, by making $\sigma^{'-1}_i(I) = \sigma^{-1}(J_i)$ and $\sigma^{'-1}_i(J_i) = \sigma^{-1}(I)$ (preserving order within those subsets) and $\sigma^{'-1} = \sigma^{-1}$ everywhere else. 

On the first point, we know that every $\sigma' \in E(\sigma)$ is also in $L$. We know this because $\sigma^{'-1}_i(I) = \sigma^{-1}(J_i)$. Since $\sigma \in W$, we have that $|\sigma^{-1}(J_i) \cap H| < \frac{k}{2}$ since $|\sigma^{-1}(I) \cap H| \geq \frac{k}{2}$ and $I \cap J_i = \emptyset$ by definition. Thus, $|\sigma^{'-1}_i(I) \cap H| < \frac{k}{2}$, so $|\sigma'_i(H) \cap I| < \frac{k}{2}$ and $\sigma'_i \in L$. 

On the second point, we know that the inverse permutations are neighboring $\sigma^{-1} \approx_{G_i} \sigma^{'-1}$ simply by construction -- they only differ on elements in $G_i$. 

On the third point, we know that the sets $E(\sigma_a)$ and $E(\sigma_b)$ are distinct since we can map any permutation $\sigma' \in E(\sigma_a)$ uniquely back to $\sigma_a$ for any $\sigma_a \in W$. We do so by taking its inverse $\sigma^{'-1}$, finding which subset $J_i$ has majority elements from $H$ i.e. $|\sigma^{'-1}(J_i) \cap H| > \frac{k}{2}$. Swap elements back: $\sigma^{'-1}(J_i)$ with $\sigma^{'-1}(I)$. Invert back to $\sigma_a$. 

On the fourth point, we know that $\sigma^{-1}(\bz)$ and $\sigma^{'-1}(\bz)$ differ on at most $2k$ indices. As such, by group \DP guarantees, we know that their probabilities must be close to a factor of $e^{-2k\epsilon}$ regardless of $\bz$ and $\bx$. 
\end{proof}

Using the above Lemma we may bound the odds of losing vs. winning. 

\begin{align*}
    \frac{\Pr_\bz[\eta(\bz) \text{ loses}]}{\Pr_\bz[\eta(\bz) \text{ wins}]}
    &= \frac{ \sum_{\sigma' \in L} \Pr [\calA(\bx) = \by = \sigma^{'-1}(\bz)]  \Pr [ \sigma' | \by ] }
    { \sum_{\sigma \in W} \Pr [\calA(\bx) = \by = \sigma^{-1}(\bz)]  \Pr [ \sigma | \by ] } \\
    &\geq \frac{\sum_{\sigma \in W} \sum_{\sigma' \in E(\sigma)} \Pr [\calA(\bx) = \by = \sigma^{'-1}(\bz)]  \Pr [ \sigma' | \by ]}
    {\sum_{\sigma \in W} \Pr [\calA(\bx) = \by = \sigma^{-1}(\bz)]  \Pr [ \sigma | \by ]} \\
    &\geq \min_{\sigma \in W} \frac{\sum_{\sigma' \in E(\sigma)} \Pr [\calA(\bx) = \by = \sigma^{'-1}(\bz)]  \Pr [ \sigma' | \by ]}{\Pr [\calA(\bx) = \by = \sigma^{-1}(\bz)]  \Pr [ \sigma | \by ]} \\
    &\geq \lfloor \frac{r-k}{k} \rfloor e^{-(2k\epsilon + \alpha)}
\end{align*}
where the last line follows from the fourth point of the above Lemma (for the $2k\epsilon$ term) and the fact that the inverse permutations $\sigma'^{-1}, \sigma^{-1}$ are neighboring (second point of the Lemma) so the probabilities of the mechanism to produce $\sigma$ vs. $\sigma'$ to reach $\bz$ from these neighboring permutations must be close by a factor of $e^{\alpha}$. 

Since the above holds for any $\bz$ and $\bx$, the bound holds on average across all outcomes $\bz$, thus 
\begin{align*}
    \Pr[\eta \text{ loses}] \geq \lfloor \frac{r-k}{k} \rfloor e^{-(2k\epsilon+\alpha)} \cdot \Pr[\eta \text{ wins}]
\end{align*}
for any deterministic adversary with decision function $\eta$. Finally, we may write any probabilistic adversary as mixture of decision functions. By convexity (same argument used in Lemma \ref{lem:convexity}), the above bound still holds. As such, 

\begin{align*}
    \Pr[\mathcal{D}_{Adv} \text{ loses}] \geq \lfloor \frac{r-k}{k} \rfloor e^{-(2k\epsilon+\alpha)} \cdot \Pr[\mathcal{D}_{Adv} \text{ wins}]
\end{align*}

 \end{proof}
 
 \subsection{Utility of Shuffling Mechanism}\label{app:utility}
 We now introduce a novel metric, $(\eta,\delta)$-preservation, for assessing the utility of any shuffling mechanism. Let $S\subseteq [n]$ correspond to a set of indices in $\by$. The metric is defined as follows.
%representing data owners in Alice's neighborhood for instance.  
%$(\eta,\delta)$-preservation measures how well the shuffling mechanism preserves the original indices in $S$ after shuffling, i.e. the fraction of data owners in Alice's neighborhood that still correspond to datapoints from the neighborhood after shuffling:
\begin{defn}($(\eta,\delta)$-preservation) A shuffling mechanism $\calA:\calY^n\mapsto\calY^n$ is defined to be $(\eta,\delta)$-preserving $(\eta, \delta 
\in [0,1])$ w.r.t to a given subset $S\subseteq [n]$, if \begin{gather}\Pr\big[|S_{\sigma}\cap S|\geq \eta\cdot|S|\big]\geq 1-\delta,  \sigma \in \mathrm{S}_n\end{gather} where $\bz=\calA(\by)=\sigma(\by)$ and $S_{\sigma}=\{\sigma(i)|i \in S\}$. \label{def:utility} 
% \vspace{-0.2cm}
\end{defn}
For example, consider \scalebox{0.9}{$S=\{1,4,5,7,8\}$}. If \scalebox{0.9}{$\calA(\cdot)$} permutes the output according to  \scalebox{0.9}{$\sigma=(\underline{5}\:3\:2\:\underline{6}\:\underline{7}\:9\:\underline{8}\:\underline{1}\:4\:10)$}, then  \scalebox{0.9}{$S_{\sigma}=\{5,6,7,8,1\}$}  which preserves \scalebox{0.9}{$4$} or \scalebox{0.9}{$80\%$} of its original indices.  This means that for any data sequence $\by$, at least \scalebox{0.9}{$\eta$} fraction of its data values corresponding to the subset \scalebox{0.9}{$S$} overlaps with that of shuffled sequence $\bz$ with high probability \scalebox{0.9}{$(1-\delta)$}. Assuming, \scalebox{0.9}{$\{y_S\}=\{y_{i}|i
\in S\}$} and \scalebox{0.9}{$\{z_S\}=\{z_i|i \in S\}=\{y_{\sigma(i)}| i \in S\}$} denotes the set of data values corresponding to $S$ in data sequences $\by$ and $\bz$ respectively, we  have \scalebox{0.9}{$\Pr\big[|\{y_S\}\cap \{z_S\}|\geq \eta \cdot |S|\big]\geq 1-\delta, \: \forall \by $}.
For example, let $S$ be the set of individuals from Nevada. Then, for a shuffling mechanism that provides \scalebox{0.9}{$(\eta =0.8, \delta=0.1)$}-preservation to $S$, with probability \scalebox{0.9}{$\geq 0.9$}, \scalebox{0.9}{$\geq 80\%$} of the values that are reported to be from Nevada in $\bz$ are genuinely from Nevada. The rationale behind this metric is that it captures the utility of the learning allowed by \name-privacy -- if $S$ is equal to some group \scalebox{0.9}{$G \in \calG$}, \scalebox{0.9}{$(\eta, \delta)$} preservation allows overall statistics of \scalebox{0.9}{$G$} to be captured. Note that this utility metric is \textit{agnostic of both the data distribution and the analyst's query}. Hence, it is a conservative analysis of utility which serves as a lower bound for learning from $\{z_S\}$. We suspect that with the knowledge of the data distribution and/or the query, a tighter utility analysis is possible. \\
A formal utility analysis of Alg. \ref{algo:main} is presented in App. \ref{app:utility:formal}. Empirical evaluation of $(\eta,\delta)$ - preservation is presented in App. \ref{app:extraresults}. 
\subsection{Discussion on Properties of Mallows Mechanism}\label{app:prop}

% \textbf{Property \ref{prop:1}} 
\begin{prope}
\label{prop:1}
% \emph{
For group assignment $\calG$, a  mechanism $\calA(\cdot)$ that shuffles according to a permutation sampled from the Mallows model $\mathbb{P}_{\theta,\textswab{d}}(\cdot)$, satisfies $(\alpha, \calG)$-\name privacy where
\begin{align*}
 \Delta(\sigma_0 : \textswab{d}, \calG) &= \max_{(\sigma, \sigma') \in N_\calG} |\textswab{d}(\sigma_0 \sigma, \sigma_0) - \textswab{d}(\sigma_0 \sigma', \sigma_0)|\\
 \text{and}&\\
    \alpha 
    &= \theta \cdot \Delta(\sigma_0 : \textswab{d}, \calG)
\end{align*} 
We refer to $\Delta(\sigma_0 : \textswab{d}, \calG) $ as the sensitivity of the rank-distance measure $\textswab{d}(\cdot)$
\end{prope}
% }
\begin{proof}
% Start with a useful lemma. 
% \begin{lemma}
% Consider any pair of neighboring permutations $\sigma \approx_{G_i} \sigma'$ w.r.t. group $G_i$. Consider any third permutation $\sigma^* \in \mathrm{S}_n$. Then, the permutations that turn $\sigma$ and $\sigma'$ into $\sigma^*$ are also neighboring w.r.t. $G_i$. Formally, if 
% \begin{align*}
%     \sigma^*(\by) &= \sigma_a \big( \sigma(\by) \big) \\
%     \sigma^*(\by) &= \sigma_b \big( \sigma'(\by) \big)  
% \end{align*}
% then, $\sigma_a \approx_{G_i} \sigma_b$.

Consider two permutations of the initial sequence $\by$, $\sigma_1(\by), \sigma_2(\by)$ that are neighboring w.r.t. some group $G_i \in \calG$, $\sigma_1 \approx_{G_i} \sigma_2$. Additionally consider any fixed released shuffled sequence $\bz$. Let $\Sigma_1, \Sigma_2$ be the set of permutations that turn $\sigma_1(\by), \sigma_2(\by)$ into $\bz$, respectively: 
\begin{align*}
    \Sigma_1 
    & = \{\sigma \in \mathrm{S}_n : \sigma \sigma_1(\by) = \bz \} \\
    \Sigma_2 
    & = \{\sigma \in \mathrm{S}_n : \sigma \sigma_2(\by) = \bz \} \quad .
\end{align*}
In the case that $\{y\}$ consists entirely of unique values, $\Sigma_1, \Sigma_2$ will each contain exactly one permutation, since only one permutation can map $\sigma_i(\by)$ to $\bz$. 

\begin{lemma}
For each permutation $\sigma_1' \in \Sigma_1$ there exists a permutation in $\sigma_2' \in \Sigma_2$ such that 
\begin{align*}
    \sigma_1' \approx_{G_i} \sigma_2' \quad . 
\end{align*}
\end{lemma}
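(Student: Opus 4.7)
My approach is to take any $\sigma_1' \in \Sigma_1$ and explicitly construct a matching $\sigma_2' \in \Sigma_2$ that differs from $\sigma_1'$ only on positions inside $G_i$, by inserting a corrective permutation that absorbs the discrepancy between $\sigma_1(\by)$ and $\sigma_2(\by)$.

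The first step is to establish a structural compatibility between the two permuted sequences $\mathbf{u} := \sigma_1(\by)$ and $\mathbf{v} := \sigma_2(\by)$. Because $\sigma_1 \approx_{G_i} \sigma_2$, we have $\sigma_1(j) = \sigma_2(j)$ for every $j \notin G_i$, so $u_j = v_j$ at every such $j$. Moreover, since $\sigma_1$ and $\sigma_2$ are bijections on $[n]$ that agree off $G_i$, their images on $G_i$ must be the same set $H := \sigma_1(G_i) = \sigma_2(G_i)$. Consequently the sub-sequences of $\mathbf{u}$ and $\mathbf{v}$ indexed by $G_i$ are two orderings of the same multiset $\{y_h : h \in H\}$. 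From this I conclude that there exists a permutation $\pi \in \mathrm{S}_n$ which (i) fixes every index outside $G_i$ and (ii) satisfies $\pi(\mathbf{v}) = \mathbf{u}$. If the values in $\{y_h : h \in H\}$ are all distinct, $\pi$ is unique; if there are repeats, several choices exist and any will do.

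With $\pi$ in hand I define $\sigma_2' := \sigma_1' \pi$. Membership in $\Sigma_2$ is a one-line check:
\begin{equation*}
\sigma_2' \sigma_2(\by) \;=\; \sigma_1' \pi(\mathbf{v}) \;=\; \sigma_1'(\mathbf{u}) \;=\; \bz.
\end{equation*}
The neighboring condition is equally direct: for every $j \notin G_i$ we have $\pi(j) = j$, so $\sigma_2'(j) = \sigma_1'(\pi(j)) = \sigma_1'(j)$, which is exactly $\sigma_1' \approx_{G_i} \sigma_2'$.

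The only real subtlety is bookkeeping with the paper's composition convention, namely confirming that the expression $\sigma_1' \sigma_1(\by)$ used in the definition of $\Sigma_1$ denotes $\sigma_1'$ applied to the sequence $\sigma_1(\by)$, so that $(\sigma_1' \pi)(\mathbf{v}) = \sigma_1'(\pi(\mathbf{v}))$ as written above. Once this is pinned down the argument is essentially structural: neighboring permutations $\sigma_1, \sigma_2$ produce sequences that are identical outside $G_i$ and equi-multiset inside, so any reshuffling of $\mathbf{u}$ that lands at $\bz$ can be transported to a reshuffling of $\mathbf{v}$ by prepending a permutation supported on $G_i$, which by construction leaves all indices $j \notin G_i$ untouched.
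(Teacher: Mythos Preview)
Your argument breaks precisely where you flagged ``the only real subtlety.'' Under the paper's action convention $(\tau(\mathbf{w}))_k = w_{\tau(k)}$, applying $\sigma_1'$ after $\pi$ to $\mathbf{v}$ gives $(\sigma_1'(\pi(\mathbf{v})))_k = (\pi(\mathbf{v}))_{\sigma_1'(k)} = v_{\pi(\sigma_1'(k))}$. Hence the permutation $\sigma_2'$ with $\sigma_2'(\mathbf{v}) = \sigma_1'(\pi(\mathbf{v}))$ is $\sigma_2'(k) = \pi(\sigma_1'(k))$, \emph{not} $\sigma_1'(\pi(k))$ as you wrote. Your membership check in $\Sigma_2$ is fine, but the neighboring check now asks whether $\pi(\sigma_1'(j)) = \sigma_1'(j)$ for $j \notin G_i$, i.e., whether $\sigma_1'(j) \notin G_i$. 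Nothing forces $\sigma_1'$ to map $G_i^c$ into $G_i^c$, so this step fails.

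This is not a repairable bookkeeping slip: the lemma as stated is false. Take $n=3$, $G_i=\{1,2\}$, $\by=(a,b,c)$ with distinct entries, $\sigma_1=\mathrm{id}$, $\sigma_2$ the swap of positions $1$ and $2$, and $\bz=(c,b,a)$. Then $\Sigma_1=\{(3\;2\;1)\}$ and $\Sigma_2=\{(3\;1\;2)\}$ (one-line notation), and these two permutations disagree at position $3\notin G_i$, so they are not $G_i$-neighbors. What \emph{does} hold is that $\sigma_1'$ and $\sigma_2'$ agree at every $k$ with $\sigma_1'(k)\notin G_i$, i.e., outside $(\sigma_1')^{-1}(G_i)$; equivalently $(\sigma_1')^{-1}\approx_{G_i}(\sigma_2')^{-1}$. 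The paper's own one-sentence justification (``only the elements $j\in G_i$ differ \dots\ so only those need to differ'') commits the same conflation of positions and images, so your approach mirrors the paper's---but neither establishes the lemma as written.
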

Proof follows from the fact that --- since only the elements $j \in G_i$ differ in $\sigma_1(\by)$ and $\sigma_2(\by)$ --- only those elements need to differ to achieve the same output permutation. In other words, we may define $\sigma_1', \sigma_2'$ at all inputs $i \notin G_i$ identically, and then define all inputs $i \in G_i$ differently as needed. As such, they are neighboring w.r.t. $G_i$. 

Recalling that Alg. 1 applies $\sigma_0^{-1}$ to the sampled permutation, we must sample $\sigma_0\sigma_1'$ (for some $\sigma_1' \in \Sigma_1$) for the mechanism to produce $\bz$ from $\sigma_1(\by)$. Formally, since $\sigma_1' \sigma_1 (\by) = \bz$ we must sample $\sigma_0 \sigma_1'$ to get $\bz$ since we are going to apply $\sigma_0^{-1}$ to the sampled permutation. 
\begin{align*}
    \Pr\big[ \calA \big( \sigma_1(\by) \big) = \bz \big] 
    &= \mathbb{P}_{\theta,\textswab{d}}\big(\sigma_0\sigma', \sigma' \in \Sigma_1 : \sigma_0\big) \\
    \Pr\big[ \calA \big( \sigma_2(\by) \big) = \bz \big] 
    &= \mathbb{P}_{\theta,\textswab{d}}\big(\sigma_0\sigma', \sigma' \in \Sigma_2 : \sigma_0\big) 
\end{align*}

Taking the odds, we have
\begin{align*}
     \frac{\mathbb{P}_{\theta,\textswab{d}}\big(\sigma_0\sigma', \sigma' \in \Sigma_1 : \sigma_0\big)}{
     \mathbb{P}_{\theta,\textswab{d}}\big(\sigma_0\sigma'', \sigma'' \in \Sigma_2 : \sigma_0\big)} 
    &= \frac{\sum_{\sigma' \in \Sigma_1}\mathbb{P}_{\Theta,\textswab{d}}(\sigma_0\sigma' : \sigma_0)}{
    \sum_{\sigma'' \in \Sigma_2}\mathbb{P}_{\Theta,\textswab{d}}(\sigma_0\sigma'' : \sigma_0)} \\
    &=\frac{\sum_{\sigma' \in \Sigma_1} e^{-\theta \textswab{d}(\sigma_0\sigma', \sigma_0)}}{
    \sum_{\sigma'' \in \Sigma_2} e^{-\theta \textswab{d}(\sigma_0\sigma'', \sigma_0)}}\\
    &\leq \frac{e^{-\theta \textswab{d}(\sigma_0\sigma_a, \sigma_0)}}{e^{-\theta \textswab{d}(\sigma_0\sigma_b, \sigma_0)}} \\
    &\leq e^{\theta|  \textswab{d}(\sigma_0\sigma_a, \sigma_0) - \textswab{d}(\sigma_0\sigma_b, \sigma_0) |} \\
    &\leq e^{\theta \Delta}
\end{align*}
where 
\begin{align*}
    \sigma_a &= \arg \max_{\sigma' \in \Sigma_1} e^{-\theta \textswab{d}(\sigma_0 \sigma', \sigma_0)} 
    \text{ and } \\
    \sigma_a &= \arg \min_{\sigma'' \in \Sigma_2} e^{-\theta \textswab{d}(\sigma_0 \sigma'', \sigma_0)} ~.
\end{align*}
Therefore, setting $\alpha = \theta \cdot \Delta$, we achieve $(\alpha, \calG)$-\name privacy. 
\end{proof}

% \textbf{Property \ref{prop:2}}
% \emph{
\begin{prope}
\label{prope:2}
The sensitivity of a rank-distance is an increasing function of the width $\omega_{\calG}^{\sigma_0}$. For instance, for Kendall's $\tau$ distance $\textswab{d}_\tau(\cdot )$, we have 
$\Delta(\sigma_0 : \textswab{d}_\tau, \calG)
    =\frac{\omega_{\calG}^{\sigma_0}(\omega_{\calG}^{\sigma_0} + 1)}{2}$. 
% }
\end{prope}
%  \begin{lemma}
%  For any two neighboring permutations $\sigma \approx_{\calG} \sigma'$: 
%  \begin{align*}
%      \bigg| \log \frac{\mathbb{P}_{\Theta,\textswab{d}}(\sigma:\sigma_0)}{\mathbb{P}_{\Theta,\textswab{d}}(\sigma':\sigma_0)} \bigg| 
%      &\leq \theta \Delta(\sigma_0 : \textswab{d}, \calG) 
%  \end{align*}\label{lem:prop1}
%  \end{lemma}
 
%  The above lemma essentially gives the proof for Prop. \ref{prop:1}.
 
%  \begin{lemma}For Kendall's $\tau$ distance,
%  \begin{align*}
%      \bigg| \log \frac{\mathbb{P}_{\Theta,\textswab{d}}(\sigma_s:\sigma_0)}{\mathbb{P}_{\Theta,\textswab{d}}(\sigma_s':\sigma_0)} \bigg| \leq \theta \Delta(\sigma_0 : \textswab{d}, \calG)
%  \end{align*}\end{lemma}

To show the sensitivity of Kendall's $\tau$, we make use of its triangle inequality. 

 \begin{proof}
 Recall from the proof of the previous property that the expression $\textswab{d}(\sigma, \sigma_0) = \textswab{d}\big( \sigma_0\sigma, \sigma_0 \big)$, where $\textswab{d}$ is the actual rank distance measure e.g. Kendall's $\tau$. As such, we require that 
 
 \begin{align*}
     \big|  \textswab{d}(\sigma_0\sigma_a, \sigma_0) - \textswab{d}(\sigma_0\sigma_b, \sigma_0) \big|
     &\leq \frac{\omega_{\calG}^{\sigma_0}(\omega_{\calG}^{\sigma_0} + 1)}{2}
 \end{align*}

for any pair of permutations $(\sigma_a, \sigma_b) \in N_\calG$. 
 
 For any group $G_i \in \calG$, let $W_i \subseteq n$ represent the smallest contiguous subsequence of indices in $\sigma_0$ that contains all of $G_i$. 

For instance, if $\sigma_0 = [2,4,6,8,1,3,5,7]$ and $G_i = \{2,6,8\}$, then $W_i = \{2,4,6,8\}$. Then the group width width is $\omega_i = |W_i| - 1 = 3$. Now consider two permutations neighboring w.r.t. $G_i$, $\sigma_a \approx_{G_i} \sigma_b$, so only the elements of $G_i$ are shuffled between them. We want to bound 
\begin{align*}
    \big|  \textswab{d}(\sigma_0\sigma_a, \sigma_0) - \textswab{d}(\sigma_0\sigma_b, \sigma_0) \big| 
\end{align*}
For this, we use a pair of triangle inequalities: 
\begin{align*}
    \textswab{d}(\sigma_0\sigma_a, \sigma_0\sigma_b) 
    &\geq \textswab{d}(\sigma_0\sigma_a, \sigma_0) - \textswab{d}(\sigma_0\sigma_b, \sigma_0) 
    \quad \& \quad 
    \textswab{d}(\sigma_0\sigma_a, \sigma_0\sigma_b) 
    &\geq \textswab{d}(\sigma_0\sigma_b, \sigma_0) - \textswab{d}(\sigma_0\sigma_a, \sigma_0)
\end{align*}
so, 
\begin{align*}
     \big|  \textswab{d}(\sigma_0\sigma_a, \sigma_0) - \textswab{d}(\sigma_0\sigma_b, \sigma_0) \big| &\leq \textswab{d}(\sigma_0\sigma_a, \sigma_0\sigma_b)
\end{align*}

% We make use of the fact that by only permuting the contents of $W_i$ in $\sigma_0$, we can construct some $\sigma_0'$ such that $\textswab{d}(\sigma , \sigma_0) = \textswab{d}(\sigma' , \sigma_0')$. By the triangle inequality, we have
% \begin{align*}
% \textswab{d}(\sigma' , \sigma_0) &\leq \textswab{d}(\sigma' , \sigma_0') + \textswab{d}(\sigma_0 , \sigma_0') \\
% &= \textswab{d}(\sigma , \sigma_0) + \textswab{d}(\sigma_0 , \sigma_0') \\
% \textswab{d}(\sigma' , \sigma_0) - \textswab{d}(\sigma , \sigma_0) 
% &\leq \textswab{d}(\sigma_0 , \sigma_0')
% \end{align*}
% Thus,
% \begin{align*}
%     |\textswab{d}(\sigma , \sigma_0) - \textswab{d}(\sigma' , \sigma_0) |
%     &\leq |\textswab{d}(\sigma_0 , \sigma_0')|
% \end{align*}
Since $\sigma_0\sigma_a$ and $\sigma_0\sigma_b$ only differ in the contiguous subset $W_i$, the largest number of discordant pairs between them is given by the maximum Kendall's $\tau$ distance between two permutations of size $\omega_i + 1$:  
\begin{align*}
    |\textswab{d}(\sigma_0\sigma_a , \sigma_0\sigma_b)|
    &\leq \frac{\omega_i(\omega_i + 1)}{2}
\end{align*}
Since $\omega_{\calG}^{\sigma_0} \geq \omega_i$ for all $G_i \in \calG$, we have that 
\begin{align*}
    \Delta(\sigma_0 : \textswab{d}, \calG) \leq 
    \frac{\omega_{\calG}^{\sigma_0}(\omega_{\calG}^{\sigma_0} + 1)}{2}
\end{align*}
\end{proof}

\subsection{Hardness of Computing The Optimum Reference Permutation}\label{app:NP}
\begin{thm} The problem of finding the optimum reference permutation, i.e., $\sigma_0=\arg\min_{\sigma\in \mathrm{S}_n}\omega_{\calG}^{\sigma}$ is NP-hard. \label{thm:NP} \end{thm}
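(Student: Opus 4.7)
The plan is to reduce the classical NP-hard \emph{graph bandwidth minimization} problem to our problem. Recall: given a graph $H=(V,E)$ with $|V|=n$, bandwidth asks for a bijection $f:V\to[n]$ minimizing $\max_{(u,v)\in E}|f(u)-f(v)|$. This is NP-hard (Papadimitriou, 1976), and remains NP-hard even when restricted to trees of maximum degree 3 (Garey–Graham–Johnson–Knuth). I will show that any instance of bandwidth can be encoded as an instance of finding $\sigma_0=\arg\min_{\sigma\in\mathrm{S}_n}\omega_{\calG}^{\sigma}$ in polynomial time, so a polynomial algorithm for the latter would give one for the former.

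Given a bandwidth instance $H=(V,E)$ with $V=[n]$, I construct a group assignment $\calG$ by creating, for every edge $(u,v)\in E$, a two-element group $G_{uv}=\{u,v\}$, and setting $\calG=\{G_{uv}:(u,v)\in E\}$. For any $\sigma\in\mathrm{S}_n$ the width of a two-element group is simply the positional gap of its members in $\sigma$, so
\begin{equation*}
\omega_{G_{uv}}^{\sigma}=\bigl|\sigma^{-1}(u)-\sigma^{-1}(v)\bigr|,
\qquad
\omega_{\calG}^{\sigma}=\max_{(u,v)\in E}\bigl|\sigma^{-1}(u)-\sigma^{-1}(v)\bigr|.
\end{equation*}
Setting $f=\sigma^{-1}$, which is itself a bijection $[n]\to[n]$, the right-hand side is exactly the bandwidth of $H$ under labeling $f$. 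Therefore $\min_{\sigma\in\mathrm{S}_n}\omega_{\calG}^{\sigma}$ equals the bandwidth of $H$, and any $\sigma_0$ achieving this minimum yields (via inversion) an optimum bandwidth labeling. The reduction is clearly polynomial (we produce $|E|\le\binom{n}{2}$ groups of size $2$), completing the NP-hardness argument.

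The main conceptual hurdle is being careful about the role of $\sigma$ versus $\sigma^{-1}$: in our notation $\sigma(k)$ returns the identifier placed at position $k$, so positions of identifiers are read off from $\sigma^{-1}$. Once one identifies $\sigma^{-1}$ with the bandwidth labeling $f$, the reduction is almost tautological because the width of a size-$2$ group is precisely an edge stretch. A secondary sanity check is to verify that no hidden structure of our problem makes it easier than bandwidth — e.g. our groups can be arbitrary subsets of $[n]$, and the constructed instance uses only pair-groups, so our problem is at least as general as bandwidth. Finally, since we can even restrict to the bounded-degree-tree case of bandwidth, the hardness is robust: it is not an artifact of dense or pathological group assignments.
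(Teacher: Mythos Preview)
Your proof is correct and takes a different---and considerably cleaner---route than the paper. The paper establishes NP-hardness by passing through an ad hoc job-shop scheduling problem with sequence-dependent processing times, which it in turn argues is NP-hard via a reduction from TSP; only then does it reduce that scheduling problem to the optimum reference permutation problem. Your reduction from graph bandwidth is essentially a one-line observation: a two-element group's width is precisely an edge stretch, so minimizing $\omega_{\calG}^{\sigma}$ over edge-groups is exactly bandwidth minimization under the labeling $f=\sigma^{-1}$. This is both more elementary (bandwidth is a textbook NP-hard layout problem) and more transparent. One small bookkeeping point: the paper's formal problem statement fixes $\calG$ to contain exactly $n$ subsets of $[n]$, whereas your construction produces $|E|$ groups. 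Since you already invoke the NP-hardness of bandwidth on trees (where $|E|=n-1$), padding with a single width-zero singleton group makes the instance conform exactly, so this causes no trouble.
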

\begin{proof} We start with the formal
representation of the problem as follows.

\textit{Optimum Reference Permutation Problem.} Given n subsets $\calG=\{G_i\in 2^{[n]}, i \in [n]\}$,  find the permutation $\sigma_0=\arg\min_{\sigma\in \mathrm{S}_n}\omega_{\calG}^{\sigma}$.  

Now, consider the following job-shop scheduling problem.

\textit{Job Shop Scheduling.} There is one job $J$ with $n$ operations $o_i, i \in [n]$ and $n$ machines such that $o_i$ needs to run on machine $M_i$.  Additionally, each machine has a sequence dependent processing time $p_i$. Let $S$ be the sequence till 
There are $n$ subsets $S_i \subseteq [n]$, each corresponding to a set of operations that need to occur in contiguous machines, else the processing times incur penalty as follows. Let $p_i$ denote the processing time for the machine running the $i$-th operation scheduled. Let $\mathbb{S}_i$ be the prefix sequence  with $i$ schedulings. For instance, if the final scheduling is $ 1\:3\:4\:5\:9\:8\:10\:6\:7\:2$ then $\mathbb{S}_4=1345$. Additionally, let $P^j_{\mathbb{S}_i}$ be the shortest subsequence such of $\mathbb{S}_i$ such that it contains all the elements in $S_j \cap \{\mathbb{S}_{i}\}$. For example for $S_1=\{3,5,7\}$, $P^1_{\mathbb{S}_{4}}=345$. 
\begin{gather}p_i=\max_{i\in [n]}(|P^j_{\mathbb{S}_i}|-|S_j \cap \{\mathbb{S}_i\}|)\end{gather}
 The objective is to find a scheduling for $J$ such that it minimizes the makespan, i.e., the completion time of the job. Note that $p_n=\max_i{p_i}$, hence the problem reduces to minimizing $p_n$.

\begin{lemma}The aforementioned  job shop scheduling problem with sequence-dependent processing time is NP-hard.\end{lemma}
\begin{proof} Consider the following instantiation of the sequence-dependent job shop scheduling problem where the processing time is given by $p_i$=$p_{i-1}+w_{kl}, p_1=0$ where $\mathbb{S}_i[i-1]=k$, $\mathbb{S}_i[i]=l$ and $w_{ij}, j\in S_i$  represents some associated weight.    This problem is equivalent to the travelling salesman problem (TSP) \cite{TSP} and is therefore, NP-hard. Thus, our aforementioned job shop scheduling problem is also clearly NP-hard. \end{proof}

\textit{Reduction:}
Let the $n$ subsets $S_i$ correspond to the groups in $\calG$. Clearly, minimizing $\omega^{\sigma}_{\calG}$ minimizes $p_n$. Hence, the optimal reference permutation gives the solution to the scheduling problem as well. 

 \end{proof}

 \begin{figure}[h]
\begin{subfigure}[b]{\columnwidth}\centering
    \includegraphics[width=0.7\linewidth]{./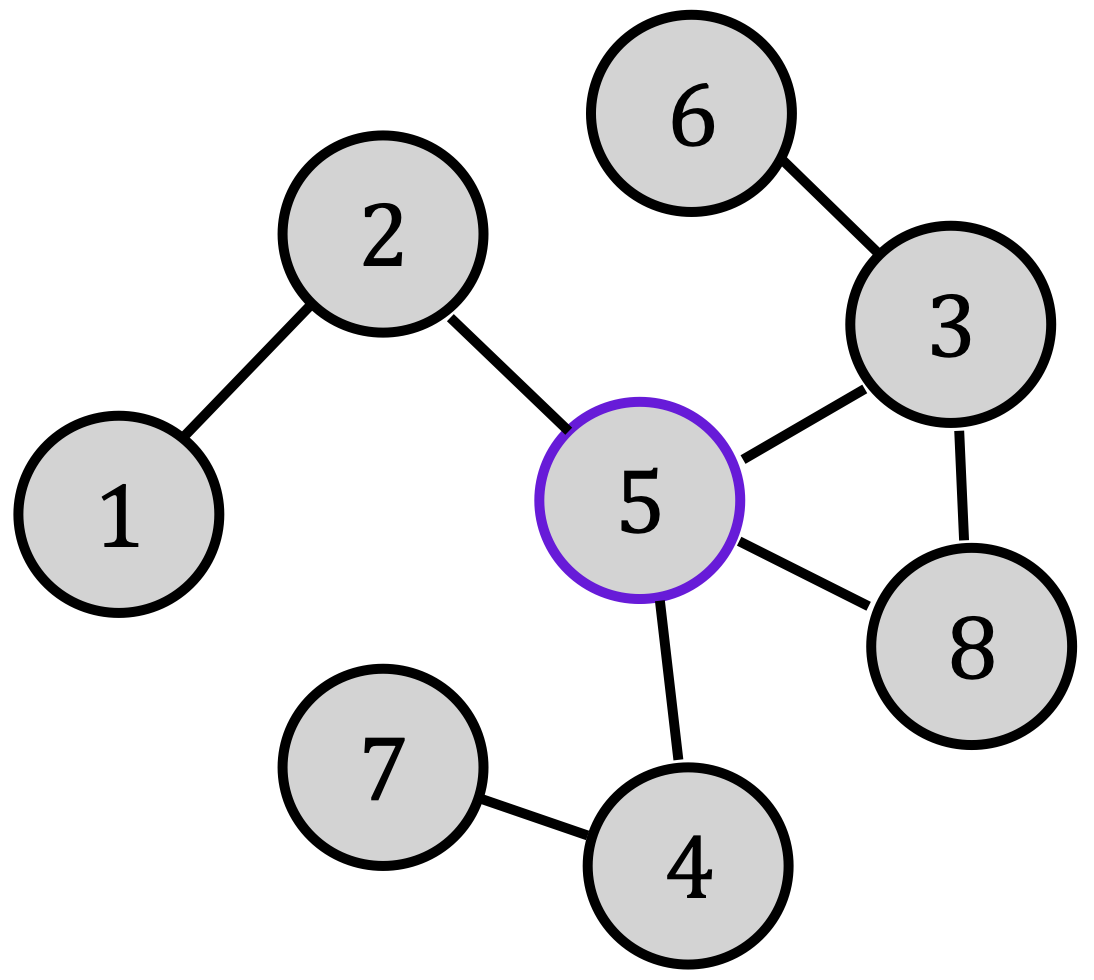}
        \caption{Group graph}
        \label{fig:BFS:graph}
    \end{subfigure}\\
    \begin{subfigure}[b]{\columnwidth}\centering
    \includegraphics[width=0.5\columnwidth]{./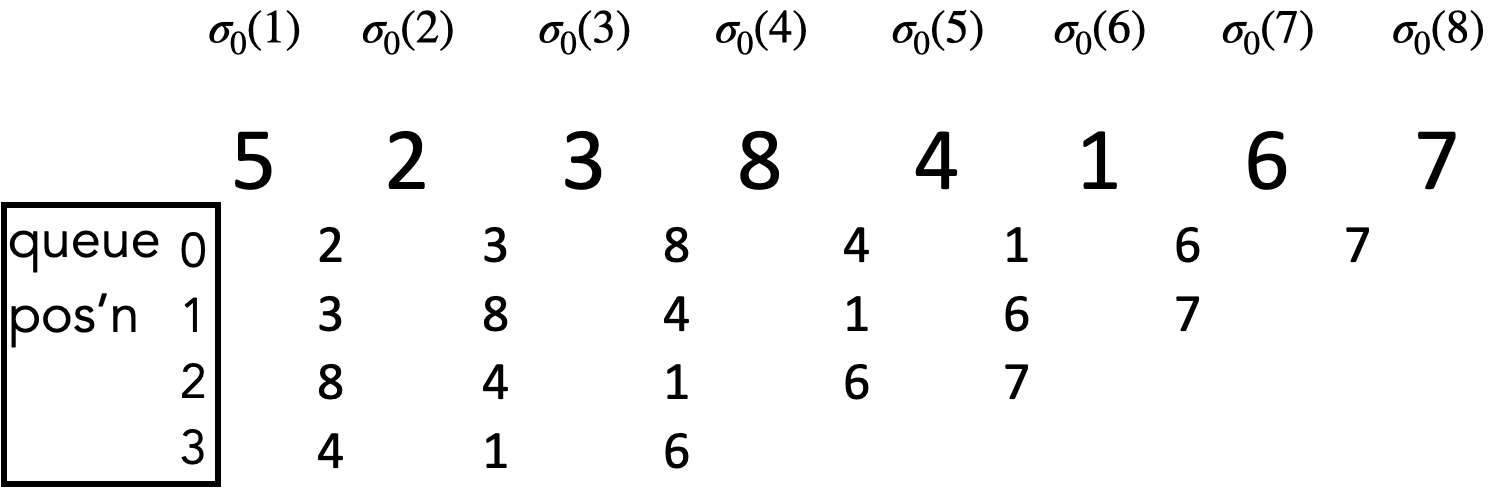}
        \caption{BFS reference permutation $\sigma_0$}
        \label{fig:BFS:order}
    \end{subfigure}
    \caption{Illustration of Alg. 1}
    \label{fig:alg:example}
\end{figure}

 \subsection{Illustration of Alg. 1}\label{app:alg:illustration}
 We now provide a small-scale step-by-step example of how Alg. 1 operates. 
 
 Fig. \ref{fig:BFS:graph} is an example of a grouping $\calG$ on a dataset of $n = 8$ elements. The group of $\DO_i$ includes $i$ and its neighbors. For instance, $G_8 = \{8,3,5\}$. To build a reference permutation, Alg. 1 starts at the index with the largest group, $i = 5$ (highlighted in purple), with $G_5 = \{5,2,3,8,4\}$. As shown in Figure \ref{fig:BFS:order}, the $\sigma_0$ is then constructed by following a BFS traversal from $i=5$. Each $j \in G_5$ is visited, queuing up the neighbors of each $j \in G_5$ that haven't been visited along the way, and so on. The algorithm completes after the entire graph has been visited. 
 
 The goal is to produce a reference permutation in which the width of each group in the reference permutation $\omega_i$ is small. In this case, the width of the largest group $G_5$ is as small as it can be $\omega_5 =5-1 = 4$. However, the width of $G_4 = \{4,5,7\}$ is the maximum possible since $\sigma^{-1}(5) = 1$ and $\sigma^{-1}(7) = 8$, so $\omega_4 = 7$. This is difficult to avoid when the maximum group size is large as compared to the full dataset size $n$. Realistically, we expect $n$ to be significantly larger, leading to relatively smaller groups. 
 
 With the reference permutation in place, we compute the sensitivity: 
 \begin{align*}
     \Delta(\sigma_0 : \textswab{d}, \calG)
     &= \frac{\omega_4 (\omega_4 + 1)}{2} \\
     &= 28
 \end{align*}
 Which lets us set $\theta = \frac{\alpha}{28}$ for any given $\alpha$ privacy value. To reiterate, lower $\theta$ results in more randomness in the mechanism. 
 
 We then sample the permutation $\hat{\sigma} = \mathbb{P}_{\theta, \textswab{d}}(\sigma_0)$. Suppose 
 \begin{align*}
     \hat{\sigma}
     &= [3 \ 2\ 5\ 4\ 8\ 1\ 7\ 6]
 \end{align*}
 Then, the released $\bz$ is given as 
  \begin{align*}
     \bz = \sigma^* &= \sigma^{-1} \hat{\sigma} (\by)\\
     &= [y_1 \ y_2\ y_5\ y_8\ y_3\ y_7\ y_6\ y_4]
 \end{align*}
 One can think of the above operation as follows. What was 5 in the reference permutation ($\sigma_0(1) = 5$) is 3 in the sampled permutation $(\hat{\sigma}(1) = 3)$. So, index 5 corresponding to $\DO_5$ now holds $\DO_3$'s noisy data $y_3$. As such, we shuffle mostly between members of the same group, and minimally between groups. 

\newcommand{\calO}{\mathcal{O}}

 The runtime of this mechanism is dominated by the Repeated Insertion Model sampler \cite{RIM}, which takes $\calO(n^2)$ time. It is very possible that there are more efficient samplers available, but RIM is a standard and simple to implement for this first proposed mechanism. Additionally, the majority of this is spent computing sampling parameters which can be stored in advanced with $\calO(n^2)$ memory. Furthermore, sampling from a Mallows model with some reference permutation $\sigma_0$ is equivalent to sampling from a Mallows model with the identity permutation and applying it to $\sigma_0$. As such, permutations may be sampled in advanced, and the runtime is dominated by computation of $\sigma_0$ which takes $\calO(|V| + |E|)$ time (the number of veritces and edges in the graph). 
 
 \subsection{Proof of Thm. \ref{thm:privacy}}\label{app:thm:privacy}
 \textbf{Theorem \ref{thm:privacy}}
 \emph{Alg. 1 is $(\alpha,\calG)$-\name~private. }
 \begin{proof} The proof follows from Prop. \ref{prop:1}. Having computed the sensitivity of the reference permutation $\sigma_0$, $\Delta$, and set $\theta = \alpha / \Delta$, we are guaranteed by Property \ref{prop:1} that shuffling according to the permutation $\hat{\sigma}$ guarantees $(\alpha, \calG)$-\name privacy. 
 
%  In the algorithm, we permute by $\sigma_0^{-1} \hat{\sigma}$. Since this is equivalent to first permuting by $\hat{\sigma}$ and then permuting by $\sigma_0^{-1}$, this too guarantees $(\alpha, \calG)$-\name privacy by the immunity to post-processing property (Thm. \ref{theorem:post}).
 \end{proof}.

\subsection{Proof of Thm. \ref{thm:generalized:privacy}} \label{app:thm:generalized}

\textbf{Theorem \ref{thm:generalized:privacy}} \emph{
Alg. 1 satisfies $(\alpha',\calG')$-\name privacy for any group assignment $\calG'$ where  $ \alpha'=\alpha\frac{\Delta(\sigma_0 : \textswab{d}, \calG')}{\Delta(\sigma_0 : \textswab{d}, \calG)}$ 
}
\begin{proof}
Recall from Property \ref{prop:1} that we satisfy $(\alpha, \calG)$ \name-privacy by setting $\theta = \alpha / \Delta(\sigma_0:\textswab{d}, \calG)$. Given alternative grouping $\calG'$ with sensitivity $\Delta(\sigma_0:\textswab{d}, \calG')$, this same mechanism provides 
\begin{align*}
    \alpha' 
    &= \frac{\theta}{\Delta(\sigma_0:\textswab{d}, \calG')} \\
    &= \frac{\alpha / \Delta(\sigma_0:\textswab{d}, \calG)}{\Delta(\sigma_0:\textswab{d}, \calG')} \\
    &= \alpha \frac{\Delta(\sigma_0:\textswab{d}, \calG')}{\Delta(\sigma_0:\textswab{d}, \calG)}
\end{align*}
\end{proof}

\subsection{Formal Utility Analysis of Alg. 1}\label{app:utility:formal}
\begin{thm}For a given set $S \subset [n]$ and Hamming distance metric,  $\textswab{d}_H(\cdot)$,   Alg. 1 is $(\eta,\delta)$-preserving for $\delta=\frac{1}{\psi(\theta, \textswab{d}_H)}\sum_{h=2k+1}^{n} (e^{-\theta\cdot h} \cdot c_h)$ where \scalebox{0.9}{$k=\lceil(1-\eta)\cdot |S|\rceil$} and $c_h$ is the number of permutations with hamming distance $h$ from the reference permutation that do not preserve \scalebox{0.9}{$\eta\%$} of $S$ and is given by
\begin{gather*}c_h=\sum_{j=k+1}^{\max(l_s,\lfloor h/2\rfloor)}\binom{l_s}{j}\cdot \binom{n-l_s}{j}\cdot \Bigg[\sum_{i=0}^{\min(l_s-j,h-2j)}\binom{l_s-j}{i}\\\cdot \binom{i+j}{j}\cdot f(i,j)\cdot\binom{n-l_s-j}{h-2j-i} \cdot f(h-2j-i,j)!\Bigg]\end{gather*}\begin{gather*}
f(i,0)=!i, f(0,q)=q!\\
f(i,j)=\sum_{q=0}^{\min(i,j)}\Bigg[\binom{i}{q}\cdot\binom{j}{j-q}\cdot j! \cdot f(i-q,q)\Bigg]\\l_s=|S|, k=(1-\eta)\cdot l_s, !n=\lfloor \frac{n!}{e}+\frac{1}{2}\rfloor\end{gather*}\end{thm}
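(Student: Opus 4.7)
The plan is to compute $\delta$ directly as the total Mallows mass on permutations that violate $(\eta,\cdot)$-preservation for $S$, then stratify that mass combinatorially by Hamming distance and by the number of elements of $S$ that leave $S$.

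First I would reduce to a Mallows model centered at the identity. Step 8 of Alg. 1 outputs $\sigma^* = \sigma_0^{-1}\hat\sigma$, and since Hamming distance is left-invariant under composition with a fixed permutation, $\textswab{d}_H(\sigma^*, \sigma_I) = \textswab{d}_H(\hat\sigma,\sigma_0)$; moreover the normalizer $\psi(\theta,\textswab{d}_H)$ is unchanged because left-multiplication by $\sigma_0^{-1}$ permutes $\mathrm{S}_n$. Hence $\Pr[\sigma^*] = e^{-\theta\, h(\sigma^*)}/\psi(\theta,\textswab{d}_H)$, where $h(\sigma^*)$ is the number of non-fixed points of $\sigma^*$. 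Let $j(\sigma^*) = |\{i \in S : \sigma^*(i)\notin S\}|$; by bijection, the same number of $\bar S$-elements enter $S$, so $h(\sigma^*) \geq 2j(\sigma^*)$. The permutation is bad iff $|S_{\sigma^*}\cap S| = l_s - j < \eta l_s$, i.e. iff $j \geq k+1$ with $k = \lceil(1-\eta)l_s\rceil$. In particular every bad permutation has $h \geq 2k+2$, which makes the $h=2k+1$ term of the stated sum vacuous.

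Grouping bad permutations by $h$ gives $\delta = \sum_{h\geq 2k+1} (e^{-\theta h}/\psi)\, c_h$, where $c_h$ counts bad permutations at Hamming distance $h$. To count $c_h$, I would stratify by $j \in \{k+1,\dots,\min(l_s,\lfloor h/2\rfloor)\}$. Choose the crossing sets $A\subseteq S$ and $B\subseteq\bar S$ with $|A|=|B|=j$ (contributing $\binom{l_s}{j}\binom{n-l_s}{j}$); the $2j$ crossings are forced non-fixed. The remaining $h-2j$ non-fixed points split as $i$ inside $C = S\setminus A$ and $h-2j-i$ inside $D = \bar S\setminus B$, for $i$ ranging over $0,\dots,\min(l_s-j,h-2j)$, contributing the outer binomials $\binom{l_s-j}{i}$ and $\binom{n-l_s-j}{h-2j-i}$. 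What remains is to count, on each side, the valid bijections realizing this non-fixed pattern, and this is what $f(i,j)$ encodes: the number of bijections on $i+j$ ``within-$S$'' slots (the $i$ potentially-fixable stay-positions together with the $j$ cross-positions that receive elements from $\bar S$) such that none of the $i$ potentially-fixable slots is actually fixed. The base cases are transparent ($f(i,0) = !i$ is derangements with no cross-positions; $f(0,q) = q!$ is a free bijection with no potential fixed point), and the recurrence follows by conditioning on how many of the $j$ cross-targets fall on potentially-fixable slots, reducing to $f(i-q,q)$. The factor $\binom{i+j}{j}$ records the relative placement of the $i$ stay-slots and $j$ cross-slots inside $S$. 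Multiplying these factors produces $c_h$, and summing over $h$ yields the stated $\delta$. (The trailing ``$!$'' in $f(h-2j-i,j)!$ in the theorem as written appears to be a typo for $f(h-2j-i,j)$.)

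The main obstacle is justifying the auxiliary count $f(i,j)$ together with its merging factor $\binom{i+j}{j}$: specifically, showing that this combination correctly enumerates restricted bijections in which every potentially-fixable slot is actually displaced, without implicit double counting across the implicit choices of the cross-target set $B' \subseteq S$ (the $j$ slots in $S$ filled from $\bar S$) and its intersection with $A$. Once the recurrence for $f$ is certified by conditioning on how cross-targets interact with stay-slots, the remainder of the proof is routine product counting together with the Mallows weighting established in the first step.
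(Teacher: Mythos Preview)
Your approach is essentially the same as the paper's: reduce to the Mallows weight on permutations, stratify bad permutations by Hamming distance $h$ and then by the number $j$ of $S$-elements that leave $S$, choose the four relevant subsets (the paper calls them $C,A,B,D$ rather than your $A,B,C,D$, but with the same cardinalities $j,j,i,h-2j-i$), and count restricted arrangements via the auxiliary function $f$ with the same recursive justification (conditioning on how many ``free'' objects land in positions originally held by ``forbidden'' ones).

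One point worth flagging: the factor $\binom{i+j}{j}$ that you spend effort trying to justify is \emph{absent} from the paper's own derivation of $c_h$; the proof's displayed formula has only $\binom{l_s}{j}\binom{n-l_s}{j}\binom{l_s-j}{i}f(i,j)\binom{n-l_s-j}{h-2j-i}f(h-2j-i,j)$, matching your decomposition without the extra binomial. So the obstacle you identify is really an artifact of a discrepancy between the theorem statement and the paper's proof, much like the trailing ``!'' you already flagged and the lower summation index (the paper's proof starts at $h=2k+2$, consistent with your observation that $h=2k+1$ is vacuous). Your instinct that this factor is hard to justify cleanly is therefore well placed.
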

\begin{proof} Let $l_s=|S|$ denote the size of the set $S$ and $k=\lceil (1-\eta)\cdot l_S\rceil$ denote the maximum number of correct values that can be missing from $S$. Now, for a given permutation $\sigma \in \mathrm{S}_n$, let $h$ denote its Hamming distance from the reference permutation $\sigma_0$, i.e, $h=\textswab{d}_H(\sigma,\sigma_0)$. This means that $\sigma$ and $\sigma_0$ differ in $h$ indices.  Now, $h$ can be analysed in the  the following two cases, 
\par
\textbf{Case I. $h\leq 2k+1$}

For $(1-\eta)$ fraction of indices to be removed from $S$, we need at least $k+1$ indices from $S$ to be replaced by $k+1$ values from outside $S$. This is clearly not possible for $h\leq 2k+1$. Hence, here $c_h=0$. 
\par
\textbf{Case II. $h > 2k$}

For the following analysis we consider we treat the permutations as strings (multi-digit numbers are treated as a single string character). Now,   
 Let $\mathbb{S}_{\sigma_0}$ denote the non-contiguous substring of $\sigma_0$ such that it consists of all the  elements of $S$, i.e., \begin{gather}|\mathbb{S}|=l_S\\
 \forall i \in [l_S], \mathbb{S}_{\sigma_0}[i] \in S \end{gather}  Let $\mathbb{S}_{\sigma}$ denote the substring corresponding to the positions occupied by $\mathbb{S}_{\sigma_0}$ in $\sigma$. Formally,
 \begin{gather}|\mathbb{S}_{\sigma}|=l_S\\
 \forall i \in [l_S], \mathbb{S}_{\sigma_0}[i] = \sigma(\sigma_0^{-1}(\mathbb{S}_{\sigma_0}[i])) \end{gather}  For example, for $\sigma_0=(1\:2\:3\:5\:4\:7\:8\:10\:9\:6), \sigma=(1\:3\:2\:7\:8\:5\:4\:6\:10\:9)$ and $S=\{2,4,5,8\}$, we have $\mathbb{S}_{\sigma_0}=2548$ and $S_{\sigma}=3784$ where $h=\textswab{d}_H(\sigma,\sigma_0)=9$.  Let $\{\mathbb{S}_{\sigma}\}$ denote the set of the elements of string $\mathbb{S}_{\sigma}$.
 Let $A$ be the set of characters in  $\mathbb{S}_{\sigma}$ such that they do not belong to $S$, i.e, $A=\{\mathbb{S}_{\sigma}[i]|\mathbb{S}_{\sigma}[i] \not \in S, i \in [l_S]\}$. Let $B$ be the set of characters in $\mathbb{S}_{\sigma}$ that belong to $S$ but differ from $\mathbb{S}_{\sigma_0}$ in position, i.e., $B=\{\mathbb{S}_{\sigma}[i]|\mathbb{S}_{\sigma}[i] \in S, \mathbb{S}_{\sigma}[i]\neq \mathbb{S}_{\sigma_0}[i], i \in [l_S]\}$. Additionally, let $C=S-\{\mathbb{S}_{\sigma}\}$. For instance, in the above example, $A=\{3,7\}, B=\{4,8\}, C=\{2,5\}$. Now consider  an initial arrangement of $p+m$ distinct objects that are subdivided into two types -- $p$ objects of Type A and m objects of Type B. Let $f(p,m)$ denote the number of permutations of these $p+m$ objects such that the $m$ Type B objects can occupy any position but no object of Type A can occupy its original position. For example, for $f(p,0)$ this becomes the number of derangements \cite{derangement} denoted as $!p=\lfloor \frac{p!}{e} +\frac{1}{2} \rfloor$. Therefore, $f(|B|,|A|)$ denotes the number of permutations of $\mathbb{S}_\sigma$ such that $\textswab{d}_H(\mathbb{S}_{\sigma_0},\mathbb{S}_{\sigma})=|A|+|B|$. This is because if elements of $B$ are allowed to occupy their original position then this will reduce the Hamming distance.  
 \par Now, let $\bar{\mathbb{S}}_{\sigma}$ ($\bar{\mathbb{S}}_{\sigma_0}$) denote the substring left out after extracting from $\mathbb{S}_{\sigma}$ ($\mathbb{S}_{\sigma_0}$) from $\sigma$ ($\sigma_0$). For example, $\bar{\mathbb{S}}_{\sigma}=1256 10 9$ and $\bar{\mathbb{S}}_{\sigma_0}=13710 9 6$ in the above example. Let $D$ be the set of elements outside of $S$  and $A$ that occupy different positions in $\bar{\mathbb{S}}_\sigma$ and $\bar{\mathbb{S}}_{\sigma_0}$ (thereby contributing to the hamming distance), i.e., $D=\{\bar{\mathbb{S}}_{\sigma_0[i]}|\bar{\mathbb{S}}_{\sigma_0[i]} \not \in S, \bar{\mathbb{S}}_{\sigma_0[i]} \neq \bar{\mathbb{S}}_{\sigma[i]}, i \in [n-l_S]\}$. For instance, in the above example $D=\{9,6,10\}$. Hence, $h=\textswab{d}_{H}(\sigma,\sigma_0)=|A|+|B|+|C|+|D|$ and clearly $f(|D|,|C|)$ represents the number of permutations of $\bar{\mathbb{S}}_{\sigma}$ such that $\textswab{d}_H(\bar{\mathbb{S}}_{\sigma},\bar{\mathbb{S}}_{\sigma_0})=|C|+|D|$. Finally, we have 
\begin{gather*}c_h=\sum_{j=k+1}^{\max(l_s,\lfloor h/2\rfloor)}\underbrace{\binom{l_s}{j}}_{\text{\# ways of selecting set $C$}}\cdot \underbrace{\binom{n-l_s}{j}}_{\text{\# ways of selecting set $A$}}\cdot \Bigg[\\\sum_{i=0}^{\min(l_s-j,h-2j)}\underbrace{\binom{l_s-j}{i}}_{\text{\# ways of selecting set $B$}}\cdot f(i,j)\\\cdot\underbrace{\binom{n-l_s-j}{h-2j-i}}_{\text{\# ways of selecting set $D$}} \cdot f(h-2j-i,j)\Bigg]\end{gather*}
Now, for $f(i,j)$ let $E$ be the set of original positions of Type A that are occupied by Type B objects in the resulting permutation. Additionally, let $F$ be the set of the original positions of Type B objects that are still occupied by some Type B object. Clearly, Type B objects can occupy these $|E
|+|F|=m$ in any way they like. However, the type A objects can only result in $f(p-q,q)$ permutations. Therefore, $f(p,m)$ is given by the following recursive function \begin{gather*}
f(p,0)=!p\\
f(0,m)=m!\\
f(p,m)=\sum_{q=0}^{\min{p,m}}\Bigg(\underbrace{\binom{p}{q}}_{\text{\# ways of selecting set $E$}}\cdot\underbrace{\binom{m}{m-q}}_{\text{\# ways of selecting set $F$}}\\\cdot m! \cdot f(p-q,q)\Bigg)\end{gather*}

Thus, the total probability of failure is given by 
\begin{gather}\delta=\frac{1}{\psi(\theta, \textswab{d}_H)}\sum_{h=2k+2}^{n} (e^{-\theta\cdot h} \cdot c_h)\end{gather}
\end{proof}

\newpage
\subsection{Additional Experimental Details}\label{app:extraresults}
\subsubsection{Evaluation of $(\eta,\delta)$-preservation}\label{app:numerical}

\begin{figure*}[ht]
    \begin{subfigure}[b]{0.33\linewidth}\centering
    \includegraphics[width=\linewidth]{./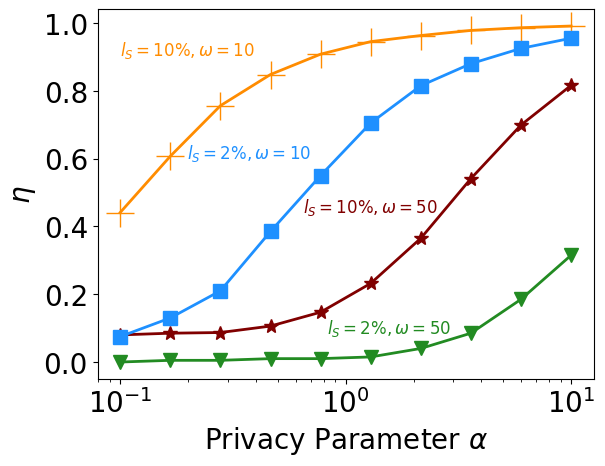}
        \caption{Variation with $\alpha$}
        \label{fig:eta:alpha}
    \end{subfigure}
    \begin{subfigure}[b]{0.33\linewidth}\centering
    \includegraphics[width=\linewidth]{./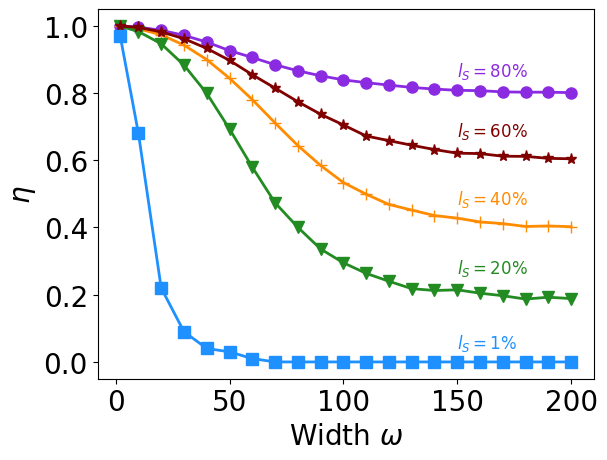}
        \caption{Variation with $\omega$; $\alpha = 3$}
        \label{fig:eta:width}
    \end{subfigure}
    \begin{subfigure}[b]{0.33\linewidth}\centering
    \includegraphics[width=\linewidth]{./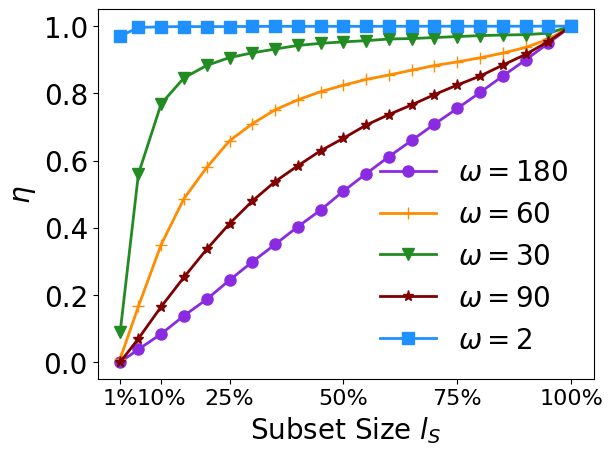}
        \caption{Variation with $l_S$; $\alpha = 3$}
        \label{fig:eta:subset}
    \end{subfigure}
        \caption{$(\eta,\delta)$-Preservation Analysis}
        \label{fig: eta delta preservation}
\end{figure*}

In this section, we evaluate the characteristics of the  $(\eta,\delta)$-preservation for Kendall's $\tau$ distance $\textswab{d}_\tau(\cdot, \cdot)$.

Each sweep of Fig. \ref{fig: eta delta preservation} fixes $\delta = 0.01$, and observes $\eta$. We consider a dataset of size $n = 10K$ and a subset $S$ of size $l_S$ corresponding to the indices in the middle of the reference permutation $\sigma_0$ (the actual value of the reference permutation is not significant for measuring preservation). For the rest of the discussion, we denote the width of a permutation by $\omega$ for notational brevity. For each value of the independent axis, we generate $50$ trials of the permutation $\sigma$ from a Mallows model with the appropriate $\theta$ (given the $\omega$ and $\alpha$ parameters). We then report the largest $\eta$ (fraction of subset preserved) that at least 99\% of trials satisfy. 

In Fig. \ref{fig:eta:alpha}, we see that preservation is highest for higher $\alpha$ and increases gradually with declining width $\omega$ and increasing subset size $l_s$. 

Fig. \ref{fig:eta:width} demonstrates that preservation declines with increasing width. $\Delta$ increases quadratically with width $\omega$ for $\textswab{d}_\tau$, resulting in declining $\theta$ and increasing randomness. We also see that larger subset sizes result in a more gradual decline in $\eta$. This is due to the fact that the worst-case preservation (uniform random shuffling) is better for larger subsets. i.e. we cannot do worse than $80\%$ preservation for a subset that is $80\%$ of indices. 

Finally, Fig. \ref{fig:eta:subset} demonstrates how preservation grows rapidly with increasing subset size. For large widths, we are nearly uniformly randomly permuting, so preservation will equal the size of the subset relative to the dataset size. For smaller widths, we see that preservation offers diminishing returns as we grow subset size past some critical $l_s$. For $\omega = 30$, we see that subset sizes much larger than a quarter of the dataset gain little in preservation. 

\subsubsection{Adult Dataset}
\label{app:adult experiments}
\begin{figure*}[ht]
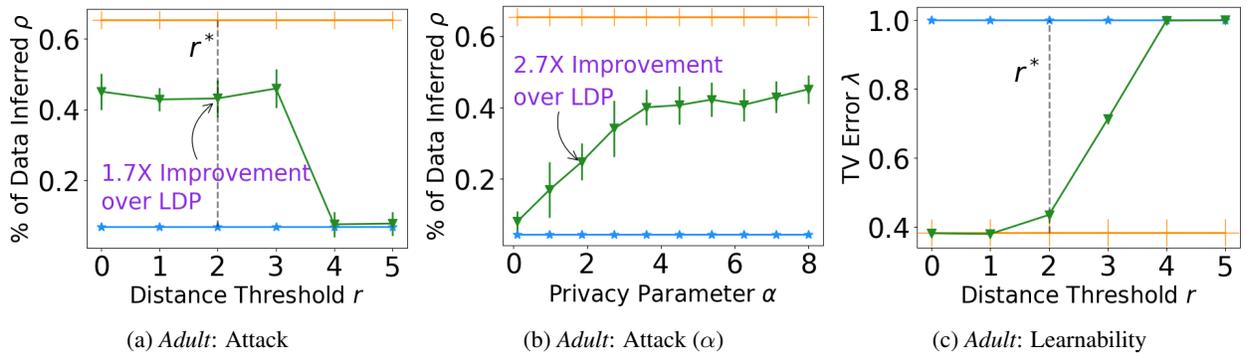

    \begin{subfigure}[b]{0.33\linewidth}\centering
        \includegraphics[width=\linewidth]{./figures/Adult_attack_new.png}
        %\vspace{-0.15cm}
        \caption{\textit{Adult}: Attack }%($r$)}
        \label{fig:Adult:attack}
    \end{subfigure}
    \begin{subfigure}[b]{0.33\linewidth}\centering
    \includegraphics[width=\linewidth]{./figures/Adult_alpha.png}
        %   \vspace{-0.15cm} 
        \caption{\textit{Adult}: Attack ($\alpha$)}
        \label{fig:Adult:alpha}
    \end{subfigure}
    \begin{subfigure}[b]{0.33\linewidth}\centering
   \includegraphics[width=\linewidth]{./figures/Adult_utility_1.png}
        %\vspace{-0.15cm}
        \caption{\textit{Adult}: Learnability}
        \label{fig:Adult:utility}
    \end{subfigure}
        \caption{Adult dataset experiments}
        \label{fig: adult experiments}
\end{figure*}

\subsection{Additional Related Work}\label{app:related}
In this section, we discuss the relevant existing work. \par  The anonymization of noisy responses  to improve differential privacy was first proposed by Bittau et al. \cite{Bittau2017} who proposed a principled system architecture for shuffling. This model was formally studied later in \cite{shuffling1, shuffle2}. Erlingsson et al. \cite{shuffling1} showed that for arbitrary $\epsilon$-\ldp randomizers, random shuffling results in privacy amplification. Cheu et al. \cite{shuffle2} formally defined the shuffle \DP model
and analyzed the privacy guarantees of the binary randomized response in this model.
The shuffle \DP model differs from our approach in  two ways. First, it focuses completely on the \DP guarantee. %\ldp is characterized by a privacy parameter (see Section \ref{sec:background}) $\epsilon$, lower the value of $\epsilon$ stronger is the guarantee achieved. 
The privacy amplification is manifested in the from of a lower $\epsilon$ (roughly a factor of $\sqrt{n}$) when viewed in an alternative \DP model known as the central \DP model. \cite{shuffling1,shuffle2,blanket,feldman2020hiding,Bittau2017,Balcer2020SeparatingL}.  
However, our result caters to local inferential privacy. Second, the shuffle model involves an uniform random shuffling of the entire dataset. In contrast, our approach the granularity at which the data is shuffled is tunable which delineates a threshold for the learnability of the data. 
\par A steady line of work has sudied the inferential privacy setting \cite{semantics, Kifer,  IP, Dalenius:1977, dwork2010on, sok}. Kifer et al. \cite{Kifer} formally studied privacy degradation in the face of data correlations and later proposed a  privacy framework, Pufferfish \cite{Pufferfish, Song,Blowfish}, for analyzing inferential privacy. Subsequently, several other privacy definitions have also been proposed for the inferential privacy setting \cite{DDP,BDP,correlated,correlated2,CWP}. For instance, Gehrke et al.  proposed a zero-knowledge privacy \cite{ZKPrivacy,crowd} which is based on simulation semantics. Bhaskar et al. proposed  noiseless privacy \cite{noiseless, TNP} by restricting the set of prior
distributions that the adversary may have access to.  A recent work by Zhang et al. proposes attribute privacy \cite{AP} which focuses on the sensitive properties of a whole dataset. In another recent work, Ligett et al. study a relaxation of \DP that accounts for mechanisms that leak
some additional, bounded information about the database 
\cite{bounded}. Some early work in local inferential privacy include profile-based privacy \cite{profile} by Gehmke et al. where the problem setting comes with a graph of data generating distributions, whose edges encode sensitive pairs of distributions that should be made indistinguishable. In another work by Kawamoto et al., the authors propose distribution privacy \cite{takao} -- local differential privacy for probability distributions.    The major difference between our work and prior research is that we provide local inferential privacy through a new angle -- data shuffling. 

Finally, older works such as $k$-anonymity \cite{kanon},  $l$-diversity \cite{ldiv}, and Anatomy \cite{anatomy} and other \cite{older1, older2, older3, older4, older5} have studied the privacy risk of non-sensitive auxiliary information, or `quasi identifiers' (QIs). In practice, these works focus on the setting of dataset release, where we focus on dataset collection. As such, QIs can be manipulated and controlled, whereas we place no restriction on the amount or type of auxiliary information accessible to the adversary, nor do we control it. Additionally, our work offers each individual formal inferential guarantees against informed adversaries, whereas those works do not. We emphasize this last point since formalized guarantees are critical for providing meaningful privacy definitions. As established by Kifer and Lin in \emph{An Axiomatic View of Statistical Privacy and Utility} (2012), privacy definitions ought to at least satisfy post-processing and convexity properties which our formal definition does.

%Relations between differential privacy andthat has given rise to a set of privacy definitions \cite{DDP,BDP,correlated, correlated2, CWP} (not an exhaustive list)  including zero-knowledge privacy \cite{ZKPrivacy}, crowd-blending privacy \cite{crowd}, noiseless privacy \cite{noiseless, TNP}, Pufferfish framework \cite{Pufferfish, Blowfish, Song} and attribute privacy \cite{AP}. 

\end{document}